\newcolumntype{C}[1]{>{\centering\arraybackslash}p{#1}}
\theoremstyle{plain}
\newtheorem{theorem}{Theorem}[section]
\newtheorem{proposition}[theorem]{Proposition}
\newtheorem{corollary}[theorem]{Corollary}
\theoremstyle{definition}
\theoremstyle{remark}
\newtheoremstyle{TheoremNum}
{\topsep}{\topsep}              
{\itshape}                      
{}                              
{\bfseries}                     
{.}                             
{ }                             
{\thmname{#1}\thmnote{ \bfseries #3}}
\theoremstyle{TheoremNum}
\newtheorem{numberedprop}{Proposition}
\def\ars{ARS\xspace}
\newcommand{\cH}{\mathcal{H}}
\newcommand{\cM}{\mathcal{M}}
\newcommand{\cN}{\mathcal{N}}
\newcommand{\cX}{\mathcal{X}}
\newcommand{\cY}{\mathcal{Y}}
\newcommand{\mE}{\mathbb{E}}
\newcommand{\mI}{\mathbb{I}}
\newcommand{\mP}{\mathbb{P}}
\newcommand{\mR}{\mathbb{R}}
\newcommand{\mV}{\mathbb{V}}
\newcommand{\1}{\mathds{1}}
\title{Adaptive Randomized Smoothing: Certified Adversarial Robustness for Multi-Step Defences}
\author{%
Saiyue Lyu$^{1*}$,
Shadab Shaikh$^{1*}$, 
Frederick Shpilevskiy$^{1*}$, 
Evan Shelhamer$^{2}$, 
Mathias Lécuyer$^{1}$\\
${^1}$University of British Columbia, ${^2}$Google DeepMind\\
{\tt\small \{saiyuel, shadabs3, fshpil\}@cs.ubc.ca, shelhamer@deepmind.com, mathias.lecuyer@ubc.ca}\\
$^*$equal contribution
}
\begin{document}

\maketitle

\begin{abstract}
We propose Adaptive Randomized Smoothing (ARS) to certify the predictions of our test-time adaptive models against adversarial examples.
ARS extends the analysis of randomized smoothing using $f$-Differential Privacy to certify the adaptive composition of multiple steps.
For the first time, our theory covers the sound adaptive composition of general and high-dimensional functions of noisy inputs.
We instantiate ARS on deep image classification to certify predictions against adversarial examples of bounded $L_{\infty}$ norm.
In the $L_{\infty}$ threat model, ARS enables flexible adaptation through high-dimensional input-dependent masking.
We design adaptivity benchmarks, based on CIFAR-10 and CelebA, and show that ARS improves standard test accuracy by  $1$ to $15\%$ points.
On ImageNet, ARS improves certified test accuracy by up to $1.6\%$ points over standard RS without adaptivity. Our code is available at \url{https://github.com/ubc-systopia/adaptive-randomized-smoothing}.
\end{abstract}

\section{Introduction}
\label{sec:intro}

Despite impressive accuracy, deep learning models still show a worrying susceptibility to adversarial attacks. 
Such attacks have been shown for a large number of tasks and models \citep{costa2023deep,chakraborty2018adversarial}, including areas where security and safety are critical such as fraud detection \citep{pumsirirat2018credit} or autonomous driving \citep{cao2021invisible}.

Several rigorous defences have been proposed to certify robustness.
Randomized Smoothing (RS) \citep{lecuyer2019certified,cohen2019certified} does so by averaging predictions over noisy versions of the input at test time, and as such can scale to large deep learning models.
However, RS has its limitations: it is inflexible and either degrades accuracy or only certifies against small attacks.

To address these shortcomings and improve robustness, there has been a recent push to develop defences that adapt to inputs at test time \citet{croce2022evaluating}, including for RS \citep{alfarra2022data,sukenik2021intriguing,hong2022unicr}.
Most such adaptive defences are heuristic, unproven, and subject to improved attacks \citep{croce2022evaluating,alfarra2022data,hong2022unicr}, running the risk of reverting to a hopeless cat and mouse game with attackers \citep{athalye2018obfuscated,tramer2020adaptive}, or only provide limited adaptivity  \citep{sukenik2021intriguing,hong2022unicr} and gain (\S\ref{sec:relwork}).

We (re)connect RS to Differential Privacy (DP), after its abandonment for a tighter analysis via hypothesis testing \citep{cohen2019certified}, and introduce {\bf Adaptive Randomized Smoothing (\ars)} to provide test-time adaptivity while preserving rigorous bounds.
Specifically, we analyze RS through the lens of $f$-Differential Privacy ($f$-DP), and use this connection to leverage a key strength of DP: the end-to-end analysis of multi-step adaptive computation using composition results
(\S\ref{sec:theory}).

We use \ars to design two-step defence against $L_\infty$ adversaries on image classification (\cref{fig:main_architecture}), which is a challenging setting for RS \cite{blum2020random}.
The first step computes an input mask that focuses on task-relevant information.
This reduces the dimension of the input, which is then passed to the second step for prediction.
Thanks to this adaptive dimension reduction, the second step makes its prediction on a less noisy image, improving the performance and certification radius (\S\ref{sec:approach}).

We evaluate our adaptive randomized smoothing method in three settings (\S\ref{sec:experiments}).
For image classification, we first design a challenging benchmark based on CIFAR-10, and we show that \ars can improve accuracy by up to $15\%$.
For spatially-localized face attribute classification on the CelebA dataset, we show that \ars improves accuracy by up to $7.7\%$. 
For large-scale image classification on ImageNet, \ars maintains accuracy and improves certified accuracy by up to $1.6\%$.

\begin{figure*}[t!]
\centering
\includegraphics[width=0.95\textwidth]{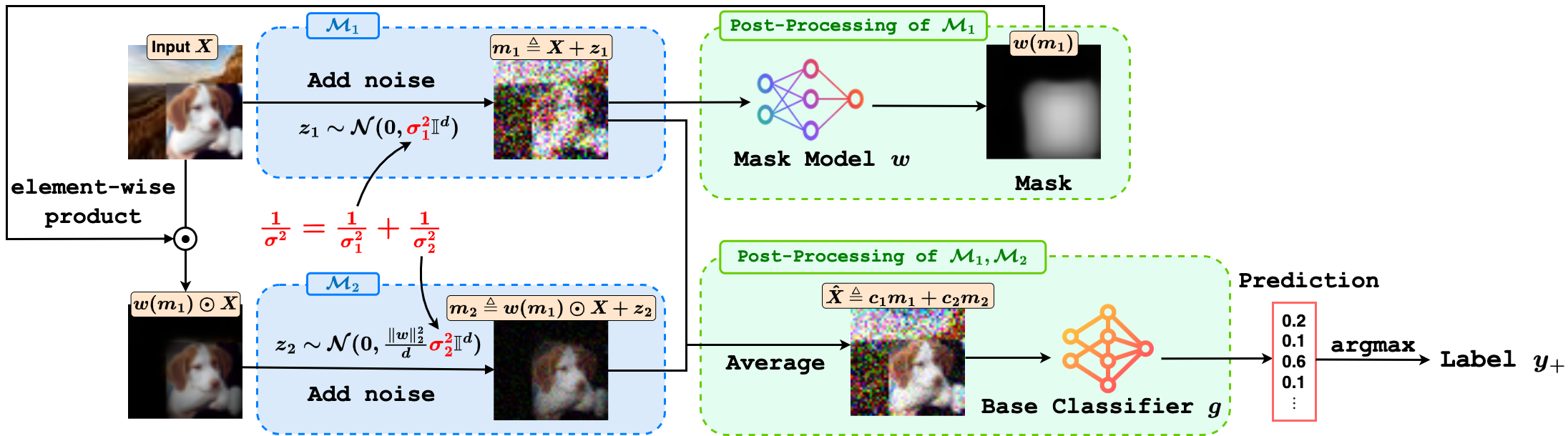}
\caption{{\footnotesize
Two-step \ars for $L_\infty$-bounded attacks.
Step $\mathcal{M}_1$ adds noise to input $X$ and post-processes the result into a mask $w(m_1)$.
Step $\mathcal{M}_2$ takes masked input $w(m_1)\odot X$ and adds noise to get $m_2$.
Base classifier $g$ post-processes a weighted average of $m_1, m_2$ to output a label.
RS reduces to $\sigma_2 = \sigma$ and $w(.) = 1$ (no $\mathcal{M}_1$).
}}
\label{fig:main_architecture}
\end{figure*}

\section{Theory for Adaptivity via Differential Privacy}
\label{sec:theory}

After introducing the necessary background and known results on RS and DP (\S\ref{subsec:background}), we reconnect RS to its DP roots by showing that the tight analysis of \citet{cohen2019certified} can be seen as PixelDP \citep{lecuyer2019certified} using $f$-Differential Privacy \citep{dong2019gaussian,dong2022gaussian}, a hypothesis testing formulation of DP (\S\ref{subsec:rs-fdp}).
This connection lets us leverage composition results for $f$-DP to analyze multi-step approaches for provable robustness to adversarial examples, which we name Adaptive Randomized Smoothing (\ars) (\S\ref{subsec:ars}).
We leverage \ars to design and analyze a two-step defence against $L_\infty$-bounded adversaries (\S\ref{subsec:linf-2q-ars}), which we then instantiate as a deep network (\S\ref{sec:approach}).

\subsection{Related Work: Adversarial Robustness, Randomized Smoothing, and Differential Privacy}
\label{subsec:background}

{\bf Adversarial Examples} \citep{42503}: Consider a classifier $g: \cX \rightarrow \cY$, and input $X$.
An adversarial example of radius $r$ in the $L_p$ threat model, for model $g$ on input $X$, is an input $X+e$ such that $g(X+e) \neq g(X)$, where $e \in B_p(r)$, where $B_p(r) \triangleq \{x \in \mR^d: \|x\|_p \leq r\}$ is the $L_p$ ball of radius $r$.
These inputs or attacks are made against classifiers at test time.
For more on the active topics of attack and defence, we refer to surveys \citep{li2023sok,costa2023deep,chakraborty2018adversarial} on current attacks and provable defences.
In general, provable defences do not focus on the largest-scale highest-accuracy classifiers, with the notable exception of randomized smoothing.

{\bf Randomized Smoothing (RS)} \citep{lecuyer2019certified,cohen2019certified} is a scalable approach to certifying model predictions against $L_2$-norm adversaries.
Specifically, it certifies robustness to {\em any} attack $\in B_2(r_X)$.
The algorithm randomizes a base model $g$ by adding spherical Gaussian noise to its input, and produces a smoothed classifier that returns the class with highest expectation over the noise: $y_+ \triangleq \arg\max_{y \in \cY} \mP_{z \sim \cN(0, \sigma^2 \mI^d)}\big(g(X + z) = y\big)$.
The tightest analysis from \citet{cohen2019certified} uses hypothesis testing theory to show that, with $\underline{p_+}, \overline{p_-} \in [0, 1]$ such that $\mP(g(X+z) = y_+) \geq \underline{p_+} \geq \overline{p_{-}} \geq \max_{y_- \neq y_+} \mP(g(X+z) = y_{-})$, the certificate size $r_X$ for prediction $y_+$ is:\begin{equation}
\label{eq:rs_l2}
    r_X = \frac{\sigma}{2}\big( \Phi^{-1}(\underline{p_+}) - \Phi^{-1}(\overline{p_{-}})  \big) ,
\end{equation}
where $\Phi^{-1}$ is the inverse standard Gaussian CDF, $\underline{p_+}$ lower-bounds the probability of $g(X+z) = y_+$ (the most probable class), and $\overline{p_{-}}$ upper-bounds the probability of other classes.

While sound, RS is static during testing even though attacks may adapt.
Recent work aims to make RS adapt at test time \citep{sukenik2021intriguing,alfarra2022data,hong2022unicr}.
While pioneering, these works are restricted in either their soundness or their degree of adaptation and resulting improvement.
\citet{sukenik2021intriguing} soundly adapt the variance for RS by the distance between test and train inputs.
However, this only provides minimal adaptivity, with only minor improvement to certification.
\citet{alfarra2022data} adapt the variance for RS to each test input, but the analysis is not end-to-end, and hence not sound \citep{sukenik2021intriguing} (except for their memory-based edition, but this requires storage of all examples).
UniCR \citep{hong2022unicr} adapts the noise distribution for RS, primarily to the data distribution during training, and optionally to input during testing.
The train-time adaptation is sound, but the test-time adaptation is not due to the same issue raised by \citet{sukenik2021intriguing}.
We propose \ars to advance certified test-time adaptivity: our approach is sound and high-dimensional to flexibly adapt the computation of later steps conditioned on earlier steps by differential privacy theory.

{\bf Differential Privacy (DP)} is a rigorous notion of privacy.
A randomized mechanism $\cM$ is $(\epsilon, \delta)$-DP if, for any neighbouring inputs $X$ and $X'$, and any subset of possible outputs $\cY \subset \textrm{Range}(\cM)$, $\mP(\cM(X) \in \cY) \leq e^\epsilon \mP(\cM(X') \in \cY) + \delta$.
Following \citet{lecuyer2019certified}, we define neighbouring based on $L_p$ norms: $X$ and $X'$ in $\mR^d$ are $L_p$ neighbours at radius $r$ if $X - X' \in B_p(r)$.

RS was initially analyzed using $(\epsilon, \delta)$-Differential Privacy \citep{lecuyer2019certified}.
Intuitively, the randomized classifier $\cM(X) \triangleq g(X+z), \ z \sim \cN(0, \sigma^2 \mI^d)$ acts as a privacy preserving mechanism (the Gaussian mechanism) that provably ``hides'' small variations in $X$.
This privacy guarantee yields a robustness certificate for the expected predictions.

{\bf $\boldsymbol{f}$-DP} \citep{dong2019gaussian} is a notion of privacy
that extends $(\epsilon, \delta)$-DP, and defines privacy as a bound on the power of hypothesis tests.
\cref{appendix:f-dp} provides more details on $f$-DP.
The main result we leverage is
Theorem 2.7 of \citet{dong2022gaussian} that for a Gaussian mechanism $\cM(X) = \theta(X) + z$, $z \sim \cN(0, \frac{r^2}{\mu^2})$, such that for any neighbouring $X, X'$, $\theta(X) - \theta(X') \in B_2(r)$ (i.e., the $L_2$ sensitivity of $\theta$ is $r$), we have that $\cM$ is $G_\mu$-DP with the function $G_\mu$ defined as:
\begin{equation}\label{eq:gaussian-mechanism}
    G_\mu(\alpha) = \Phi\Big( \Phi^{-1}(1-\alpha) - \mu \Big), \textrm{ for }\alpha \in [0,1].
\end{equation}

We leverage two key properties of $f$-DP. First, $f$-DP is resilient to post-processing. That is, if mechanism $\cM$ is $f$-DP, $\textrm{proc} \circ \cM$ is also $f$-DP. Second, $f$-DP is closed under adaptive composition.
We refer to \S 3 in \citet{dong2022gaussian} for the precise definition and use their Corollary 3.3: the adaptive composition of two Gaussian mechanisms $G_{\mu_1}$-DP and $G_{\mu_2}$-DP is itself $G_{\mu}$-DP with
\begin{equation}
\label{eq:gdp-composition}
    \mu = \sqrt{\mu_1^2 + \mu_2^2}.
\end{equation}

$f$-DP is distinct from the $f$-divergence from information theory.
\citet{Dvijotham2020A} use $f$-divergence bounds between the noise distribution centred on the original input and centred on any perturbed input.
This improves RS by broadening the noise distributions and norm-bounds on the adversary that RS can support.
In contrast we focus on $f$-DP, which
captures enough information to reconstruct any divergence by post-processing (Proposition B.1. in \citet{dong2019gaussian}).
Our main objective is different: we leverage $f$-DP {\em composition} properties to enable multi-step deep learning architectures that adapt to the input at test time with robustness guarantees.

\subsection{Randomized Smoothing from \texorpdfstring{$\boldsymbol{f}$}{}-DP}
\label{subsec:rs-fdp}

We reconnect RS with DP, using $f$-DP to yield results as strong as that of \cref{eq:rs_l2}. We start with a general robustness result on $f$-DP classifiers, which we later build on for our main result.
\begin{proposition}[$f$-DP Robustness]
\label{prop:fdp-robustness}
    Let $\cM: \cX \rightarrow \cY$ be $f$-DP for $B_p(r)$ neighbourhoods, and let $M_S: X \rightarrow \arg\max_{y \in \cY} \mP(\cM(X) = y)$ be the associated smoothed classifier.
    Let $y_+ \triangleq M_S(X)$ be the prediction on input $X$, and let $\underline{p_+}, \overline{p_-} \in [0, 1]$ be such that $\mP(\cM(X) = y_+) \geq \underline{p_+} \geq \overline{p_-} \geq \max_{y_- \neq y_+} \mP(\cM(X) = y_-)$. Then:
    \[
    f(1-\underline{p_+}) \geq 1-f(\overline{p_-}) \Rightarrow \forall e \in B_p(r), \ M_S(X+e) = y_+
    \]
\end{proposition}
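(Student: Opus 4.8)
The goal is to show that the smoothed classifier $M_S$ is constant on the ball $B_p(r)$ around $X$, under the condition $f(1-\underline{p_+}) \geq 1 - f(\overline{p_-})$. The natural approach is a hypothesis-testing / trade-off-function argument: for any fixed $e \in B_p(r)$, the inputs $X$ and $X+e$ are $B_p(r)$-neighbours, so $f$-DP of $\cM$ controls how much the output distribution can move. The plan is to (i) translate the $f$-DP guarantee into a lower bound on $\mP(\cM(X+e) = y_+)$ in terms of $\mP(\cM(X) = y_+)$, and symmetrically an upper bound on $\mP(\cM(X+e) = y_-)$ for every other class $y_-$, and then (ii) combine these with the gap hypothesis to conclude $\mP(\cM(X+e) = y_+) > \mP(\cM(X+e) = y_-)$ for all $y_- \neq y_+$, which gives $M_S(X+e) = y_+$.

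For step (i), I would use the defining property of $f$-DP expressed through trade-off functions. Recall that $\cM$ being $f$-DP for $B_p(r)$-neighbours means that for any event (measurable set) $S$ in the output space, $f\big(\mP(\cM(X) \in S)\big) \leq \mP(\cM(X+e) \notin S)$, equivalently $\mP(\cM(X+e) \in S) \leq 1 - f\big(\mP(\cM(X)\in S)\big)$; applying this with $S$ and with the roles of $X, X+e$ swapped (and using that $f$ is a trade-off function, hence symmetric in the appropriate sense, or simply applying the bound directly in the needed direction) yields $\mP(\cM(X+e)\in S) \geq f\big(\mP(\cM(X)\in S)\big)$ after relabelling the complement. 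Taking $S = \{y_+\}$ gives $\mP(\cM(X+e) = y_+) \geq f\big(1 - \mP(\cM(X) = y_+)\big) \geq f(1 - \underline{p_+})$, where the last inequality uses that $f$ is nonincreasing and $\mP(\cM(X)=y_+) \geq \underline{p_+}$ so $1 - \mP(\cM(X)=y_+) \leq 1 - \underline{p_+}$. Taking $S = \{y_-\}$ and using the other direction gives $\mP(\cM(X+e) = y_-) \leq 1 - f\big(\mP(\cM(X) = y_-)\big) \leq 1 - f(\overline{p_-})$, using $f$ nonincreasing and $\mP(\cM(X)=y_-) \leq \overline{p_-}$.

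Chaining these with the hypothesis, for every $y_- \neq y_+$ and every $e \in B_p(r)$:
\[
\mP(\cM(X+e) = y_+) \geq f(1-\underline{p_+}) \geq 1 - f(\overline{p_-}) \geq \mP(\cM(X+e) = y_-).
\]
So $y_+$ remains an argmax at $X+e$. To get the strict conclusion $M_S(X+e) = y_+$ (i.e. $y_+$ is \emph{the} argmax, matching the tie-breaking convention in the definition of $M_S$), I would note that either the chain is strict somewhere, or one invokes the same tie-breaking rule used to define $M_S(X) = y_+$; the cleanest route is to observe the statement only needs $M_S(X+e) = y_+$ under whatever fixed argmax convention is in force, and the displayed inequalities show $y_+$ attains the maximum, so consistency of the convention finishes it.

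**Main obstacle.** The delicate point is getting the direction of the $f$-DP inequality exactly right — $f$-DP is naturally stated as a bound on the power of a test rejecting the null $\cM(X)$ in favour of $\cM(X+e)$, i.e. as $\mP(\cM(X+e)\in S) \leq 1 - f(\mP(\cM(X)\in S))$, and one must carefully apply it to both $S = \{y_+\}$ (where we want a \emph{lower} bound, obtained by applying the inequality to the complement $S^c$ and using that $f$ is symmetric as a trade-off function, $f = f^{-1}$-ish, or more precisely re-deriving the lower bound from the upper bound on the complement) and to $S = \{y_-\}$ (where we want the upper bound directly). Keeping track of which set, which direction, and which monotonicity step of $f$ is used is the only real subtlety; everything else is bookkeeping. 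I would also double-check that $f$-DP for $B_p(r)$-neighbourhoods is symmetric in $X \leftrightarrow X+e$ (it is, since $B_p(r)$-neighbour is a symmetric relation and $e \in B_p(r) \iff -e \in B_p(r)$), so the inequality may be applied with the two inputs in either role.
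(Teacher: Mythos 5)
Your proposal is correct and follows essentially the same route as the paper: both translate the $f$-DP guarantee into a power bound for the indicator rejection rules on $\{y_-\}$ (giving $\mP(\cM(X+e)=y_-)\leq 1-f(\overline{p_-})$) and on the complement of $\{y_+\}$ (giving $\mP(\cM(X+e)=y_+)\geq f(1-\underline{p_+})$), then chain these with the hypothesis and the monotonicity of $f$. The symmetry of $f$ you worry about is not actually needed — the complement trick alone suffices, exactly as you note — and your handling of the argmax tie-breaking is if anything slightly more careful than the paper's.
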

\begin{proof}
See \cref{appendix:proofs}\ref{prop:fdp-robustness-proof}.
\end{proof}

Let us now instantiate \cref{prop:fdp-robustness} for Gaussian RS (see \S\ref{subsec:background}):
\begin{corollary}[RS from $f$-DP]
\label{prop:fdp-rs}
    Let $\cM: X \rightarrow g(X + z), \ z \sim \cN(0, \sigma^2\mI^d)$, and $M_S: X \rightarrow \arg\max_{y \in \cY} \mP(\cM(X) = y)$ be the associated smooth model.
    Let $y_+ \triangleq M_S(X)$ be the prediction on input $X$, and let $\underline{p_+}, \overline{p_-} \in [0, 1]$ be such that $\mP(\cM(X) = y_+) \geq \underline{p_+} \geq \overline{p_-} \geq \max_{y_- \neq y_+} \mP(\cM(X) = y_-)$. Then $\forall e \in B_2(r_x), \ M_S(X + e) = y_+$, with:
    \begin{equation*}
    \label{eq:fdp-rs}
        r_X = \frac{\sigma}{2}\big( \Phi^{-1}(\underline{p_+}) - \Phi^{-1}(\overline{p_-})  \big) .
    \end{equation*}
\end{corollary}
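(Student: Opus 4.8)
The plan is to realize $\cM$ as a post-processing of the standard Gaussian mechanism and then invoke \cref{prop:fdp-robustness} with $f = G_\mu$. Write $\cM = g \circ \cA$, where $\cA: X \mapsto X + z$ with $z \sim \cN(0, \sigma^2 \mI^d)$ is the Gaussian mechanism applied to the identity map $\theta(X) = X$. For any $B_2(r)$ neighbours $X, X'$ we have $\theta(X) - \theta(X') = X - X' \in B_2(r)$, so $\theta$ has $L_2$-sensitivity $r$. Setting $\mu = r/\sigma$ (equivalently $\sigma = r/\mu$), Theorem 2.7 of \citet{dong2022gaussian}, reproduced in \cref{eq:gaussian-mechanism}, gives that $\cA$ is $G_\mu$-DP for $B_2(r)$ neighbourhoods; since $f$-DP is preserved under post-processing, $\cM = g \circ \cA$ is $G_{r/\sigma}$-DP for $B_2(r)$ neighbourhoods as well.

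Next, for a fixed radius $r$, apply \cref{prop:fdp-robustness} with $p = 2$ and $f = G_{r/\sigma}$: whenever $G_{r/\sigma}(1-\underline{p_+}) \geq 1 - G_{r/\sigma}(\overline{p_-})$, we conclude $M_S(X+e) = y_+$ for all $e \in B_2(r)$. It then remains to determine the largest $r$ for which this condition holds.

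To do this, I would unfold $G_\mu$ using its definition in \cref{eq:gaussian-mechanism} together with the elementary identities $\Phi^{-1}(1-t) = -\Phi^{-1}(t)$ and $1 - \Phi(t) = \Phi(-t)$, which give $G_\mu(1-\underline{p_+}) = \Phi\big(\Phi^{-1}(\underline{p_+}) - \mu\big)$ and $1 - G_\mu(\overline{p_-}) = \Phi\big(\mu + \Phi^{-1}(\overline{p_-})\big)$. Since $\Phi$ is strictly increasing, the robustness condition is equivalent to $\Phi^{-1}(\underline{p_+}) - \mu \geq \mu + \Phi^{-1}(\overline{p_-})$, i.e.\ $\mu \leq \tfrac{1}{2}\big(\Phi^{-1}(\underline{p_+}) - \Phi^{-1}(\overline{p_-})\big)$. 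Substituting $\mu = r/\sigma$ yields $r \leq \tfrac{\sigma}{2}\big(\Phi^{-1}(\underline{p_+}) - \Phi^{-1}(\overline{p_-})\big) = r_X$, so the condition holds precisely for $r \leq r_X$, which establishes the claim.

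The only real subtlety — and the step I would be most careful about — is that the $f$-DP guarantee $G_\mu$ itself depends on the neighbourhood radius $r$ via $\mu = r/\sigma$, so one cannot apply \cref{prop:fdp-robustness} at a single fixed guarantee; instead one must keep this dependence explicit and solve for the threshold radius $r_X$. Everything else is the symmetry bookkeeping for $\Phi$ and monotonicity.
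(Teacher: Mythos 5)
Your proof is correct and follows essentially the same route as the paper's: view $\cM$ as post-processing of the Gaussian mechanism (hence $G_{r/\sigma}$-DP for $B_2(r)$ neighbourhoods), apply the $f$-DP robustness proposition, and solve for the largest admissible $r$ using the symmetry identities $1-\Phi(t)=\Phi(-t)$ and $\Phi^{-1}(1-p)=-\Phi^{-1}(p)$ together with monotonicity of $G_{r/\sigma}$ in $r$. The subtlety you flag — that the guarantee varies with the neighbourhood radius and one must optimize over $r$ rather than fix a single $f$ — is exactly how the paper handles it as well.
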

\begin{proof} See \cref{appendix:proofs}\ref{prop:fdp-rs-proof}. {\em Sketch:} $\cM$ is a Gaussian mechanism, and is $G_{\frac{r}{\sigma}}$-DP for any $r$ ($B_2(r)$ neighbourhood). We apply \cref{prop:fdp-robustness} and maximize $r$ such that $G_{\frac{r}{\sigma}}(1-\underline{p_+}) \geq 1-G_{\frac{r}{\sigma}}(\overline{p_-})$.
\end{proof}

\subsection{Adaptive Randomized Smoothing}
\label{subsec:ars}

While \cref{prop:fdp-robustness} is new, so far we have only used it to reprove the known result of \cref{prop:fdp-rs}.
So why is this connection between $f$-DP and robustness useful?
Our key insight is that we can leverage adaptive composition results at the core of DP algorithms to \emph{certify multi-step methods that adapt to their inputs at test time}.
Such adaptive defences have seen recent empirical interest, but either lack formal guarantees, or provide only limited adaptivity in practice (\S\ref{sec:relwork}).
For the first time we derive a \emph{sound and high-dimensional} adaptive method for certification.

We formalize adaptive multi-step certification as follows.
Consider $k$ randomized Gaussian mechanisms $\cM_1, \ldots, \cM_k$ (our adaptive steps), such that $m_i \sim \cM_i(X | m_{< i})$, and for all $r \geq 0$ we have that $\cM_i$ is $G_{r/\sigma_i}$-DP for the $B_2(r)$ neighbouring definition.
Note that the computation $\cM_i$ can depend on previous results, as long as it is $G_{r/\sigma_i}$-DP.
Further consider a (potentially randomized) post-processing classifier $g(m_1, \ldots, m_k) = y \in \cY$.

\begin{theorem}[\textbf{Main result: Adaptive RS}]
\label{prop:ars}
    Using definitions above, let $\cM: X \rightarrow g(m_1, \ldots, m_k) \in \cY, \ (m_1, \ldots, m_k) \sim (\cM_1(X), \ldots , \cM_k(X | m_{<k}))$, and the associated smoothed model be $M_S: X \rightarrow \arg\max_{m \in \cY} \mP(\cM(X) = y)$.
    Let $y_+ \triangleq M_S(X)$ be the prediction on input $X$, and let $\underline{p_+}, \overline{p_-} \in [0, 1]$ be such that $\mP(\cM(X) = y_+) \geq \underline{p_+} \geq \overline{p_-} \geq \max_{y_- \neq y_+} \mP(\cM(X) = y_-)$. Then $\forall e \in B_2(r_x), \ M_S(X + e) = y_+$, with:
    \begin{equation*}
    \label{eq:ars}
        r_X = \frac{1}{2\sqrt{\sum_{i=1}^k \frac{1}{\sigma_i^2}}} \Big( \Phi^{-1}(\underline{p_+}) - \Phi^{-1}(\overline{p_-})  \Big) .
    \end{equation*}
\end{theorem}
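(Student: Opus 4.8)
The plan is to reduce \cref{prop:ars} to \cref{prop:fdp-robustness}. Concretely, I would show that the composed-and-post-processed mechanism $\cM$ is $G_\mu$-DP for $B_2(r)$ neighbourhoods with $\mu = r\sqrt{\sum_{i=1}^k 1/\sigma_i^2}$, and then read off $r_X$ by solving the scalar premise of \cref{prop:fdp-robustness} for the largest admissible $r$.

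\emph{Composition and post-processing.} First I would introduce the tuple-valued mechanism $\mathcal{C}: X \to (m_1,\ldots,m_k)$, where $m_i \sim \cM_i(X \mid m_{<i})$. By hypothesis, for each fixed history $m_{<i}$ the conditional step $\cM_i(\cdot \mid m_{<i})$ is $G_{r/\sigma_i}$-DP for $B_2(r)$ neighbourhoods; this is exactly the setting of adaptive composition in $f$-DP. I would then iterate Corollary 3.3 of \citet{dong2022gaussian} (inducting on $k$) to conclude that $\mathcal{C}$ is $G_\mu$-DP for $B_2(r)$ neighbourhoods with $\mu^2 = \sum_{i=1}^k (r/\sigma_i)^2$. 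Since $\cM = g \circ \mathcal{C}$ and $f$-DP is closed under (randomized) post-processing, $\cM$ is $G_\mu$-DP for $B_2(r)$ neighbourhoods as well.

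\emph{From $G_\mu$-DP to the certified radius.} Next I would apply \cref{prop:fdp-robustness} with $f = G_\mu$ and $p = 2$. Using $G_\mu(\alpha) = \Phi(\Phi^{-1}(1-\alpha) - \mu)$ together with $\Phi^{-1}(1-x) = -\Phi^{-1}(x)$, the premise $G_\mu(1-\underline{p_+}) \geq 1 - G_\mu(\overline{p_-})$ rewrites as $\Phi(\Phi^{-1}(\underline{p_+}) - \mu) \geq \Phi(\Phi^{-1}(\overline{p_-}) + \mu)$, which by strict monotonicity of $\Phi$ is equivalent to $\mu \leq \tfrac12(\Phi^{-1}(\underline{p_+}) - \Phi^{-1}(\overline{p_-}))$. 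Substituting $\mu = r\sqrt{\sum_i 1/\sigma_i^2}$ and taking the largest $r$ that satisfies this inequality gives precisely the stated $r_X$. Finally, for any $e \in B_2(r_X)$ the inputs $X$ and $X+e$ are $B_2(r_X)$-neighbours, so \cref{prop:fdp-robustness} yields $M_S(X+e) = y_+$, completing the argument.

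\emph{Main obstacle.} I expect the delicate point to be the composition step, for two reasons. First, the radius $r$ plays a double role — it is simultaneously the neighbourhood size and, through $r/\sigma_i$, the privacy parameter of each step — so the composition must be invoked at a fixed $r$, with $r$ optimized only afterwards, mirroring the proof of \cref{prop:fdp-rs}. Second, I must ensure the $G_{r/\sigma_i}$-DP guarantee of $\cM_i$ holds for \emph{every} realized history $m_{<i}$ and that throughout the composition the neighbouring pair is consistently $X$ versus $X+e$ with a shared history, i.e.\ that we are genuinely in the adaptive-composition regime for which Corollary 3.3 applies. Granting that, the remaining algebra in the second step is routine.
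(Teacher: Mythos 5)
Your proposal is correct and follows essentially the same route as the paper: adaptive composition of the Gaussian mechanisms (Corollary 3.3 of \citet{dong2022gaussian}) plus closure under post-processing gives that $\cM$ is $G_\mu$-DP with $\mu = r\sqrt{\sum_i 1/\sigma_i^2}$, after which the radius is read off from the $f$-DP robustness condition. The only cosmetic difference is that the paper invokes \cref{prop:fdp-rs} with the effective $\sigma = 1/\sqrt{\sum_i 1/\sigma_i^2}$ rather than re-deriving the algebra from \cref{prop:fdp-robustness} as you do, and your remarks on fixing $r$ before optimizing and on the per-history DP guarantee are exactly the right points of care.
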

\begin{proof}
    By adaptive composition of Gaussian DP mechanisms (\cref{eq:gdp-composition}), $\cM$ is $G_\mu$-DP with $\mu = \sqrt{\sum_{i=1}^k \frac{r^2}{\sigma_i^2}} = r\sqrt{\sum_{i=1}^k \frac{1}{\sigma_i^2}}$. We can then apply \cref{prop:fdp-rs} with $\sigma = 1/\sqrt{\sum_{i=1}^k \frac{1}{\sigma_i^2}}$.
\end{proof}

We focus on Gaussian RS, but a similar argument applies to general $f$-DP mechanisms for which we can compute $f_i$ at any $r$, and the composition $f_i \otimes \cdots \otimes f_k$, potentially using numerical techniques such as that of \citet{gopi2021numerical}.
For Gaussian noise, \cref{prop:ars} leverages strong results from DP to provide a perhaps surprising result: there is no cost to adaptivity, in the sense that $k$ independent measurements of input $X$ with Gaussian noise (without adaptivity) of respective variance $\sigma_i^{2}$ can be averaged to one measurement of variance $\sigma^2 = 1/\sum_{i=1}^k \sigma_i^{-2}$.
To show this, we can use a weighted average to minimize variance (see e.g., Equation 4 in \citet{honaker2015efficient}), with $c_j = \sigma_j^{-2} / \sum_{i=1}^k \sigma_i^{-2}$ yielding $\sigma^2 = \sum_{j=1}^k c_j^2 \sigma_j^2 = \sum_{j=1}^k \sigma_j^{-2} / \big(\sum_{i=1}^k \sigma_i^{-2}\big)^2 = 1/\sum_{i=1}^k \sigma_i^{-2}$.
The \ars $r_X$ from \cref{prop:ars} is identical to that of one step RS from \cref{prop:fdp-rs} using this variance: \emph{adaptivity over multi-step computation comes with no decrease in certified radius}.

\subsection{\ars against \texorpdfstring{$L_\infty$}{}-Bounded Adversaries}
\label{subsec:linf-2q-ars}

How can we leverage the multi-step adaptivity from \cref{prop:ars} to increase certified accuracy?
We focus on two-step certified defence against $L_\infty$-bounded attacks to increase accuracy by adaptivity.
Previous work already notes that RS applies to $L_\infty$-bounded attackers \citep{lecuyer2019certified,cohen2019certified}, using the fact that $\forall X \in \mR^d, \|X\|_2 \leq \sqrt{d} \|X\|_\infty$, and hence that $X - X' \in B_\infty(r^\infty) \Rightarrow X - X' \in B_2(\sqrt{d} \cdot r^\infty)$. Using \cref{prop:fdp-rs}, this yields:
\begin{equation}\label{eq:linf_rs}
    r^\infty_X = \frac{\sigma}{2\sqrt{d}} \big( \Phi^{-1}(\underline{p_+}) - \Phi^{-1}(\overline{p_-}) \big) .
\end{equation}

While $L_\infty$-specific RS theory exists \citep{yang2020randomized}, further work by \citet{blum2020random} has found that Gaussian RS performs advantageously in practice. However, \citet{blum2020random,kumar2020curse,pmlr-v130-wu21d} show that the $\sqrt{d}$ dependency cannot be avoided for a large family of distributions, leading the authors to speculate that RS might be inherently limited for $L_\infty$ certification of predictions on high dimensional images.
To side-step this issue, we use two-steps adaptivity to first select subsets of the image important to the classification task (thereby reducing dimension), and then make the prediction based on the selected subset. Formally:
\begin{proposition}[Adaptive RS for $L_\infty$]\label{prop:ars-linf}
    Define the following pair of (adaptive) mechanisms:
    \begin{align}\label{eq:m1}
        \begin{split}
    \cM_1: X \rightarrow X + z_1 \triangleq m_1,  \hspace{.8cm}
    z_1 \sim \cN(0, \sigma_1^2\mI^d)
        \end{split}
    \end{align}
    Then, with any function $w: \mR^d \rightarrow [0,1]^d$ (interpreted as a mask):
    \begin{align}\label{eq:m2}
        \begin{split}
        \cM_2: (X, m_1) \rightarrow w(m_1) \odot X + z_2 \triangleq m_2, \hspace{.8cm}
        z_2 \sim \cN(0, \frac{\|w(m_1)\|_2^2}{d}\sigma_2^2\mI^d)
        \end{split}
    \end{align}
    where $\odot$ is the element-wise product; and the final prediction function $g: m_1, m_2 \rightarrow \cY$.

    Consider the mechanism $\cM$ that samples $m_1 \sim \cM_1$, then $m_2 \sim \cM_2$, and finally outputs $g(m_1, m_2)$; and the associated smoothed classifier $M_S: X \rightarrow \arg\max_{y \in \cY} \mP(\cM(X) = y)$.
    Let $y_+ \triangleq M_S(X)$ be the prediction on input $X$, and let $\underline{p_+}, \overline{p_-} \in [0, 1]$ be such that $\mP(\cM(X) = y_+) \geq \underline{p_+} \geq \overline{p_-} \geq \max_{y_- \neq y_+} \mP(\cM(X) = y_-)$. Then $\forall e \in B_\infty(r_X^\infty), \ M_S(X + e) = y_+$, with:
    \begin{equation}
    \label{eq:arsl}
        r_X^\infty = \frac{1}{2\sqrt{d\big(\frac{1}{\sigma_1^2}+\frac{1}{\sigma_2^2}\big)}} \Big( \Phi^{-1}(\underline{p_+}) - \Phi^{-1}(\overline{p_-})  \Big) .
    \end{equation}
\end{proposition}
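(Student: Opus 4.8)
The plan is to reduce this $L_\infty$ claim to the $L_2$ $f$-DP machinery already in place (\cref{prop:fdp-robustness,prop:fdp-rs,prop:ars}), by bounding the $L_2$ sensitivity of each step over $B_\infty(r^\infty)$ neighbours and, crucially, checking that the mask-dependent noise scaling built into $\cM_2$ makes its privacy guarantee \emph{the same for every realized mask}. Fix a radius $r^\infty$ and $B_\infty(r^\infty)$ neighbours $X, X'$, so that $\|X - X'\|_2 \le \sqrt{d}\,\|X - X'\|_\infty \le \sqrt{d}\,r^\infty$. Since $\cM_1$ adds $\cN(0,\sigma_1^2\mI^d)$ to the identity map $X$, the Gaussian mechanism result \cref{eq:gaussian-mechanism} (applied with $L_2$ sensitivity bound $\sqrt{d}\,r^\infty$, exactly as in the derivation of \cref{eq:linf_rs}) gives that $\cM_1$ is $G_{\mu_1}$-DP for the $B_\infty(r^\infty)$ neighbourhood with $\mu_1 = \sqrt{d}\,r^\infty/\sigma_1$.

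Next I would handle $\cM_2$ conditioned on an arbitrary value of $m_1$, for which $w(m_1)\in[0,1]^d$ is a fixed vector and $\cM_2$ adds $\cN\!\big(0,\tfrac{\|w(m_1)\|_2^2}{d}\sigma_2^2\mI^d\big)$ to $\theta_2(X)=w(m_1)\odot X$. Over $B_\infty(r^\infty)$ neighbours,
\[
\|\theta_2(X)-\theta_2(X')\|_2^2=\sum_{j=1}^d w(m_1)_j^2\,(X_j-X'_j)^2\le (r^\infty)^2\sum_{j=1}^d w(m_1)_j^2=(r^\infty)^2\|w(m_1)\|_2^2,
\]
so the $L_2$ sensitivity of $\theta_2$ is at most $r^\infty\|w(m_1)\|_2$, while the per-coordinate noise standard deviation is $\|w(m_1)\|_2\,\sigma_2/\sqrt{d}$. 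Feeding both into \cref{eq:gaussian-mechanism}, the factor $\|w(m_1)\|_2$ cancels and $\cM_2$ is $G_{\mu_2}$-DP with $\mu_2=\sqrt{d}\,r^\infty/\sigma_2$ --- a value that does not depend on $m_1$ (with the convention that a zero mask makes $m_2$ deterministic, hence perfectly private and a fortiori $G_{\mu_2}$-DP).

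Since $\mu_2$ is constant in $m_1$, adaptive composition of Gaussian mechanisms \cref{eq:gdp-composition} applies verbatim: the map $X\mapsto(m_1,m_2)$ is $G_\mu$-DP for $B_\infty(r^\infty)$ neighbours with $\mu=\sqrt{\mu_1^2+\mu_2^2}=r^\infty\sqrt{d\big(1/\sigma_1^2+1/\sigma_2^2\big)}$, and post-processing resilience of $f$-DP lets us append $g$ at no cost, so $\cM$ is $G_\mu$-DP. Applying \cref{prop:fdp-robustness} with $f=G_\mu$ and $p=\infty$, the prediction $y_+$ is certified on all of $B_\infty(r^\infty)$ as soon as $G_\mu(1-\underline{p_+})\ge 1-G_\mu(\overline{p_-})$; unfolding $G_\mu(\alpha)=\Phi(\Phi^{-1}(1-\alpha)-\mu)$ and using $\Phi^{-1}(1-t)=-\Phi^{-1}(t)$, this inequality is equivalent to $\mu\le\tfrac12\big(\Phi^{-1}(\underline{p_+})-\Phi^{-1}(\overline{p_-})\big)$ --- the same one-dimensional optimization that underlies \cref{prop:fdp-rs} (equivalently \cref{prop:ars} with effective variances $\sigma_i^2/d$). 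Substituting $\mu=r^\infty\sqrt{d(1/\sigma_1^2+1/\sigma_2^2)}$ and solving for the largest admissible $r^\infty$ gives \cref{eq:arsl}.

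The routine parts are the first and third paragraphs, which are essentially instantiations of \cref{eq:gaussian-mechanism,eq:gdp-composition,prop:fdp-robustness}. I expect the main obstacle to be the second paragraph: establishing that the noise variance $\tfrac{\|w(m_1)\|_2^2}{d}\sigma_2^2$ is exactly the scaling that renders $\cM_2$'s $f$-DP guarantee independent of the random mask. This uniformity is precisely what licenses the use of adaptive composition with a single fixed $\mu_2$; without it the second step's privacy would itself be random and the bound would not close to the clean expression in \cref{eq:arsl}. The only other care needed is the degenerate all-zero-mask case, which is handled by the perfect-privacy convention noted above.
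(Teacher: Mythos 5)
Your proposal is correct and follows essentially the same route as the paper's proof: bound the $L_2$ sensitivity of each step over $B_\infty(r^\infty)$ neighbours, observe that the $\|w(m_1)\|_2$-scaled noise makes $\mu_2 = \sqrt{d}\,r^\infty/\sigma_2$ independent of the realized mask, compose via \cref{eq:gdp-composition}, and solve for the largest admissible $r^\infty$. Your write-up is if anything slightly more careful than the paper's, in spelling out the coordinate-wise sensitivity bound, the uniformity of $\mu_2$ across masks that licenses adaptive composition, and the degenerate all-zero-mask case.
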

\begin{proof}
    Consider any $X, X'$ s.t. $X - X' \in B_\infty(r^\infty)$. We analyze $\cM_1$ and $\cM_2$ in turn.
    $\|X - X'\|_2 \leq \sqrt{d}\|X - X'\|_\infty$, so $X - X' \in B_2(\sqrt{d}r^\infty)$, and $\cM_1$ is $G_{\mu_1}$-DP with $\mu_1 = \frac{r^\infty\sqrt{d}}{\sigma_1}$.

    $\|w(m_1) \odot X - w(m_1) \odot X'\|_2 = \|w(y_1) \odot \big( X - X'\big) \|_2 \leq \|w(y_1)\|_2\|X - X'\|_\infty$ so $X - X' \in B_2(\|w(y_1)\|_2r^\infty)$ and $\cM_2$ is $G_{\mu_2}$-DP with $\mu_2 = \frac{\|w(y_1)\|_2 r^\infty}{\|w(y_1)\|_2\sigma_2 / \sqrt{d}} = \frac{r^\infty\sqrt{d}}{\sigma_2}$.

    Applying \cref{prop:ars} with $\sqrt{\frac{(r^\infty)^2 d}{\sigma^2_1} + \frac{(r^\infty)^2 d}{\sigma^2_2}} = r^\infty \sqrt{d\big( \frac{1}{\sigma^2_1} + \frac{1}{\sigma^2_2} \big)}$ concludes the proof.
\end{proof}
{\bf Important remarks.} {\bf 1.} $w(.)$ is a masking function, adaptively reducing (if $w_i(m_1) \ll 1$) the value of $X_i$ and thereby the attack surface of an $L_\infty$ attacker. This reduces the effective dimension of the input to $\cM_2$.
{\bf 2.} Reducing the dimension enables a reduction in the noise variance in $\cM_2$, at fixed privacy guarantee $G_{\mu_2}$. The variance reduction is enabled {\em for all dimensions in the input}, even those that are not masked ($w_i(m_1) \approx 1$). As a result, the variance of the noise in $\cM_2$ scales as $\|w(m_1)\|_2^2 \leq d$. The more masking, the lower the variance.
It may help to consider the change of variables $\sigma \leftarrow \sigma/\sqrt{d}$ in \cref{eq:linf_rs}, and $\sigma_{1,2} \leftarrow \sigma_{1,2}/\sqrt{d}$ in \cref{prop:ars-linf}, to remove $d$ from $r^\infty_X$ and scale the noise variance with $d$.
For RS (\cref{eq:linf_rs}), the noise variance scales as $d$. For \ars, only \cref{eq:m1} (the first step) suffers from variance scaled by $d$, while the second step's variance (\cref{eq:m2}) scales as $||w(m_1)||_2^2$, which can be much smaller than $d$ when a large part of the image is masked.
Reduced variance translates into higher accuracy, as well as $\underline{p_+}$ and $\overline{p_-}$ being further apart, for a larger $r^\infty_X$.
{\bf 3.} The variance reduction due to masking applies in the translation from the $B_\infty(r^\infty)$ bound on the attack to the $B_2(r)$ sensitivity used in \ars. This variance reduction would not apply to an $L_2$-bounded adversary (an attack that only changes pixels with mask values of one yields no sensitivity improvement). Hence, our two-steps \ars architecture for $L_\infty$-bounded adversaries does not reduce to bounding $L_\infty$ with $L_2$ as the traditional RS application does, and our gains come explicitly from variance reduction enabled by adaptive masking against an $L_\infty$ attack.

\section{Two-Step \ars for \texorpdfstring{$\boldsymbol{L_\infty}$}{} Certification of Image Classification}
\label{sec:approach}

\cref{fig:main_architecture} shows our deep learning architecture based on \cref{prop:ars-linf}.
The first step, $\mathcal{M}_1$, adds noise to input $X$ and post-processes the result into a mask $w(m_1)$. The second step, $\mathcal{M}_2$, takes masked input $w(m_1)\odot X$ and adds noise. Finally, the base classifier $g$ post-processes a weighted average of $m_1, m_2$ to output a label. The whole model is trained end-to-end on the classification task.
In RS, only the path going through $\cM_2$ is present. This is equivalent to setting $\sigma_2 = \sigma$ and $w(.) = 1$, with no $\mathcal{M}_1$.
In both cases, the final predictions are averaged over the noise to create the smoothed classifier.
The \ars architecture introduces several new components, which we next describe.

{\bf Budget Splitting:}
the noise budget $\sigma$ (\cref{fig:main_architecture}; red) is split to assign noise levels to the two steps $\cM_1$ and $\cM_2$.
We parameterize \ars with the same $\sigma$ as standard RS then split it by the $f$-DP composition formula from \cref{eq:gdp-composition}.
In practice, we assign $\sigma_1 \geq \sigma$ to $\cM_1$, and then $\sigma_2 = 1 / \sqrt{\frac{1}{\sigma^2} - \frac{1}{\sigma_1^2}}$.
We set $\sigma_{1}$ by either fixing it to a constant or learning it end-to-end.

{\bf Masking:}
the mask model $w(\cdot)$ takes the noisy image from  $\cM_1$ and predicts a weighting (one value in $[0, 1]$ per input pixel) that is multiplied with the input element-wise (denoted $\odot$ in \cref{prop:ars-linf}).
The model is a U-Net architecture, which makes pixel-wise predictions, and acts as a post-processing of $\cM_1$ in the $f$-DP analysis.
Our masking enables test-time adaptivity to reduce the noise variance for $\cM_2$, via the mask's dependence on the input through $m_{1}$.

{\bf Mechanism output averaging:}
to fully leverage both steps' information, we take a weighted average of the outputs $m_1$ and $m_2$
before passing the result to the base classifier $g$.
For a particular input pixel $i$, denote $X_i$ the value of pixel, $w_{i} \in [0,1]$ its mask weight (we omit the explicit dependency on $m_1$ in $w$ for compactness), and $m_{1,i}, m_{2,i}$ the respective values output by $\cM_1$ and $\cM_2$. Then, the final value of pixel $i$ in the averaged input will be $\hat{X}_i \triangleq c_{1,i} m_{1,i} + c_{2,i} m_{2,i}$.

We set $c_{1,i}, c_{2,i}$ such that $\hat{X}_i$ is the unbiased estimate of $X_i$ with smallest variance. First, we set $c_{1,i} + w_i c_{2,i}=1$, such that $\mE[\hat{X}_i] = c_{1,i} X_i + c_{2,i} w_i X_i = X_i$. Second, we minimize the variance. Notice that $\mV[\hat{X}_i] = c^2_{1,i} \sigma_1^2 + c^2_{2,i} \|w\|_2^2\sigma_2^2 = (1-w_i c_{2,i})^2 \sigma_1^2 + c^2_{2,i} \|w\|_2^2\sigma_2^2$: this is a convex function in $c_{2,i}$ minimized when its gradient in $c_{2,i}$ is zero. Plugging back into the constraint to get $c_{1,i}$, we obtain the following weights: $c_{1,i} = \frac{\|w\|_2^2\sigma_{2}^{2}}{\sigma_{1}^{2}w_{i}^{2} + \|w\|_2^2\sigma_{2}^{2}}$, and $c_{2,i} = \frac{\sigma_{1}^{2}w_{i}}{\sigma_{1}^{2}w_{i}^{2} + \|w\|_2^2\sigma_{2}^{2}}$.

The averaged noisy input $\hat{X}$ is finally fed to the base classifier $g$ for prediction. The smoothed classifier $M_S$ averages predictions (over noise draws) over the entire pipeline.
The parameters of $w$ and $g$ (and $\sigma_1$ if not fixed) are learned during training and are fixed at inference/certification time.

\section{Experiments}
\label{sec:experiments}

We evaluate on standard and $L_{\infty}$-certified test accuracy.
Certified accuracy at radius $r^\infty$ is the percentage of test samples that are correctly classified \textbf{and} have an $L_{\infty}$ certificate radius $r^\infty_X \geq r^\infty$.
Standard accuracy is obtained for $r^\infty = 0$.

\textbf{Datasets}
We evaluate on CIFAR-10 \citep{Krizhevsky2009LearningML} in \S\ref{subsec:modified_cifar10}, CelebA \citep{liu2015faceattributes} (specifically the unaligned HD-CelebA-Cropper edition) in \S\ref{subsec:celeba}, and ImageNet \citep{deng2009imagenet} in  \S\ref{subsec:imagenet}.
We measure adaptivity on CIFAR-10 and CelebA by designing challenging benchmarks requiring adaptivity, and measure scalability on ImageNet.

\textbf{Models}
We choose the standard ResNet \citep{he2016deep} models as base classifiers $g$ with ResNet-110 for CIFAR-10 and ResNet-50 for CelebA and ImageNet.
For \ars, our mask model $w$ is a simplified U-Net \citep{ronneberger2015u} (see \cref{appendix:mask-arch} for details).
For the noise budget, we find that a fixed budget split performs reliably, and so in all experiments we split by $\sigma_1 = \sigma_2 = \sqrt{2}\sigma$.

\textbf{Methods}
We compare to standard and strong static methods, design a baseline specifically for our masking approach, and evaluate the only sound input-dependent methods prior to \ars.
{\it Cohen et al.} is the standard approach to RS \citep{cohen2019certified}.
{\it  UniCR} \citep{hong2022unicr} learns the noise distribution for RS during training but is static during testing (while they propose an input-adaptive variant, it is not sound so we restrict our comparison to the training variant). We tune hyper-parameters, and perform a grid search for $\beta$ (the parameter of the noise distribution) to maximize certified accuracy. We find that $\beta=2$ (Gaussian), or $\beta=2.25$ (close to a Gaussian, with smaller tails), to perform best in high and low $(k, \sigma)$ values, respectively (see \cref{appendix:cifar10}) for details.
{\it Static Mask} is our baseline that learns a fixed mask during training that does not adapt during testing.
The mask is directly parameterized as pixel-wise weights that we multiply with the input and optimize jointly with the base classifier.
Relative to \ars in \cref{fig:main_architecture}, this removes $\cM_1$, sets $\sigma_2 = \sigma$, and makes $w(.) = W$ static parameters rather than an adaptive prediction.
{\it \citet{sukenik2021intriguing}} conditions the variance $\sigma$ for RS on the input and is therefore test-time adaptive. We use code provided by the authors as is.
Comparing \ars to the static baselines measures the value of test-time adaptivity, and comparing \ars to the variance adaptivity of Súkeník et al. measures the importance of more high-dimensional and expressive adaptation.

\subsection{CIFAR-10 Benchmark: Classification with Distractor Backgrounds}
\label{subsec:modified_cifar10}

Input dimension is a key challenge in $L_\infty$ certification using RS (see \S\ref{subsec:linf-2q-ars}).
We design our 20kBG benchmark to vary this parameter without affecting the task: we superimpose CIFAR-10 images onto a larger distractor background from the $20k$ background images dataset \cite{li2022bridging}. The backgrounds are split into train and test sets, and resized to $k \times k$ pixels (where $k \geq 32$).
The CIFAR-10 image (of dimensions $32 \times 32 \times 3$) is then placed at random along the edges of the background image to maximize spatial variation.
The spurious background increases the dimension ($= k \times k \times 3$) of the input when $k > 32$, making $L_\infty$ certification with RS more challenging, but is uninformative for the task of CIFAR-10 by construction.
Our mask model ($\cM_1$) needs to learn to ignore the background to reduce the effective dimension of the input.
For computational reasons, we run all certification results on a 200 sample subset of the CIFAR-10 test set. \cref{appendix:cifar10} shows extended results, details about hyperparameter tuning (\ref{appendix:cifar10-hp-tuning}), and results on larger test-sets (\ref{appendix:cifar10-test-set-size}).
We set the failure probability of the certification procedure to 0.05, use 100 samples to select the most probable class, and 50,000 samples for the Monte Carlo estimate of $\underline{p_+}$.

\cref{tab:results-cifar10-20kbg} summarizes the standard accuracy ($r=0$) of each approach on the different settings. We vary $\sigma$ in $\{0.25, 0.5, 1.0\}$ and $k$ in $\{32, 48, 64, 96\}$ to show the effect of noise levels and dimensionality on the accuracy of different approaches ($k=32$ corresponds to original CIFAR-10 images). \cref{fig:cifar-10-benchmark-plots} shows the certified test accuracy at different radii $r^\infty$ for \ars and all baselines we consider.


\addtolength{\tabcolsep}{-0.0em}
\begin{table}[t]
    \centering
    \scalebox{0.9}{
    \begin{tabular}{c|cc|ccc}
    \hline
    Setting/Approach & Cohen et al. & Static Mask & UniCR & S{\'u}ken{\'\i}k et al.$^\Delta$ & \ars$^\Delta$ \\
    \hline
    \hline
    $\sigma=0.25, \ k=32$ & 70.6 (1.1) & \textbf{73.9 (0.8)} & 69.8 (1.4) & 68.6 (2.8) & 72.6 (0.9) \\
    $\sigma=0.5, \ \ \ k=32$ & 63.6 (2.0) & \textbf{64.8(0.9)} & 62.8 (1.4) & 59.1 (1.6) & 64 (1.4) \\
    $\sigma=1.0, \ \ \ k=32$ & 48 (0.7) & 47.3 (1.3) & 46.1 (0.9) & 44.6 (1.0) & \textbf{49.3 (0.6)} \\
    \hline
    \hline
    $\sigma=0.25, \ k=48$ & 71.6 (1.0) & 72 (2.0) & 69.7 (0.8) & 65 (0.7) & \textbf{75.5 (1.0)} \\
    $\sigma=0.5, \ \ \ k=48$ & 64.3 (0.2) & 64.1 (1.6) & 60.3 (0.6) & 53.5 (1.8) & \textbf{66 (1.6)} \\
    $\sigma=1.0, \ \ \ k=48$ & 42.5 (2.1) & 45.1 (1.2) & 44.3 (0.2) & 34.1 (1.0) & \textbf{47.6 (2.0)} \\
    \hline
    \hline
    $\sigma=0.25, \ k=64$ & 71.6 (0.9) & 73.1 (3.2) & 67.8 (0.5) & 64.1 (0.8) & \textbf{77 (1.8)} \\
    $\sigma=0.5, \ \ \ k=64$ & 63 (1.6) & 62.5 (1.7) & 58.7 (0.2) & 45.1 (1.1) & \textbf{69.9 (1.2)}  \\
    $\sigma=1.0, \ \ \ k=64$ & 41.3 (1.8) & 40.0 (0.5) & 42.2 (0.6) & 26.5 (0.7) & \textbf{50.4 (2.5)}  \\
    \hline
    \hline
    $\sigma=0.25, \ k=96$ & 65.3 (1.6) & 71.8 (1.3) & 68.8 (1.8) & 45.5 (0.9) & \textbf{78.3 (2.2)} \\
    $\sigma=0.5, \ \ \ k=96$ & 56.6 (2.4) & 59.5 (1.4) & 59.7 (1.3) & 10.8 (2.3) & \textbf{69.8 (1.2)} \\
    $\sigma=1.0, \ \ \ k=96$ & 33.8 (3.8)  & 36.9 (0.5) & 41.3 (2.4) & 10.4 (0.3) & \textbf{56.3 (2.3)} \\
    \hline
    \end{tabular}
    }
\vspace{0.2cm}
\caption{\footnotesize%
\textbf{Standard Accuracy ($r = 0$) on CIFAR-10 (20kBG).}
Our 20kBG benchmark places CIFAR-10 images on larger background images.
We report the mean accuracy and standard deviation over three seeds.
\ars achieves higher accuracy across noise $\sigma$ and input dimension $k$ ($^\Delta$ indicates adaptivity). We provide results with more $\sigma$ levels in \cref{appendix:cifar10}.}
\label{tab:results-cifar10-20kbg}
\end{table}

\begin{figure*}[t!]
  \vspace{-2em}
  \centering
    \subfigure[$k=32, \sigma = 0.5$]{\includegraphics[width=0.3\textwidth]{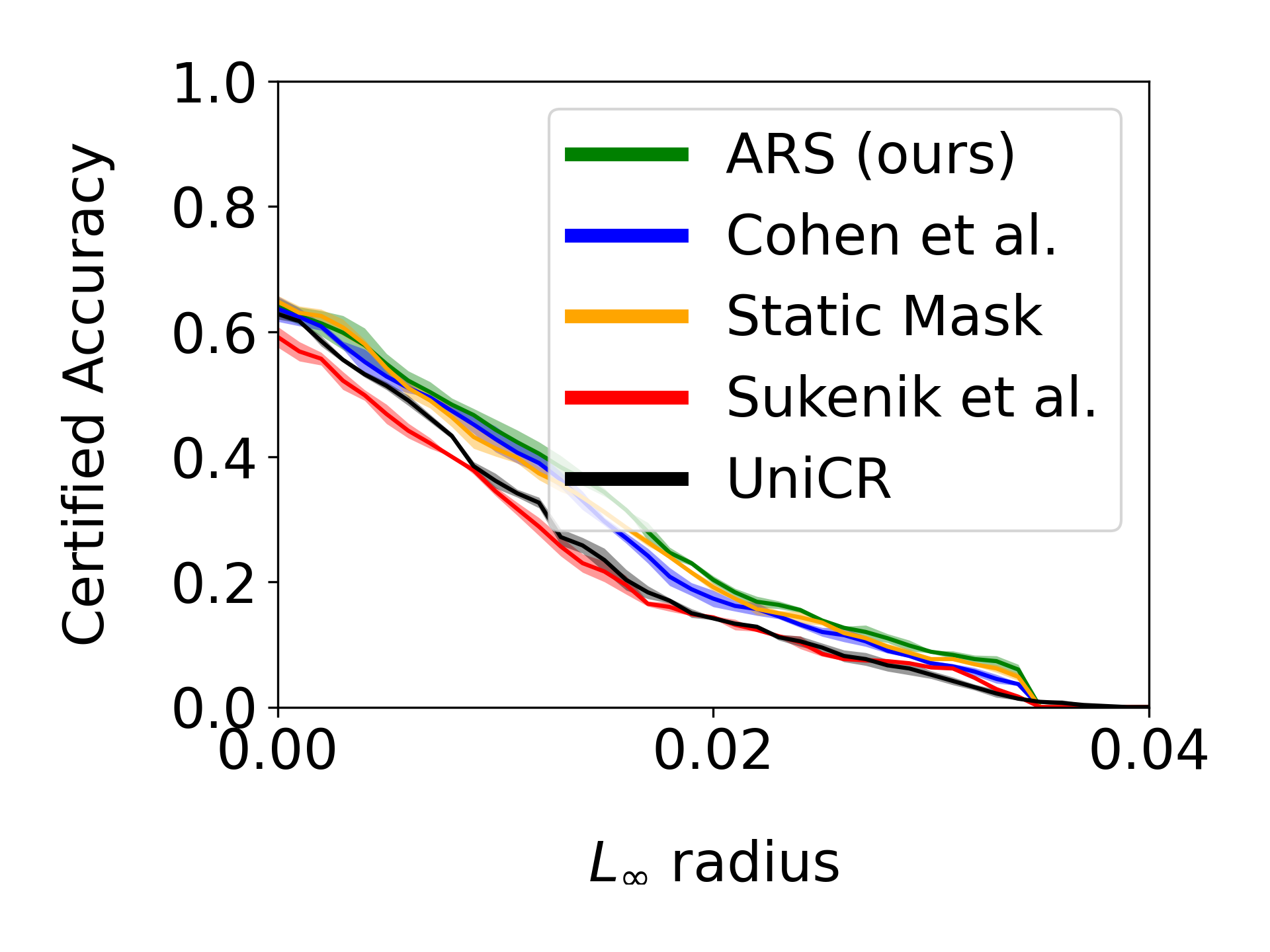}
    \label{fig:20kBG-k32-edges-sigma0.5}
    }
    \hfill
    \subfigure[$k=48, \sigma = 0.5$]{\includegraphics[width=0.3\textwidth]{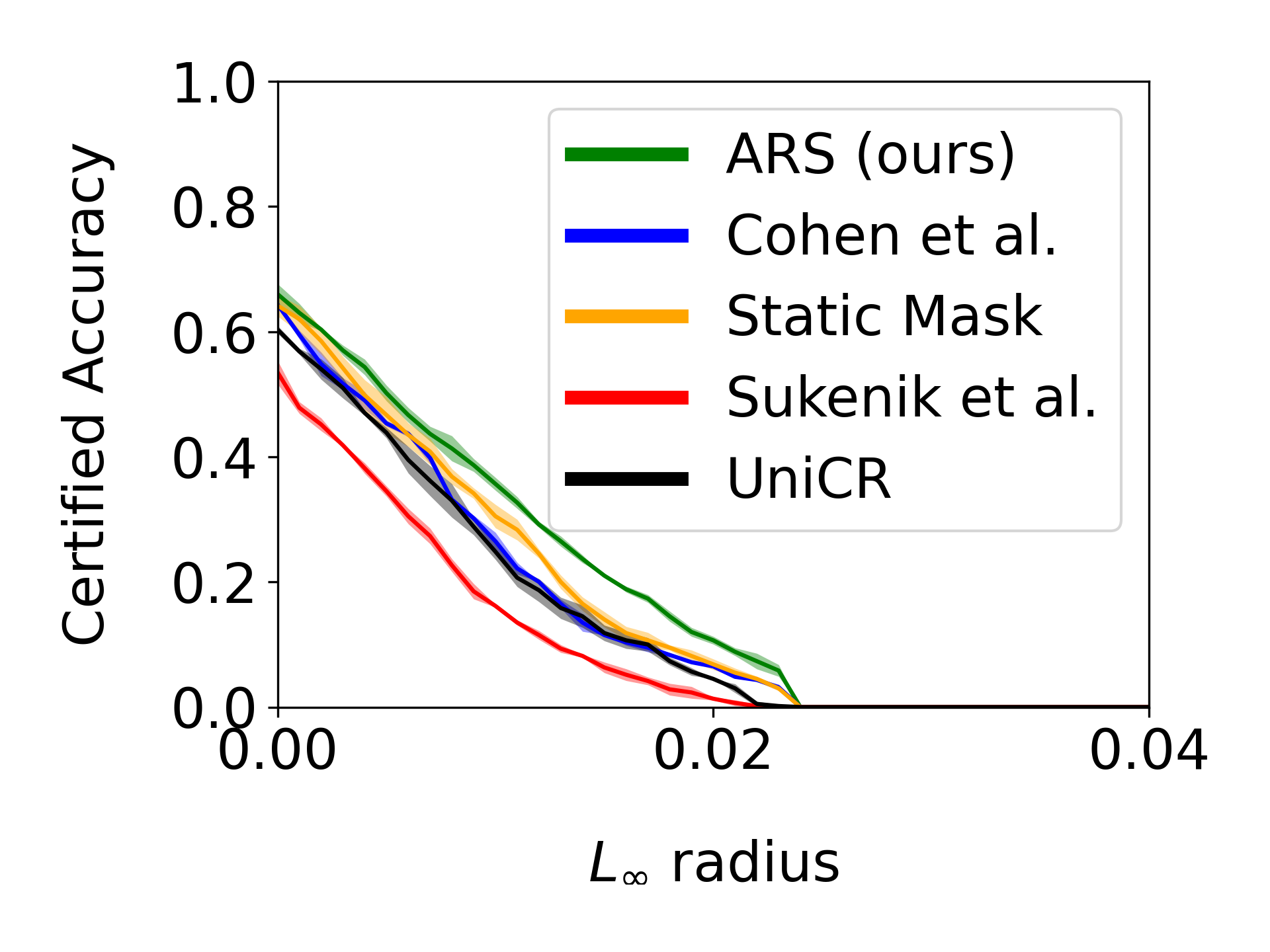}
    \label{fig:20kBG-k48-edges-sigma0.5}
    }
    \hfill
    \subfigure[$k=96, \sigma = 0.5$]{\includegraphics[width=0.3\textwidth]{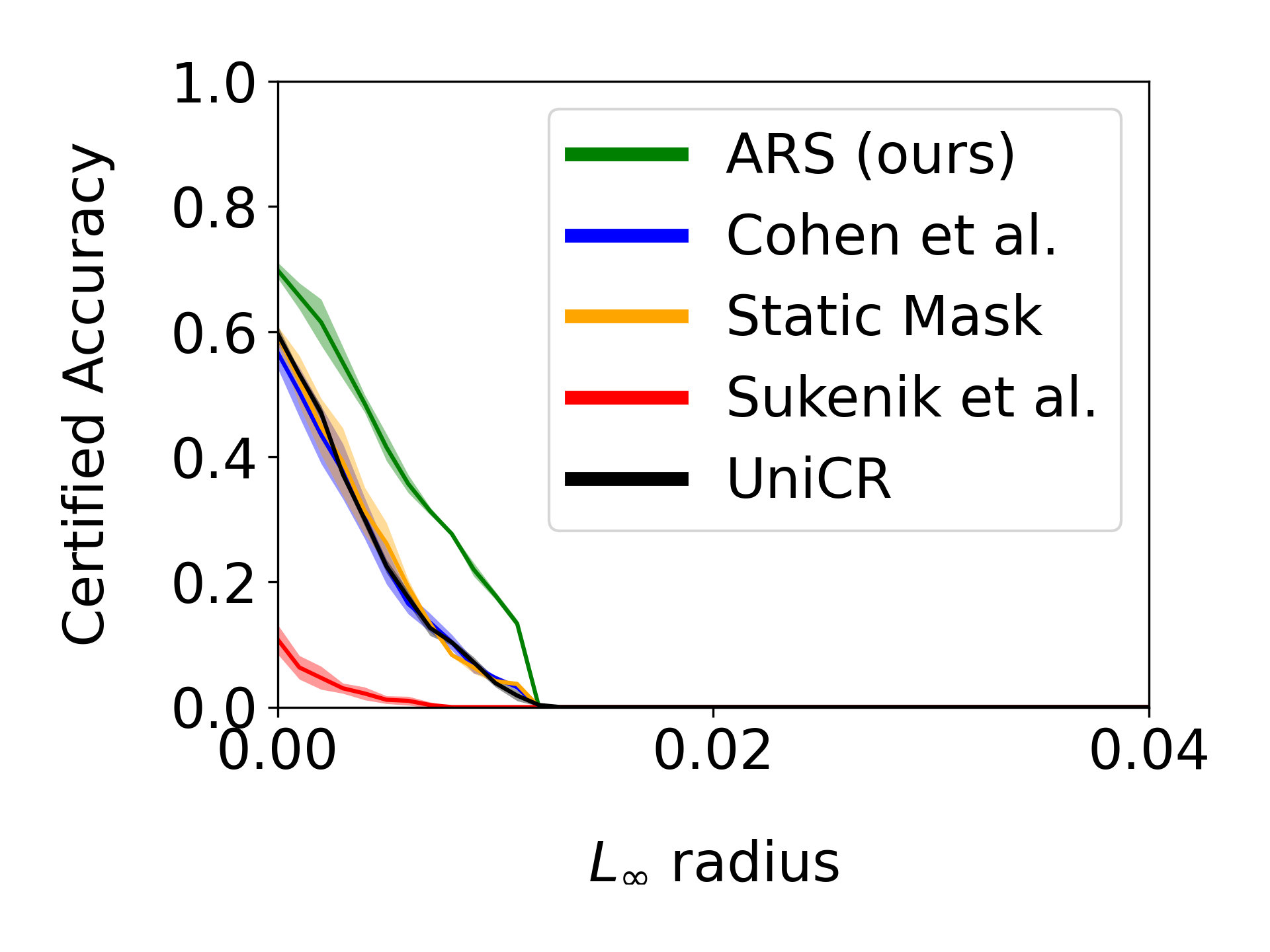}
    \label{fig:20kBG-k96-edges-sigma0.5}
    }
    
    \subfigure[$k=64, \sigma = 0.25$]{\includegraphics[width=0.3\textwidth]{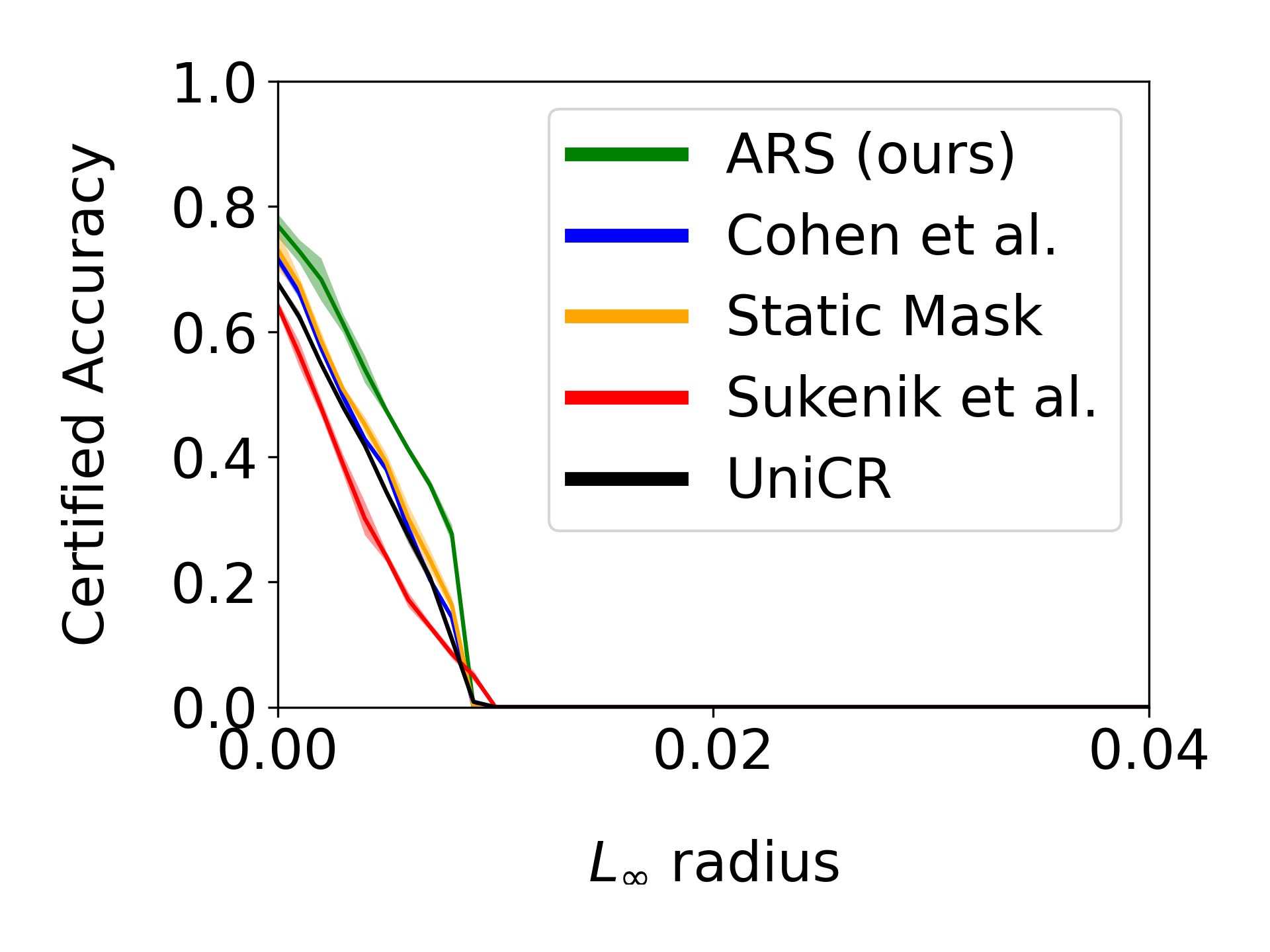}
    \label{fig:20kBG-k64-edges-sigma0.25}
    }
    \hfill
    \subfigure[$k=64, \sigma = 0.5$]{\includegraphics[width=0.3\textwidth]{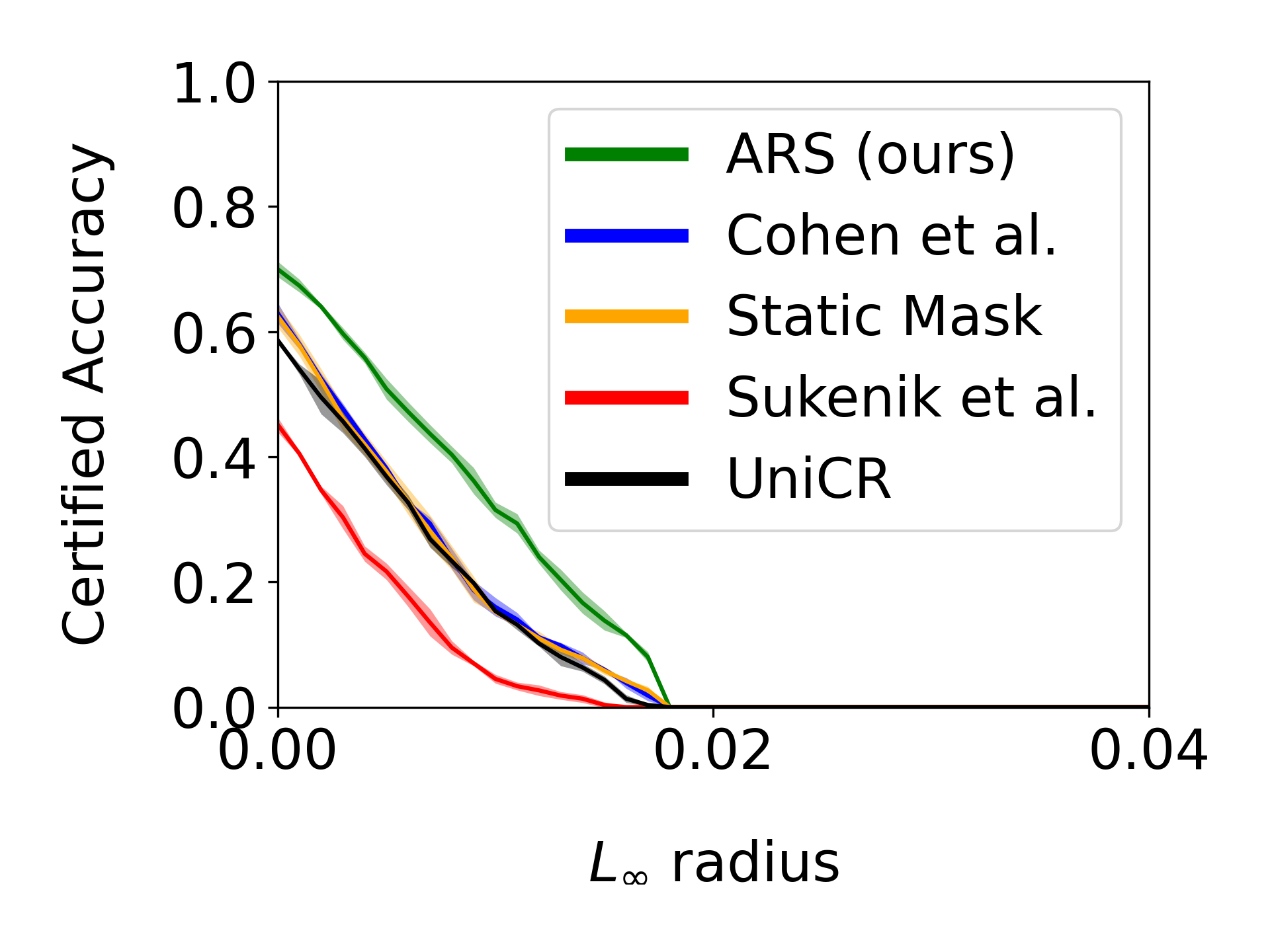}
    \label{fig:20kBG-k64-edges-sigma0.5}
    }
    \hfill
    \subfigure[$k=64, \sigma = 1.0$]{\includegraphics[width=0.3\textwidth]{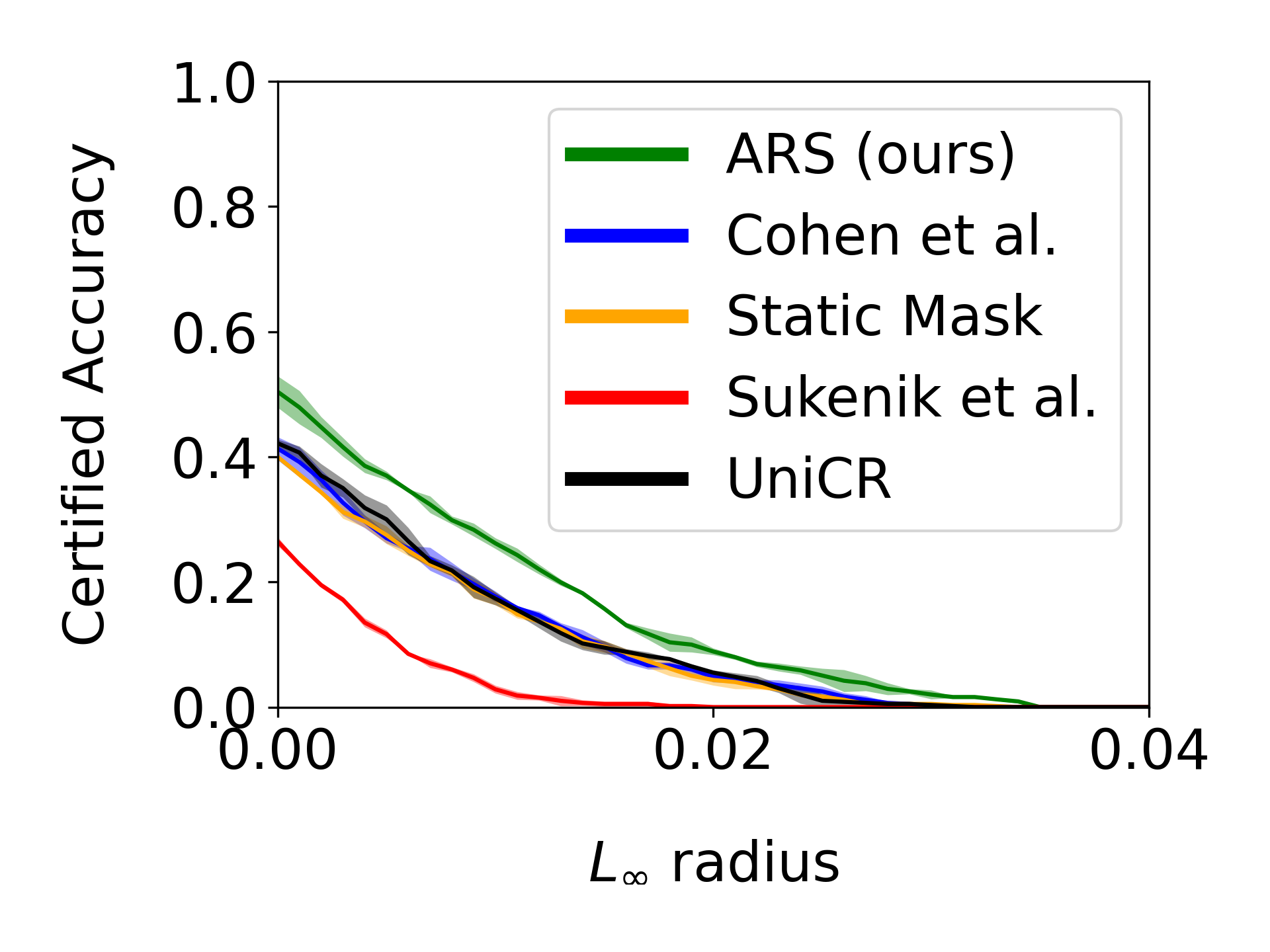}
    \label{fig:20kBG-k64-edges-sigma1.0}
    }
    
\vspace{-0.1cm}
\caption{\footnotesize%
\textbf{Certified Test Accuracy on CIFAR-10 (20kBG).}
(a)-(c) show the effect of dimensionality for (a) no background / $k = 32$, (b) $k = 48$, and (c) $k = 96$ for constant $\sigma = 0.5$.
(d)-(f) show the effect of noise for (d) $\sigma = 0.25$, (e) $\sigma = 0.5$ and (f) $\sigma = 1.0$ with dimensionality fixed to $k = 64$. Each line is the mean and the shaded interval covers +/- one standard deviation across seeds.  }
\label{fig:cifar-10-benchmark-plots}
\end{figure*}

        
\begin{figure}[htbp]
\centering
    \begin{tabular}{c|p{0.425\textwidth}|p{0.425\textwidth}} 
        \hline
        $k$ & \multicolumn{1}{c|}{Input Images} & \multicolumn{1}{c}{\ars Masks} \\
        \hline
        \vspace{-2.5mm}
        \raisebox{0.5cm}{32} & \includegraphics[width=\linewidth]{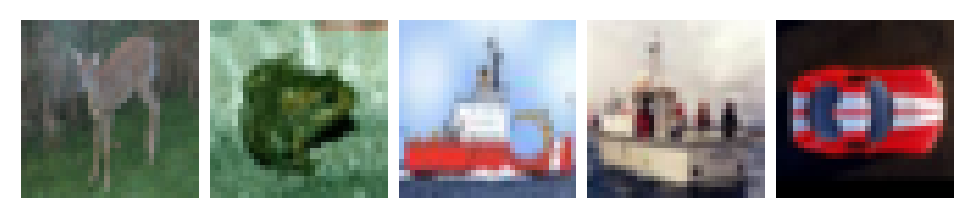} & 
        \includegraphics[width=\linewidth]{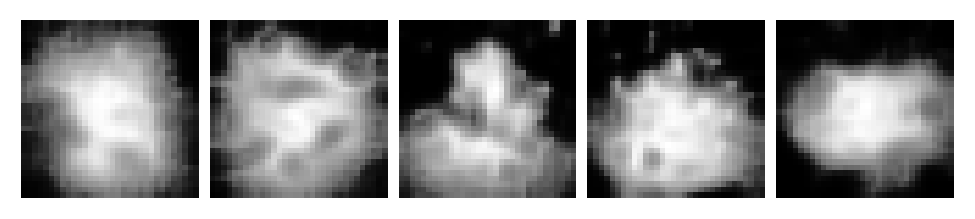} \\
        \vspace{-2.5mm}
        \raisebox{0.5cm}{64} & 
        \includegraphics[width=\linewidth]{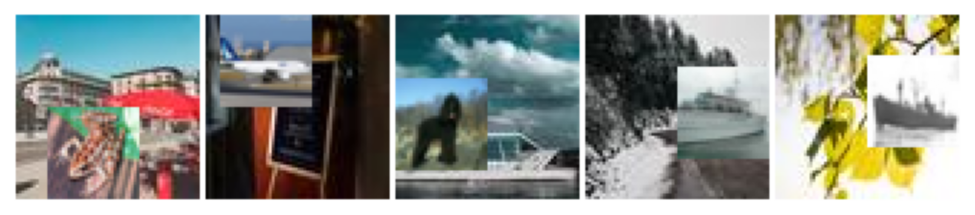} & 
        \includegraphics[width=\linewidth]{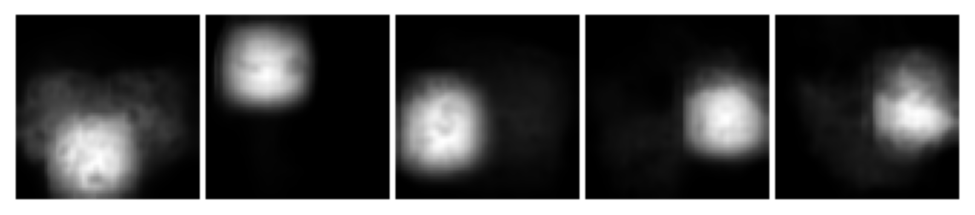} \\
        \vspace{-5mm}
        \raisebox{0.5cm}{96} & 
        \includegraphics[width=\linewidth]{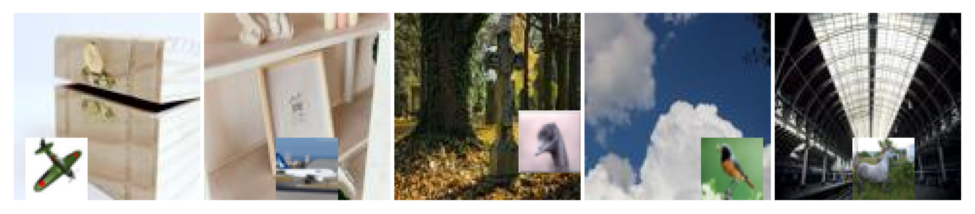} & 
        \includegraphics[width=\linewidth]{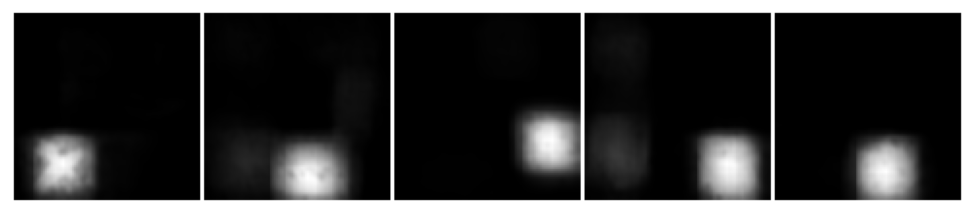} \\
    \label{tab:2q-masks}
    \end{tabular}
    \caption{\footnotesize%
    \textit{(left)} Original CIFAR-10 images superimposed on backgrounds for different $k$ (except $k=32$ which is no background), and \textit{(right)} their corresponding masks (grayscale) inferred by our mask model $w$. 
    All masks are for $\sigma=0.5$. 
    \cref{appendix:cifar10-masks} shows all the corresponding images across our multi-step architecture.}
    \label{fig:2q-masks}
\end{figure}

We make three observations. First, \ars outperforms all the baselines under distractor backgrounds ($k>32$). Static mask is slightly better at $k=32$, probably because CIFAR-10 images lack enough "irrelevant" information for \ars to discard. This is precisely why we introduce this benchmark, where we explicitly add such irrelevant information to input images. Indeed, at $k >32$, we observe that the standard accuracy of \ars improves, translating to an improved certified accuracy at all certification levels, since more accurate and confident predictions give a larger certified radius $r^\infty$.

Second, as we grow the input dimension $k$, the accuracy of \ars remains either stable or increases, whereas that of other baselines goes down, resulting in an increasing gap. For instance, at $\sigma = 1.0$, the gap between \ars and the best baseline is $1.3$\% points for $k=32$, $2.5$\% points for $k=48$, $8.2$\% points for $k=64$ and $15$\% points for $k=96$. As $k$ grows, the amount of relevant information (a $32 \times 32 \times 3$ CIFAR-10 image) remains the same, whereas the amount of spurious background information increases. \ars' mask is able to rule out spurious pixels, reducing the noise in the second step (\cref{fig:2q-masks}). Thanks to this masking, \ars is much less sensitive to increases in dimensionality.


Third, we observe that except $k=32$, \ars improves over all the baselines in low ($\sigma=0.25$) to high ($\sigma=1.0$) noise regimes. In fact, this trend continues to persist in higher noise regimes (\cref{appendix:cifar10}). Similar to previous observations, we notice that as we increase $\sigma$, other baselines's accuracy drops significantly whereas \ars accuracy drops much less, displaying higher noise tolerance.   

\ars training and inference requires additional computation.
To certify a single input ($k = 32$), \citet{cohen2019certified} takes ${\sim}12$ seconds while \ars takes ${\sim}26$ seconds (as measured on an NVIDIA A100 80Gb GPU).
This 2$\times$ overhead does however yield improved certified accuracy.


\subsection{CelebA Benchmark: Classification Without Spatial Alignment}
\label{subsec:celeba}

\begin{table}[t]
\centering
\begin{tabular}{l*{6}{c}r}
\hline
Setting/Approach  & Cohen et al. & Static Mask & \ars \\
\hline
CelebA, $\sigma=0.25$ &  94.3 (0.5) & 93.0 (0.8) & \textbf{97.0 (0.8)} \\
CelebA, $\sigma=0.5$ & 91.0 (0.8) & 91.7 (0.9) & \textbf{94.3 (0.9)} \\
CelebA, $\sigma=1.0$ & 83.3 (0.9) & 84.7 (2.6) & \textbf{91.0 (1.6)} \\
\hline
\end{tabular}
\caption{\footnotesize%
\textbf{Standard test accuracy ($r = 0$) on CelebA (unaligned and cropped)}.
\ars is equal or better. 
Adaptivity handles the higher spatial dimensions ($160 \times 160$) and variation of these inputs.
}
\label{tab:results-celeba}
\end{table}

\begin{figure}[ht!]
\centering
\vspace{-.5cm}
    \subfigure[$\sigma = 0.25$ ]
    {\includegraphics[width=0.325\textwidth]{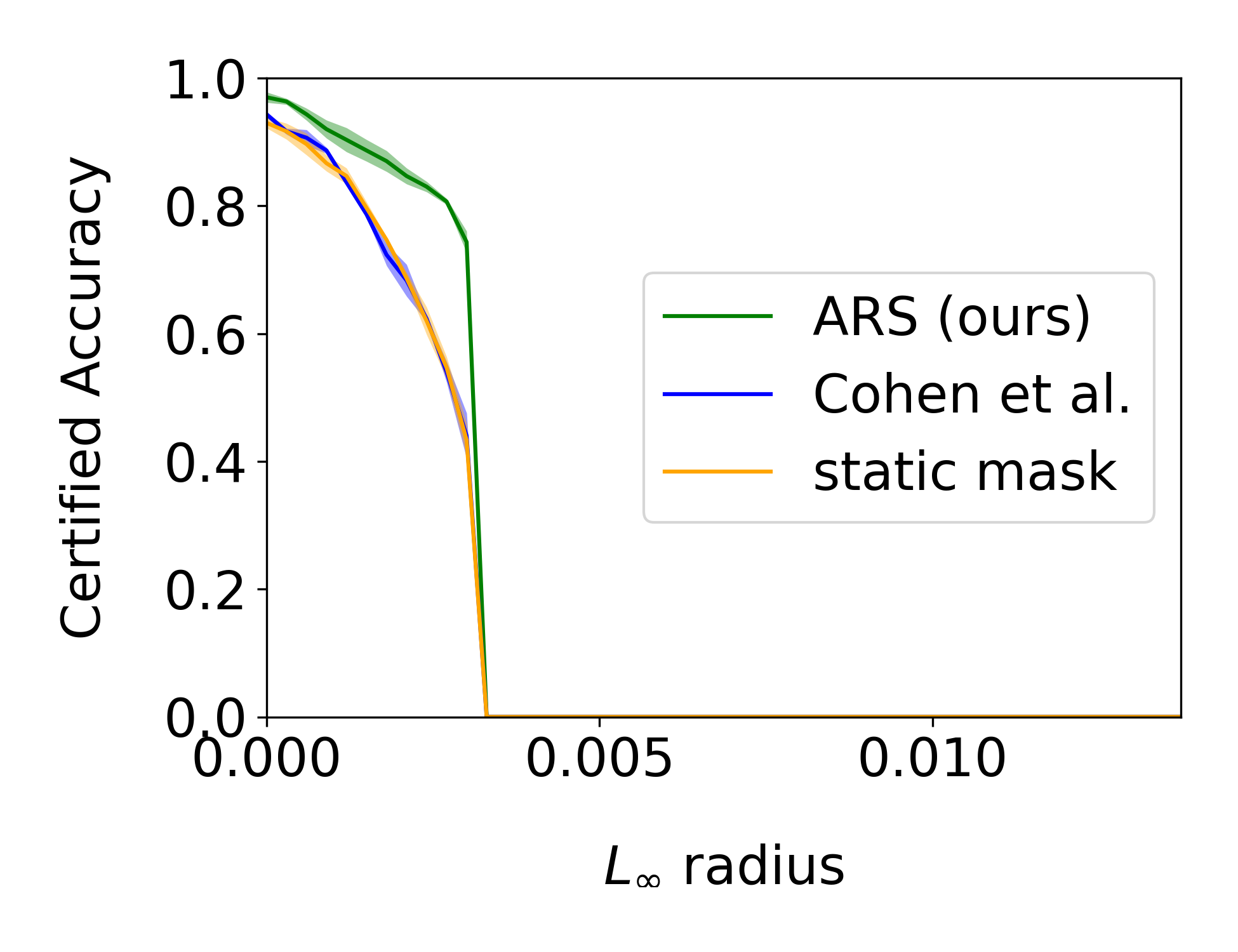}
    \label{fig:celeba-0.12}
    }
    \subfigure[$\sigma = 0.5$]{\includegraphics[width=0.3\textwidth]{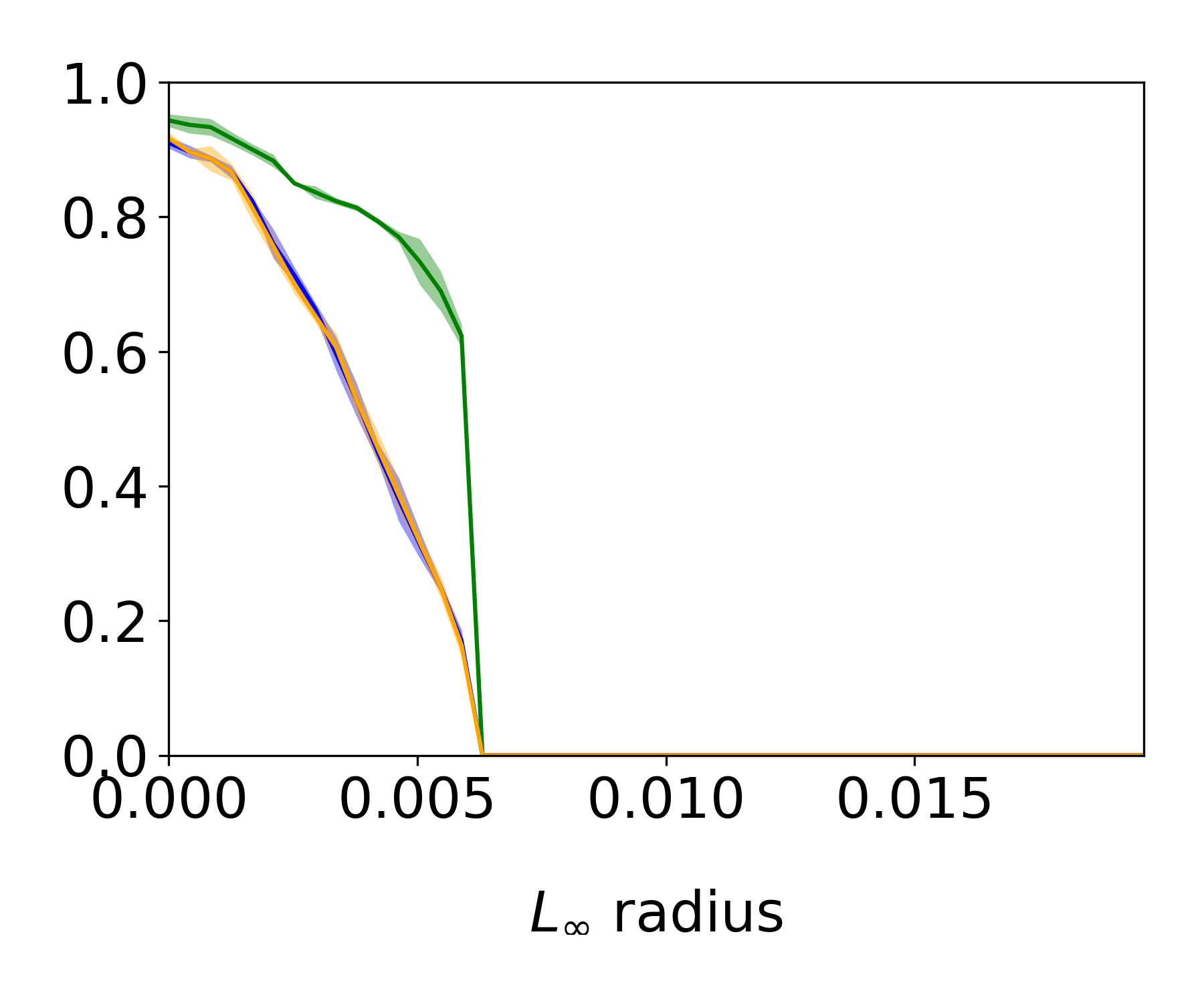}
    \label{fig:celeba-0.75}
    }
    \subfigure[$\sigma = 1.0$]
    {\includegraphics[width=0.3\textwidth]{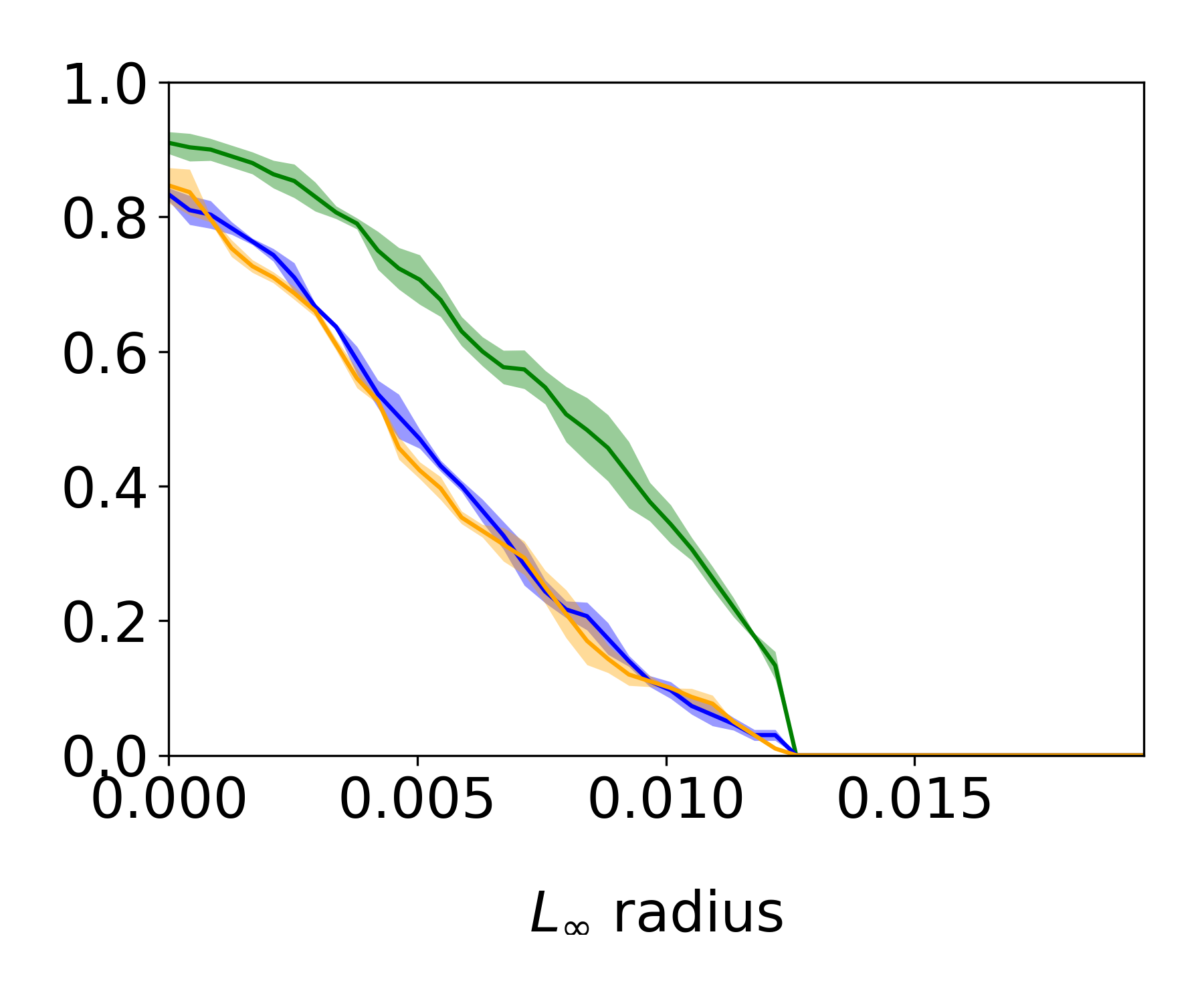}
    \label{fig:celeba-1.5}
    }
\vspace{-.1cm}
\caption{\footnotesize%
\textbf{Certified test accuracy on CelebA (unaligned and cropped).}
We evaluate static methods and \ars to measure the value of adaptivity.
Each line is the mean and the shading covers $\pm1$ standard deviation across three seeds.
Adaptivity helps at all noise levels.}
\label{fig:celeba-results}
\end{figure}

To evaluate \ars on a more realistic task with natural spatial variation, we use the CelebA face dataset in its unaligned version.
We focus on the ``mouth slightly open'' (label 21) binary classification task because mouth location and shape vary.
The input part relevant to this task is likely well-localized, which affords an opportunity for the mask model to reduce the effective input dimension.
The dataset consists of images with varied resolution, and meta-data about the position of different features, including the mouth.
To create a challenging benchmark, we randomly crop all images to $160\times160$ pixels, which creates spatial variation in the mouth's position.
The only crop constraint is that the mouth is $\geq 10$ pixels from the edge to ensure sufficient input to solve the task.
\cref{fig:celeba-data} shows example images from the test set, their respective masks from \ars, and the baseline static mask.

\begin{wrapfigure}{r}{0.5\textwidth}
\centering
\vspace{-\baselineskip}
\vspace{-\baselineskip}
\includegraphics[width=0.5\textwidth]{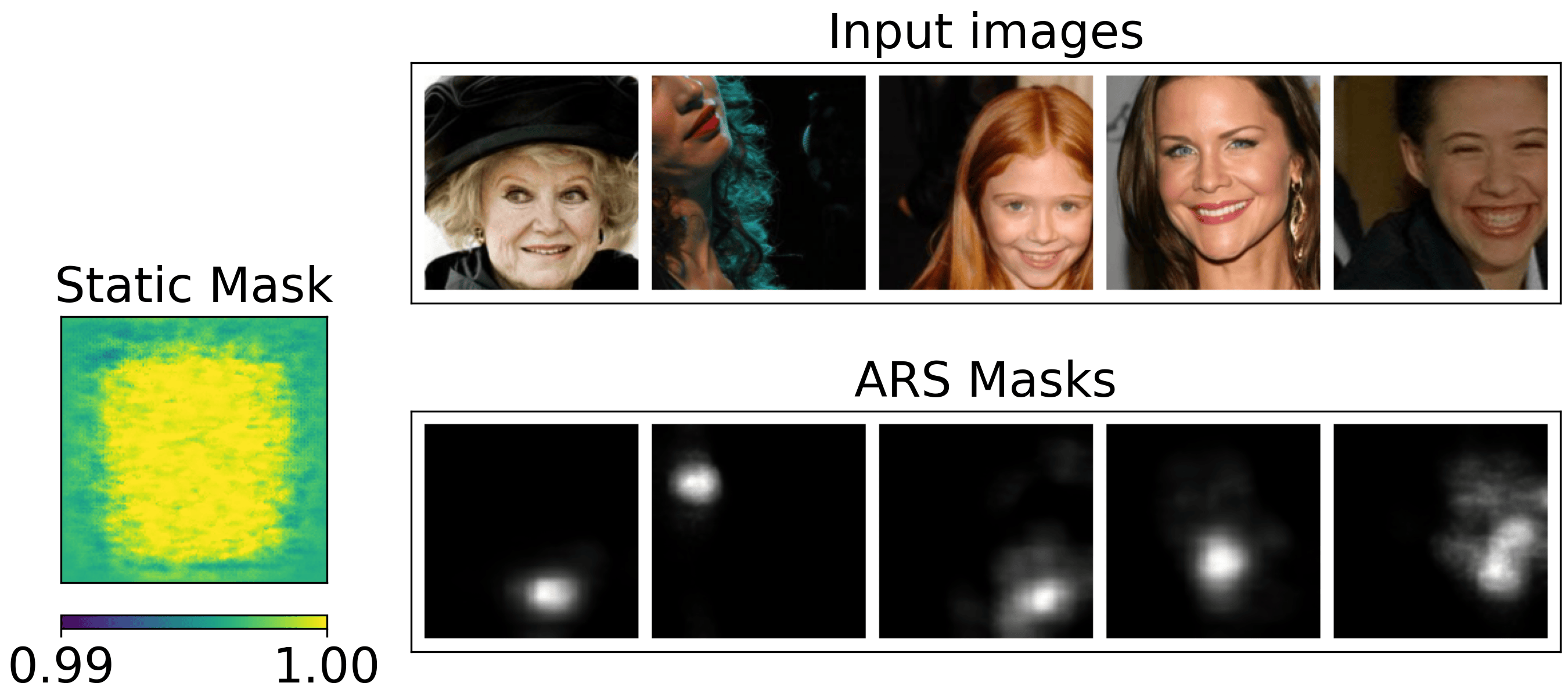}
\caption{\footnotesize%
\ars masks are localized and input specific.}
\label{fig:celeba-data}
\end{wrapfigure}

\cref{fig:celeba-results} shows the certified accuracy for \ars, \citet{cohen2019certified}, and static mask, for three levels of the noise $\sigma$.
First, both baselines perform very similarly.
We can see from \cref{fig:celeba-data} that the static mask is approximately identity (notice the $\geq 0.99$ scale), with only very slight dimming on the edges.
This is because the mouth is not centred in our benchmark, so there is no one-size-fits-all mask.
Second, \ars is able to predict a sparse mask that focuses on areas likely to have the mouth.
The mask adapts to each input at test time, which is what enables the sparsity without performance degradation.
Third, this sparse mask leads to a large noise reduction, enabling \ars to drastically improve both standard and certified accuracy.
For instance, with $\sigma=0.5$, \ars improves the standard accuracy from $91.0\%$ to $94.3\%$ (a $3.3$ point improvement), while the certified accuracy at $r^\infty=0.004$ jumps from $45.0\%$ to $79.3\%$ (a more than $30$ point improvement!).
At lower noise ($\sigma=0.25$) there is still an increase in standard accuracy from $94.3\%$ to $97.0\%$, and an increase in certified accuracy from $68.3\%$ to $84.7\%$ at $r^\infty=0.002$).
At larger noise ($\sigma=1.0$), \ars sees significant increases ($7.7\%$ points in standard accuracy, and from $21.3\%$ to $49.3\%$ in certified accuracy at $r^\infty=0.008$).

\subsection{ImageNet Benchmark: Classification on the Standard Large-Scale Dataset}
\label{subsec:imagenet}

To evaluate the scalability of \ars we experiment on ImageNet (without any modification) with $\sigma=0.25, 0.5, 1.0$.
For each noise level, we compare with \citet{cohen2019certified}, which we reproduce for this large-scale setting. We evaluate two versions of \ars: our regular setting (End-To-End); and a version that fixes the base classifier to the model trained as in \citet{cohen2019certified}, and only trains our mask model for $10$ epochs (Pretrain).
The certified accuracy is plotted in \cref{fig:imagenet-end2end} and the standard accuracy is reported in \cref{tab:results-imagenet}.

%
\begin{table}[t]
\centering
\begin{tabular}{l*{6}{c}r}
\hline
Setting/Approach  & Cohen et al. & ARS (Pretrain) & \ars (End-To-End) \\
\hline
ImageNet, $\sigma=0.25$ & 66.5 (0.009) & \textbf{67.4 (0.002)} &         65.7 (0.006) \\
ImageNet, $\sigma=0.5$  & 57.2 (0.009) &         56.0 (0.003)  & \textbf{57.4 (0.010)} \\
ImageNet, $\sigma=1.0$  & 43.6 (0.005) &         43.8 (0.002)  & \textbf{44.5 (0.010)} \\
\hline
\end{tabular}
\caption{\footnotesize%
\textbf{Standard test accuracy ($r = 0$) on ImageNet}.
ARS maintains standard accuracy.
}
\label{tab:results-imagenet}
\end{table}
%


When only training the mask model, certified accuracy remains close to that of \citet{cohen2019certified} at all radii and noise levels.
\ars trained end-to-end improves both standard and certified accuracy.
The standard accuracy (at $r=0$) increases from $57.2\%$ to $57.4\%$ and from $43.6\%$ to $44.5\%$ when $\sigma=0.5$ and $\sigma=1$, respectively.
For $\sigma=0.25$, standard accuracy for \ars is close but slightly lower than \citet{cohen2019certified}, while the pretrained \ars outperforms \citet{cohen2019certified} from $66.5\%$ to $67.4\%$. \cref{appendix:imagenet} discusses other \ars improvements without certification.

Certified accuracy increases at larger $\sigma$. For instance
at $\sigma=0.5$, ARS improves certified accuracy at $r^{\infty}=0.001$ from $48.9\%$ to $50.5\%$. At larger noise $\sigma=1.0$, ARS improves certified accuracy at $r^{\infty}=0.005$ from $21.7\%$ to $23.1\%$.
%
This shows that \ars' adaptivity generalizes outside of the specialized benchmarks we designed, and
can scale to large datasets and complex classification tasks.

\begin{figure}[ht!]
\vspace{-.1cm}
  \centering
    \subfigure[$\sigma = 0.25$ ]
    {\includegraphics[width=0.31\textwidth]{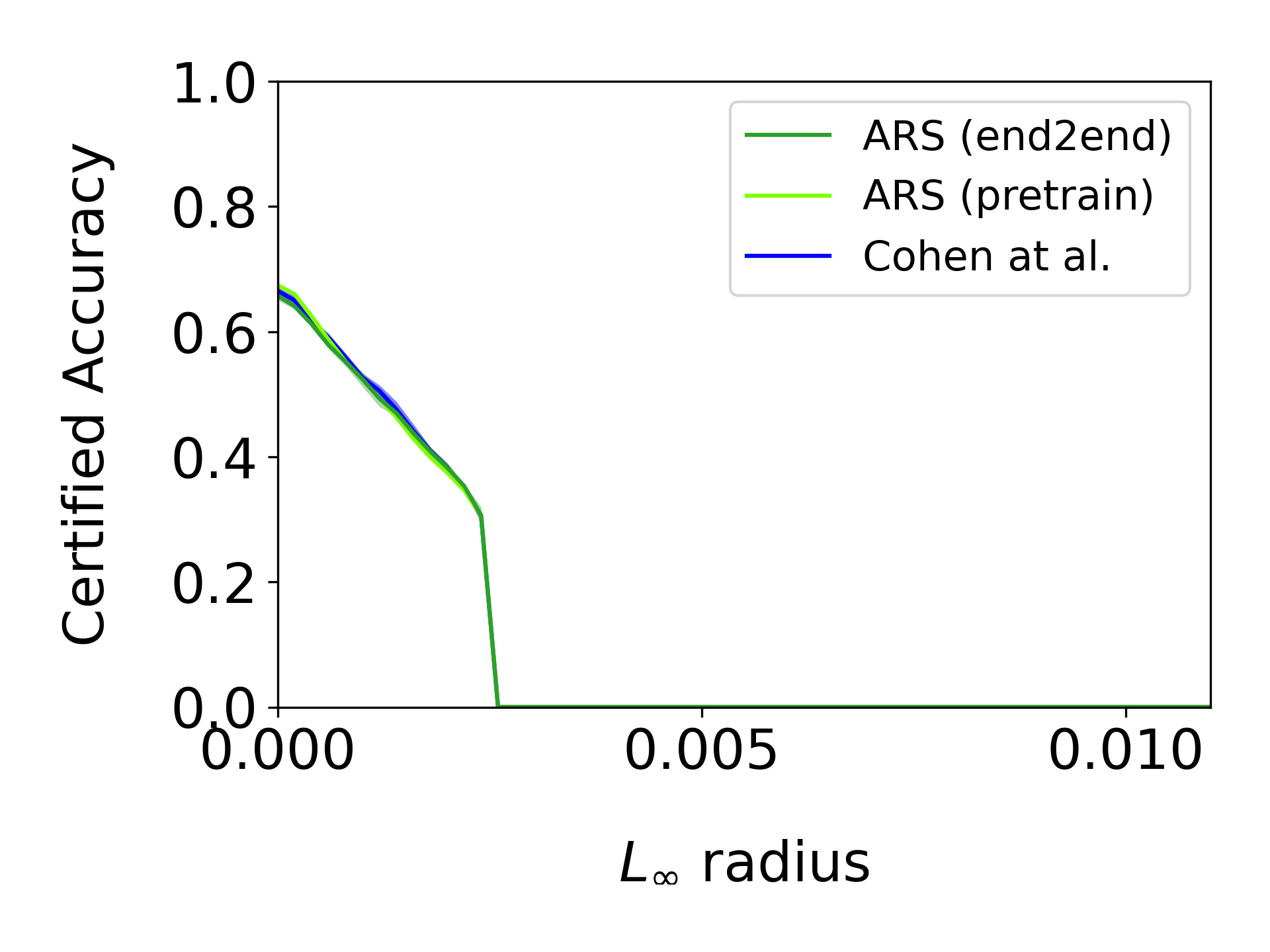}
    \label{fig:imagenet-sig025}
    }
    \subfigure[$\sigma = 0.5$]{\includegraphics[width=0.31\textwidth]{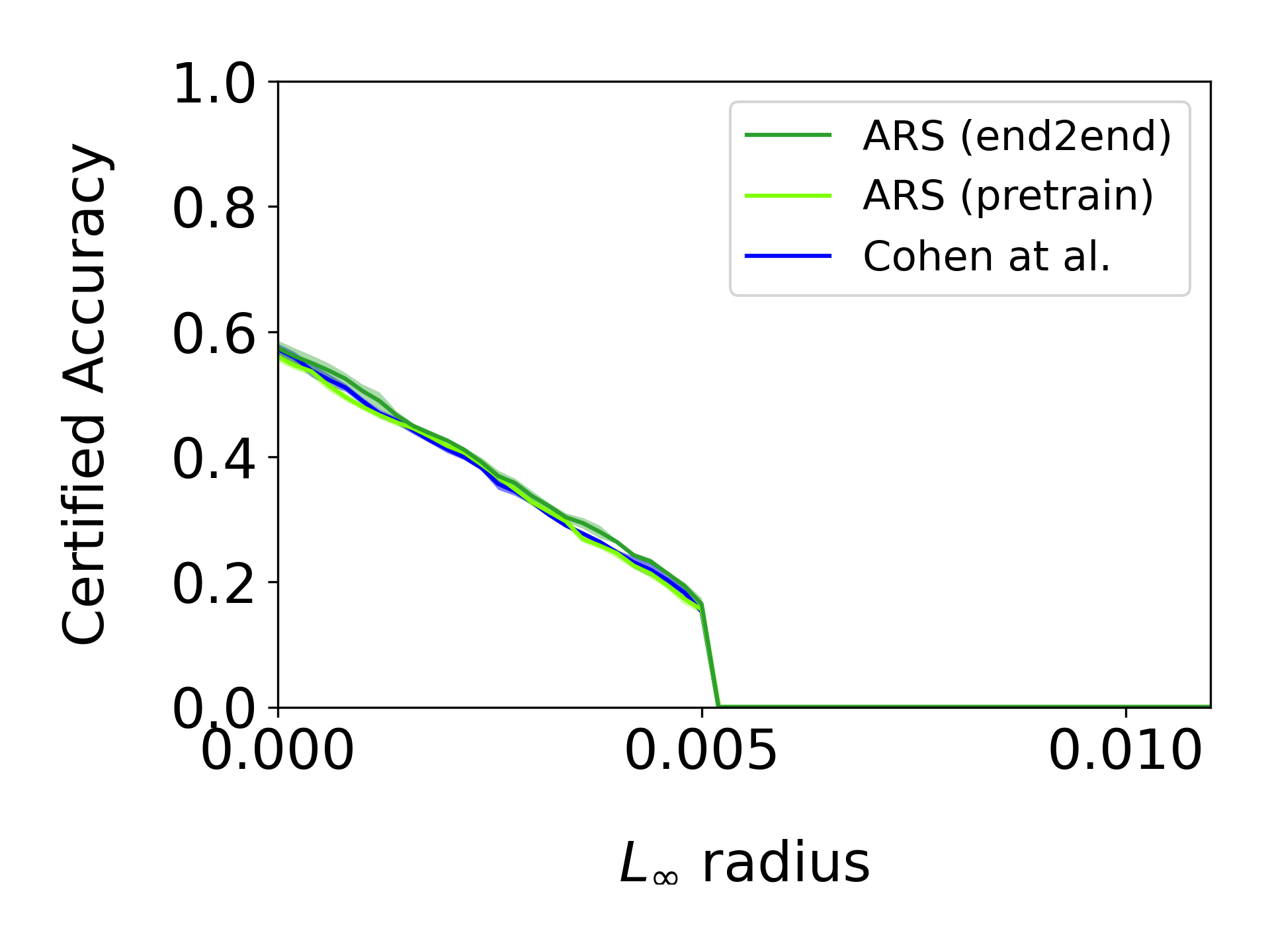}
    \label{fig:imagenet-sig05}
    }
    \subfigure[$\sigma = 1.0$]
    {\includegraphics[width=0.31\textwidth]{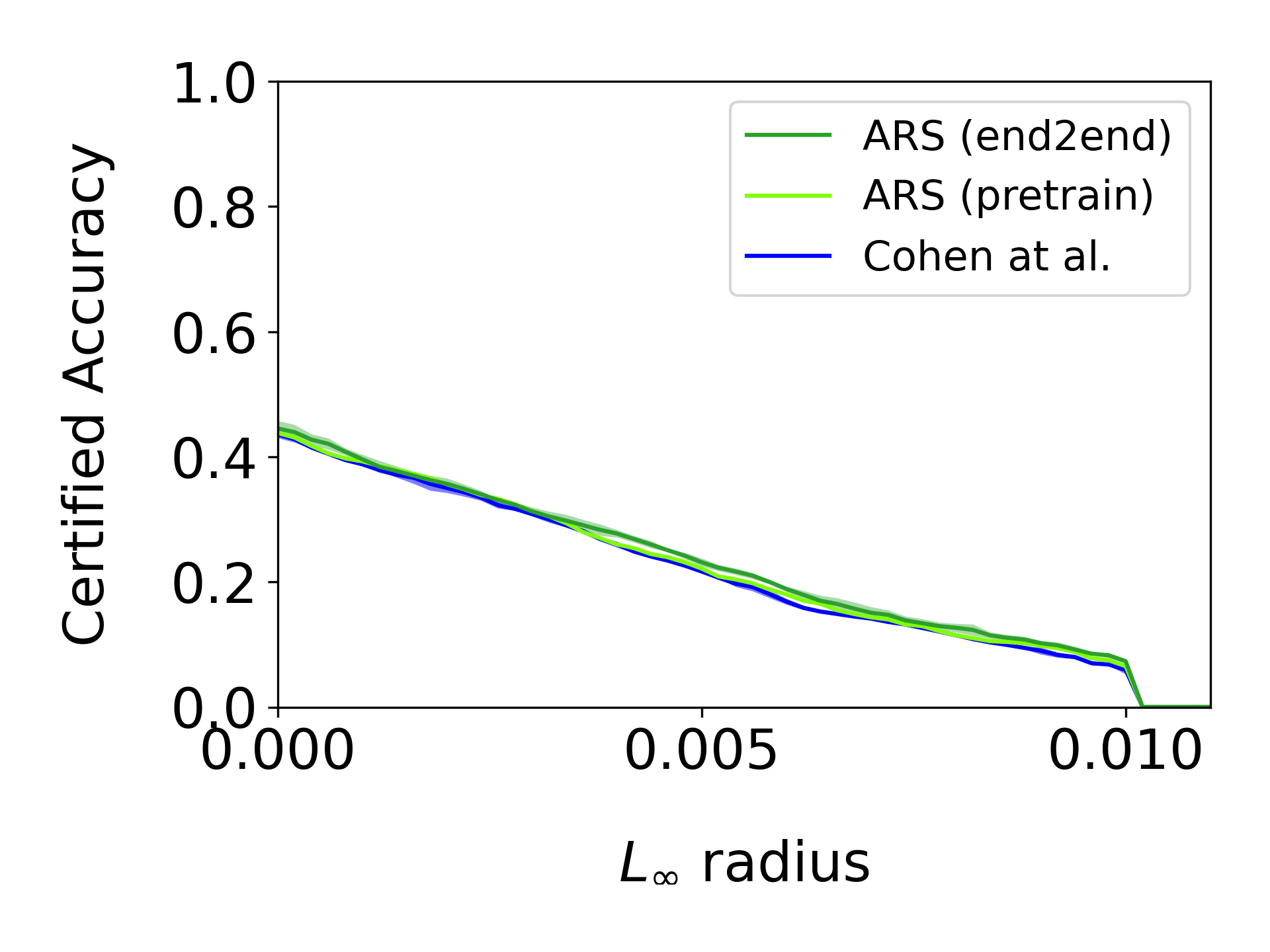}
    \label{fig:imagenet-sig1}
    }
\vspace{-.1cm}
\caption{\footnotesize%
\textbf{Certified test accuracy on ImageNet for $\sigma=0.25, 0.5, 1$}.
We plot the mean and the shading covers $\pm1$ standard deviation for three seeds.
ARS is equal or better than non-adaptive RS (Cohen et al.) at large scale.
}
\label{fig:imagenet-end2end}
\end{figure}

\section{Discussion}
\label{sec:relwork}

{\bf Limitations:} The multi-step adaptivity of \ars improves certification at the cost of increased model size and computation for RS.
This impacts both training and testing computation, and is especially costly in the context of RS due to the Monte Carlo estimation of the model's expected predictions (over several forward passes at inference time).
While we empirically show improvement by \ars, it would be interesting and important to investigate how it combines with other RS improvements such as adversarial training \citep{salman2019provably}, consistency regularization \citep{jeong2020consistency}, higher order certification \citep{mohapatra2020higher}, double sampling \citep{li2022double}, and denoising by diffusion \citep{carlini2022certified}.
Lastly, our adaptive masking technique provides improved certificates for the $L_\infty$ norm, but does not have the same effect for other norms such as $L_2$ (see remarks in \S\ref{subsec:linf-2q-ars}).
It is plausible that ARS' adaptivity can lead to improvements under alternative norms, by leveraging different DP mechanisms and updates. We leave this exploration for future work.

{\bf Implications:} Revisiting heuristic adaptive defences (as surveyed in \cite{croce2022evaluating}) through the lens of \ars could help improve the empirical performance of provable defences.
\ars may require extensions, but could eventually enable the formal analysis of input purification (e.g., \cite{song2018sebastian,nie2022diffusion,pmlr-v139-yoon21a}), or leverage DP-SGD \citep{abadi2016deep} to analyze defences that update by test-time optimization \citep{Alfarra_Perez_Thabet_Bibi_Torr_Ghanem_2022,hwang2022aid,9710949}.
Going further, one could leverage the vast DP literature to extend \ars, enabling fully-adaptive variance defences inspired by \citet{alfarra2022data} by leveraging privacy odometers \citep{rogers2016privacy,lecuyer2021practical,whitehouse2023fully}.

To conclude: we introduced Adaptive Randomized Smoothing (\ars) to reconnect RS with DP theory, to propose a new two-step defence for deep image classification, and to rigorously analyze such adaptive defences that condition on inputs at test time.
This framework opens promising avenues for designing models that are adaptively and soundly robust with provable guarantees about their updates on natural and adversarial inputs.

\begin{ack}
We thank Krishnamurthy Dvijotham and Ian Goodfellow for reviewing and providing feedback during the Google DeepMind publication process.
We thank Motasem Alfarra for discussing variations on input-dependent variance for randomized smoothing.
We are grateful for the support of the Natural Sciences and Engineering Research Council of Canada (NSERC) [reference number RGPIN-2022-04469], as well as a Google Research Scholar award. This research was enabled by computational support provided by the Digital Research Alliance of Canada (alliancecan.ca).
\end{ack}

\bibliography{neurips_2024}
\bibliographystyle{plainnat}

\newpage
\appendix
\onecolumn
\section{\texorpdfstring{$\boldsymbol{f}$}{}-DP Background}
\label{appendix:f-dp}

For this background we use the DP mechanism terminology.
A mechanism $\cM(.)$ is a randomized computation taking an input and returning one sample from the distribution of outputs for this input: $m \sim \cM(x)$ with input $x$ and output $m$.
In \ars, each model step corresponds to an $f$-DP mechanism.

{\bf Definitions.} \citet{dong2019gaussian,dong2022gaussian} formalize  privacy as a bound on the power of hypothesis tests.
Consider any two neighbouring inputs: in the most common DP applications, $X, X'$ are two databases differing in one element; in the case of \ars against an $L_p$ adversary, $X, X' \in \mR^d$ are any two inputs such that $X - X' \in B_p(r)$.
Intuitively a randomized mechanism $\cM$ is private if, for any such neighbouring inputs, the distributions $\cM(X)$ and $\cM(X')$ are hard to distinguish. That is, by looking at a sample output from mechanism $\cM$, it is hard to guess whether $\cM$ ran on $X$ or on $X'$.

In $f$-DP \citep{dong2019gaussian,dong2022gaussian} ``hard to distinguish'' is defined by a hypothesis testing problem:
\[
\cH_0: \textrm{the input was} \ X \hspace{1cm} \textrm{vs.} \hspace{1cm} \cH_1: \textrm{the input was} \ X'.
\]
The output $m \sim \cM$ serves as input to a rejection rule $\phi(.) \in [0, 1]$ (note: to preserve typical notations, lower-case $\phi$ is the rejection rule, and upper-case $\Phi$ is the standard normal CDF).
The rejection rule rejects $\cH_0$ with probability $\phi(m)$, so $\phi(m)=0$ predicts that $X$ was the input, and $\phi(m)=1$ that $X'$ was.

Given a rejection rule $\phi$, we define its Type I error $\alpha_\phi$ and type II error (or one minus the power of the rule) $\beta_\phi$ as:
\[
\alpha_\phi \triangleq \mE_{m \sim \cM(X)}[\phi(m)]
\hspace{1cm}
\beta_\phi \triangleq 1-\mE_{m \sim \cM(X')}[\phi(m)]
\]
Intuitively, $\alpha_\phi$ is the expected amount of rejection of $\cH_0$ when the hypothesis is correct ($X$ was in input, but we think $X'$ was), also called the level of the rejection rule.
On the flip side, $\beta_\phi$ is the expected amount of non-rejection under $\cH_1$ ($X'$ was in input, but we think $X$ was).
$1-\beta_\phi$ is called the power of the rejection rule.

For any two distributions $\cM(X)$ and $\cM(X')$, we define the trade-off function $T\big(\cM(X), \cM(X')\big): [0, 1] \rightarrow [0, 1]$ that quantifies the minimum amount of type II error achievable at each value of type I error by any (measurable) rule; or equivalently the maximum power of any rule at each level:
\[
\forall \alpha \in [0,1], \ T\big(\cM(X), \cM(X')\big)(\alpha) = \inf_{\phi} \{\beta_\phi: \alpha_\phi \leq \alpha \}
\]

Now we define $f$-DP: for any trade-off function $f$, a mechanism $\cM$ is $f$-DP if, for any neighbouring inputs $X, X'$,
\[
T\big(\cM(X), \cM(X')\big) \geq f
\]

These definitions are the main technical tools we need to prove \cref{prop:fdp-robustness}. \cref{prop:fdp-rs} only adds the formula for $f$ for the Gaussian mechanism, given in \cref{subsec:background}.

{\bf Composition.} All other results rely on the above plus the adaptive composition of $f$-DP mechanisms. Such composition is key to all DP theory and algorithm design.
Consider a sequence of $N$ mechanisms $\cM_i$, such that each mechanism is $f_i$-DP with regards to $X, X'$, and depends on the neighbouring input as well as the output of all previous mechanisms.
More formally, under $\cH_0$ we have $m_i \sim \cM_i(X, m_{<i})$, and under $\cH_1$ we have $m_i \sim \cM_i(X, m_{<i})$, where $m_{<i} \triangleq (m_1, \ldots, m_{i-1})$.
Concretely, each $\cM_i$ is $f_i$-DP with regards to $X, X'$ for $f_i$ known in advance, but the actual computation made by $\cM_i$ can depend on $m_{<i}$ (as long as it is $f_i$-DP).
We leverage this adaptivity to lower the noise variance in our method's second step while keeping $f_2$ fixed (see \S\ref{sec:approach}). 

We need two more results to define the composition of a sequence of mechanisms.
First, Proposition 2.2 in \citet{dong2019gaussian, dong2022gaussian} shows that for any trade-off function $f$, there exist two distributions $P_f, Q_f$ such that $T(P_f, Q_f) = f$.
Call any such pair of distributions a representative pair of $f$.
Second, we define the composition operator $\otimes$ by $f \otimes g = T(P_f \times P_g, Q_f \times Q_g)$.
That is, the composition operator between two trade-off functions is the trade-off function between the product distributions on their representative pair.
Then Theorem 3.2 in \citet{dong2019gaussian, dong2022gaussian} shows that:
\[
\cM: X \rightarrow (\cM_1(X), \ldots, \cM_N(X, y_{<i})) \ \textrm{is} \ f_1 \otimes \ldots \otimes f_N\textrm{-DP} .
\]
Concretely, the mechanism that returns the sequence of results for all compute adaptive $\cM_N$ is $f_1 \otimes \ldots \otimes f_N$-DP.
The previous definitions, as well as this composition result, is what we use to prove \cref{prop:ars} and \cref{prop:ars-linf}.
\section{Proofs}
\label{appendix:proofs}

\begin{numberedprop}[\ref{prop:fdp-robustness} ($\boldsymbol{f}$-DP Robustness)]
\label{prop:fdp-robustness-proof}
    Let $\cM: \cX \rightarrow \cY$ be $f$-DP for $B_p(r)$ neighbourhoods, and let $M_S: X \rightarrow \arg\max_{y \in \cY} \mP(\cM(X) = y)$ be the associated smooth classifier.
    Let $y_+ \triangleq M_S(X)$ be the prediction on input $X$, and let $\underline{p_+}, \overline{p_-} \in [0, 1]$ be such that $\mP(\cM(X) = y_+) \geq \underline{p_+} \geq \overline{p_-} \geq \max_{y_- \neq y_+} \mP(\cM(X) = y_-)$. Then:
    \[
    f(1-\underline{p_+}) \geq 1-f(\overline{p_-}) \Rightarrow \forall e \in B_p(r), \ M_S(X+e) = y_+
    \]
\end{numberedprop}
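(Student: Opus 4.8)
The plan is to fix an arbitrary perturbation $e \in B_p(r)$ and show directly that $y_+$ remains a most probable output of $\cM$ on $X+e$. Since the $B_p(r)$ neighbouring relation is symmetric, $X$ and $X+e$ are neighbours, so $f$-DP gives $T\big(\cM(X), \cM(X+e)\big) \geq f$ pointwise on $[0,1]$. The whole argument then reduces to choosing two convenient rejection rules for the hypothesis test $\cH_0:\text{input}=X$ versus $\cH_1:\text{input}=X+e$, and reading off what $f$-DP says about their Type I and Type II errors.

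First, I would lower bound $\mP(\cM(X+e)=y_+)$. Consider the rule $\phi(m) = \1[m \neq y_+]$, a valid $[0,1]$-valued rejection rule. Its Type I error is $\alpha_\phi = \mP(\cM(X) \neq y_+) \leq 1 - \underline{p_+}$, and its Type II error is $\beta_\phi = \mP(\cM(X+e)=y_+)$. Since $\phi$ is feasible at level $\alpha_\phi$ in the infimum defining $T$, we get $\beta_\phi \geq T\big(\cM(X),\cM(X+e)\big)(\alpha_\phi) \geq f(\alpha_\phi)$, and because every trade-off function is non-increasing and $\alpha_\phi \leq 1-\underline{p_+}$, this yields $\mP(\cM(X+e)=y_+) \geq f(1-\underline{p_+})$.

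Second, and symmetrically, I would upper bound $\mP(\cM(X+e)=y_-)$ for each $y_- \neq y_+$ using the rule $\psi(m) = \1[m = y_-]$, whose Type I error is $\alpha_\psi = \mP(\cM(X)=y_-) \leq \overline{p_-}$ and whose Type II error is $\beta_\psi = 1 - \mP(\cM(X+e)=y_-)$; the same reasoning gives $1 - \mP(\cM(X+e)=y_-) \geq f(\overline{p_-})$, i.e.\ $\mP(\cM(X+e)=y_-) \leq 1 - f(\overline{p_-})$. Chaining the two bounds through the hypothesis $f(1-\underline{p_+}) \geq 1 - f(\overline{p_-})$ gives $\mP(\cM(X+e)=y_+) \geq \mP(\cM(X+e)=y_-)$ for every $y_- \neq y_+$, hence $M_S(X+e) = y_+$; since $e \in B_p(r)$ was arbitrary, this finishes the proof.

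The step needing the most care is not any single inequality but the bookkeeping around the trade-off function: checking that the chosen indicator rules are admissible rejection rules, correctly fixing which of $\cM(X)$ and $\cM(X+e)$ plays the null, and invoking monotonicity of $f$ in the right direction (to pass from $f(\alpha_\phi)$ to $f(1-\underline{p_+})$, and likewise for $\psi$). A minor point worth one sentence is the boundary case of the implication, where the chained inequality is an equality and $y_+$ is still a maximiser of $\mP(\cM(X+e)=\cdot)$, which is exactly what the $\arg\max$ in $M_S$ returns.
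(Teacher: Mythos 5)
Your proposal is correct and follows essentially the same route as the paper's proof: the same two indicator rejection rules $\1[m=y_-]$ and $\1[m\neq y_+]$, the same use of the trade-off function bound together with monotonicity of $f$, and the same chaining of the two resulting inequalities through the hypothesis. The only differences are presentational (order of the two bounds, and your explicit remark about the symmetry of the neighbouring relation and the tie-breaking case).
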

\begin{proof}
    Let us first consider any runner-up class $y_-$. Calling $M$ the random variable for $\cM$'s prediction, consider the rejection rule $\phi = \1\{M = y_-\}$, where $\1$ is the indicator function. Denoting $\alpha \triangleq \mE_{\cM(X)}(\phi)$, and using the fact that $\cM$ is $f$-DP for $B_p(r)$ neighbourhoods, we have that $\forall e \in B_p(r)$:
    \begin{align}\label{eq:min-ub}
    \begin{split}
        \mP(\cM(X+e) = y_-) & = \mE_{\cM(X+e)}(\phi) \\
                            & \leq 1 - f(\alpha) \leq 1 - f(\overline{p_-}) ,
    \end{split}
    \end{align}
    where the last inequality is because $\alpha = \mE_{\cM(X)}(\phi) = \mP(\cM(X) = y_-) \leq \overline{p_-}$, and $f$ is non-increasing so $f(\alpha) \geq f(\overline{p_-})$ and hence $1-f(\alpha) \leq 1-f(\overline{p_-})$.

    Let us now consider the predicted class $y_+$. Keeping the same notations, and defining the rule $\phi' = \1\{M \neq y_+\} = 1 - \1\{M = y_+\}$. Then $\alpha' = \mE_{\cM(X)}(\phi') = 1 - \mP(\cM(X) = y_+) \leq 1 - \underline{p_+}$, and $\mE_{\cM(X+e)}(\phi') \leq 1 - f(\alpha') \leq 1 - f(1 - \underline{p_+})$, yielding:
    \begin{equation}\label{eq:max-lb}
    \begin{split}
        \mP(\cM(X+e) = y_+) & = 1 - \mE_{\cM(X+e)}(\phi') \\
                            & \geq f(1-\underline{p_+}) .
    \end{split}
    \end{equation}

    Putting \cref{eq:min-ub,eq:max-lb} together, we have that $\mP(\cM(X+e) = y_+) \geq f(1-\underline{p_+}) \geq 1 - f(\overline{p_-}) \geq \mP(\cM(X+e) = y_-)$ and thus $m_S(X+e) = y_+$.
\end{proof}

Note that we do not have to chose a rule $\phi \in \{0, 1\}$, but could instead return any number in $[0,1]$, such as the logits of the base classification model, yielding the following definition for the smoothed classifier $M_S: X \rightarrow \arg\max_{y \in \cY} \mE(\cM(X)_y)$.

\begin{numberedprop}[\ref{prop:fdp-rs} (RS from $\boldsymbol{f}$-DP)]
\label{prop:fdp-rs-proof}
    Let $\cM: X \rightarrow g(X + z), \ z \sim \cN(0, \sigma^2\mI^d)$, and $M_S: X \rightarrow \arg\max_{y \in \cY} \mP(\cM(X) = y)$ be the associated smooth model.
    Let $y_+ \triangleq M_S(X)$ be the prediction on input $X$, and let $\underline{p_+}, \overline{p_-} \in [0, 1]$ be such that $\mP(\cM(X) = y_+) \geq \underline{p_+} \geq \overline{p_-} \geq \max_{y_- \neq y_+} \mP(\cM(X) = y_-)$. Then $\forall e \in B_2(r_x), \ M_S(X + e) = y_+$, with:
    \begin{equation*}
    \label{eq:fdp-rs-appendix}
        r_X = \frac{\sigma}{2}\big( \Phi^{-1}(\underline{p_+}) - \Phi^{-1}(\overline{p_-})  \big) .
    \end{equation*}
\end{numberedprop}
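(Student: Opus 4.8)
The plan is to realize $\cM$ as a post-processing of the Gaussian mechanism and then invoke \cref{prop:fdp-robustness}. First I would write $\cM = g \circ \cM_0$, where $\cM_0: X \mapsto X + z$ with $z \sim \cN(0, \sigma^2\mI^d)$. For any radius $r \geq 0$ and any pair of $B_2(r)$-neighbours $X, X'$, the identity query $\theta = \mathrm{id}$ has $L_2$ sensitivity at most $r$, so by \cref{eq:gaussian-mechanism} (Theorem 2.7 of \citet{dong2022gaussian}) the mechanism $\cM_0$ is $G_{r/\sigma}$-DP for the $B_2(r)$ neighbouring relation. Since $f$-DP is closed under post-processing, $\cM = g \circ \cM_0$ is likewise $G_{r/\sigma}$-DP for $B_2(r)$ neighbourhoods, for every $r \geq 0$.

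Next I would apply \cref{prop:fdp-robustness} with $p = 2$ and $f = G_{r/\sigma}$: it guarantees $M_S(X+e) = y_+$ for all $e \in B_2(r)$ as soon as $G_{r/\sigma}(1-\underline{p_+}) \geq 1 - G_{r/\sigma}(\overline{p_-})$. So the remaining task is purely to identify the largest $r$ satisfying this inequality. Using the closed form $G_\mu(\alpha) = \Phi\big(\Phi^{-1}(1-\alpha) - \mu\big)$ together with the standard normal identities $\Phi^{-1}(1-t) = -\Phi^{-1}(t)$ and $1 - \Phi(t) = \Phi(-t)$, I would rewrite the left-hand side as $\Phi\big(\Phi^{-1}(\underline{p_+}) - r/\sigma\big)$ and the right-hand side as $\Phi\big(\Phi^{-1}(\overline{p_-}) + r/\sigma\big)$. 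Because $\Phi$ is strictly increasing, the inequality is then equivalent to $\Phi^{-1}(\underline{p_+}) - r/\sigma \geq \Phi^{-1}(\overline{p_-}) + r/\sigma$, i.e. $r \leq \tfrac{\sigma}{2}\big(\Phi^{-1}(\underline{p_+}) - \Phi^{-1}(\overline{p_-})\big)$. Taking $r_X$ equal to this right-hand side yields the claim (the statement being vacuous when $\underline{p_+} \leq \overline{p_-}$, in which case $r_X \leq 0$).

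I do not expect a genuine obstacle: the substance is already carried by \cref{prop:fdp-robustness} and the explicit form of $G_\mu$. The only points needing care are (i) checking that the identity query has $L_2$ sensitivity exactly $r$ on $B_2(r)$-neighbours, so the Gaussian-mechanism formula applies with $\mu = r/\sigma$; (ii) invoking post-processing closure of $f$-DP for the composed map $g \circ \cM_0$; and (iii) the boundary behaviour of $\Phi^{-1}$ at $0$ and $1$, handled by the conventions $\Phi^{-1}(0) = -\infty$ and $\Phi^{-1}(1) = +\infty$, which keep the final formula meaningful in the extreme cases $\underline{p_+} = 1$ or $\overline{p_-} = 0$.
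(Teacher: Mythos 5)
Your proposal is correct and follows essentially the same route as the paper's proof: realize $X \mapsto X+z$ as a Gaussian mechanism that is $G_{r/\sigma}$-DP for $B_2(r)$ neighbourhoods, extend to $\cM$ by post-processing, apply \cref{prop:fdp-robustness}, and solve $G_{r/\sigma}(1-\underline{p_+}) \geq 1-G_{r/\sigma}(\overline{p_-})$ for the largest admissible $r$ using the standard normal identities. Your explicit reduction of the inequality to $r \leq \tfrac{\sigma}{2}\big(\Phi^{-1}(\underline{p_+}) - \Phi^{-1}(\overline{p_-})\big)$ is a slightly cleaner way of stating what the paper obtains by monotonicity of $G_{r/\sigma}$ in $r$, but it is the same argument.
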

\begin{proof}
    $X: \rightarrow X + z, \ z \sim \cN(0, \sigma^2)$ is a Gaussian mechanism. By \cref{eq:gaussian-mechanism}, for the $B_r(r)$ neighbouring definition, it is $G_{\frac{r}{\sigma}}$-DP. By post-processing $\cM$ is also $G_{\frac{r}{\sigma}}$-DP.

    Applying \cref{prop:fdp-robustness}, we have that $G_{\frac{r}{\sigma}}(1-\underline{p_+}) \geq 1-G_{\frac{r}{\sigma}}(\overline{p_-}) \Rightarrow \forall e \in B_2(r), \ m_S(X+e) = y_+$. Let us find $r_X = \sup \ \{r: G_{\frac{r}{\sigma}}(1-\underline{p_+}) \geq 1-G_{\frac{r}{\sigma}}(\overline{p_-})\}$. Since $G_{\frac{r}{\sigma}}(.)$ as a function of $r$ is monotonously decreasing this will happen at $G_{\frac{r_X}{\sigma}}(1-\underline{p_+}) = 1-G_{\frac{r_X}{\sigma}}(\overline{p_-})$, that is:
    \begin{align*}
       & \Phi\Big( \Phi^{-1}(\underline{p_+}) - \frac{r_X}{\sigma} \Big) = 1 - \Phi\Big( \Phi^{-1}(1-\overline{p_-}) - \frac{r_X}{\sigma} \Big)  \\
       & \Rightarrow \ \Phi^{-1}(\underline{p_+}) - \frac{r_X}{\sigma} = - \Phi^{-1}(1-\overline{p_-}) + \frac{r_X}{\sigma}  \\
       & \Rightarrow \ \Phi^{-1}(\underline{p_+}) - \frac{r_X}{\sigma} = \Phi^{-1}(\overline{p_-}) + \frac{r_X}{\sigma}  \\
       & \Rightarrow \ r_X = \frac{\sigma}{2} \big( \Phi^{-1}(\underline{p_+}) - \Phi^{-1}(\overline{p_-}) \big) ,
    \end{align*}
    where the first implication holds because by symmetry of the standard normal $1-\Phi(x) = \Phi(-x)$, and because $\Phi$ is strictly monotonous ; the second because similarly, $\Phi^{-1}(1-p) = -\Phi^{-1}(p)$.
\end{proof}




\section{Experiment Details}
\label{appendix:hyper}

\subsection{Mask Architecture}
\label{appendix:mask-arch}

\cref{fig:unet} shows the architecture of our Mask model $w$ ($\cM_1$). We adapt a UNet architecture to preserve dimensions, and use a Sigmoid layer at the end of the model to output values between 0 and 1 for mask weights. We set up our UNet hyperparameters as : \textit{in\_channels}=3, \textit{out\_channels}=1 (to out put a mask), \textit{base\_channel}=32, \textit{channel\_mult}=\{1,2,4,8\}.

\begin{figure}[h!]
\centering
\includegraphics[width=0.5\textwidth]{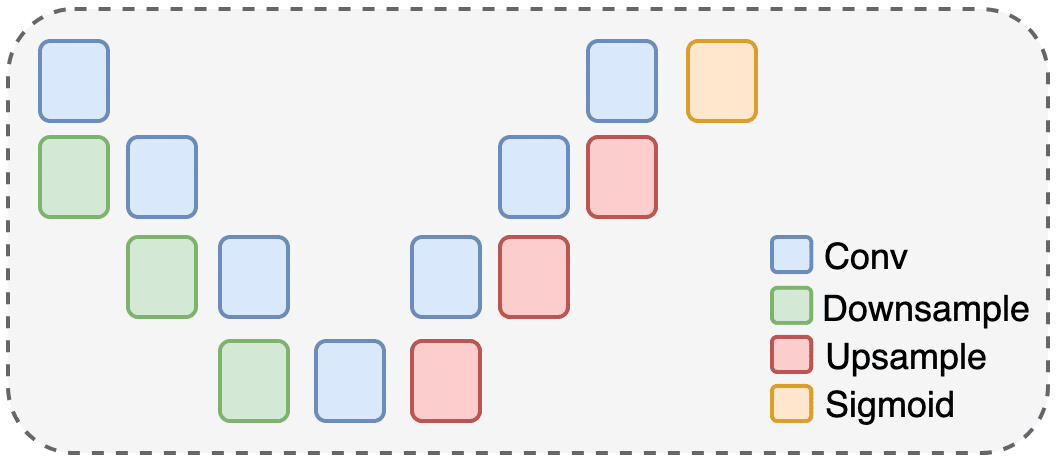}
\caption{{\footnotesize UNet structure
}}
\label{fig:unet}
\end{figure}

\begin{table}[h!]
\vspace{-.2cm}
    \centering
    \scalebox{0.8}{
    \begin{tabular}{|l|c| c c c|}
         \hline
         \multicolumn{2}{|c|}{} & CIFAR-10  & CelebA &  ImageNet\\
         \hline
         \multicolumn{2}{|c|}{GPU} & single 24G RTX4090 & single 24G RTX4090 & single 80G A100 \\
         \multicolumn{2}{|c|}{epoch} & 100 & 24 & 100(10+90) \\
         \multicolumn{2}{|c|}{train batch size} & 256 & 64 & 300\\
         \multicolumn{2}{|c|}{test batch size} & 20 & 100 & 100\\
         \hline
         & base channel & 32 & 64 & 16\\
         & optimizer &  AdamW & SGD & SGD \\
         Mask & lr & 1e-3 & 5e-2 & 5e-2\\
         Model & weight decay  & 1e-4 &  -  & 1e-4 \\
         (UNet) & momentum  & 0.9 & -  &  0.9 \\
               & step size  & 40  & - &  30 \\
               & gamma & 0.5 & - &  0.1 \\
         \hline
         & model & ResNet110 & ResNet50 & ResNet50 \\
         & optimizer & AdamW  & SGD  & SGD\\
         Base & lr & 1e-2 & 5e-2 & 1e-1\\
         Classifier & weight decay  & 1e-4 &  -   & 1e-4\\
               & momentum  &  0.9 & - &   0.9\\
               & step size  &  30 & 3 &   30 \\
               & gamma & 0.1  & 0.8 &  0.1 \\
         \hline
    \end{tabular}}
    \caption{Hyperparameters for training ARS. Check \cref{appendix:cifar10-hp-tuning} for more details of CIFAR-10 hyperparameters.}
    \vspace{-.3cm}
    \label{tab:hyper}
\end{table}

\subsection{Hyperparameter tuning for CelebA}
\label{appendix:celeba-hp-tuning}

\cref{tab:hyper} provides the details of our \ars models' hyper-parameters. On the CelebA dataset, we tune the hyper-parameters in the \ars, \citet{cohen2019certified}, and static mask settings at $\sigma=0.75$. In all settings, we settle on SGD with learning rate 0.05 and a step learning rate scheduler (step size of 3 and $\gamma=0.8$) for the base classifier. In the static mask setting, we use SGD with learning rate 0.01. 

\subsection{Hyperparameter tuning for CIFAR-10 BG20k}
\label{appendix:cifar10-hp-tuning}

For the base classifier $g$ in \citet{cohen2019certified}, Static Mask and \ars experiments, we tune the optimizer and its hyperparameters using $k=64,~\sigma=1.5$ as the testbed. Based on these tuning experiments, we chose AdamW as the optimizer with an initial learning rate of 0.01 and weight decay to 0.0001. We scale the learning rate by 0.1 every $30^{th}$ epoch, with a batch size of 256. For the mask model $w$ in \ars experiments, we again used the AdamW optimizer with an initial learning rate of 0.001, weight decay of 0.0001, whilst scaling the learning rate by 0.5 every $40^{th}$ epoch. We used the same hyperparameters for all $k$'s and $\sigma$'s for these setups.

For \citet{sukenik2021intriguing}, for $k=32,48,64$,  we tune the hyperparameters using $k=64$ as the testbed. We kept the optimizer same as used in the author's code (SGD), setting the initial learning rate to 0.01, momentum to 0.9 and weight decay to 0. We scaled the learning rate by 0.1 every $30^{th}$ epoch. We tune $k=96$'s parameters separately. We start with an initial learning rate 0.1, keeping the rest of hyperparameters same as for $k=32,48,64$. Note that for $k=96$, we could not get the standard accuracy to improve upon random baseline for $\sigma=\{0.5, 0.75, 1.0, 1.5\}$, despite extensively tuning the learning rate.

\begin{table}[h]
    \centering
    \scalebox{0.8}{
\begin{tabular}{|c|cccccc|}
\hline
$k$ & $\sigma=0.12$ & $\sigma=0.25$ & $\sigma=0.5$ & $\sigma=0.75$ & $\sigma=1.0$ & $\sigma=1.5$ \\ 
\hline
32 & 2.25 & 2.25 & 2.25 & 2.25 & 2.25 & 2.25 \\
48 & 2.0 & 2.0 & 2.25 & 2.0 & 2.25 & 2.0 \\
64 & 2.25 & 2.0 & 2.25 & 2.0 & 2.0 & 2.0 \\
96 & 2.0 & 2.0 & 2.25 & 2.25 & 2.25 & 2.25 \\
\hline
\end{tabular}}
\caption{UniCR $\beta$ chosen for each $k$, $\sigma$ setting.}
\label{tab:unicr_beta_hyper}
\end{table}

For UniCR, we tune $\beta$ (the parameter of the generalized normal distribution for the noise) using $\sigma=0.75$ and $k=48$ as the testbed. We perform grid search on $\beta$ and find that $\beta=2.0$ or $\beta = 2.25$ (Gaussian and close to a Gaussian, but with a wider more and shorted tails) perform best. For each $k, \sigma$ setting, we train 3 models with $\beta=2.25$ and $2.0$ and choose the $\beta$ giving highest mean standard accuracy. The chosen $\beta$ for each setting is given in \cref{tab:unicr_beta_hyper}.
For each setting we also tune optimizer hyper-parameters. At $k=32$, we use SGD optimizer. We use a learning rate of 0.01, momentum of 0.9 and weight decay of 0.0005 with a step learning rate scheduler (30 step and $\gamma=0.1$). At $k=48,64,96$ we use training batch size 256, 100 epochs, and AdamW optimizer. We use a learning rate of 0.001 with step learning rate scheduler (30 step size and $\gamma = 0.5$).

\section{Additional Results on CIFAR10 BG20k Benchmark}
\label{appendix:cifar10}

We show here full sweep of results for CIFAR-10 Bg20k benchmark for $k = {32, 48, 64, 96}$ and $\sigma = {0.12, 0.25, 0.5, 0.75, 1.0, 1.5}$ for all our baselines and \ars. Similar to results presented in \cref{tab:results-cifar10-20kbg}, we report the mean accuracy and standard deviation over three seeds. 

\addtolength{\tabcolsep}{-0.0em}
\begin{table}[h]
    \centering
    \scalebox{0.9}{
\begin{tabular}{|c|c|cc|ccc|}
\hline
$k$ & $\sigma$ & Cohen et al. & Static Mask & UniCR & S{\'u}ken{\'\i}k et al.$^\Delta$ & \ars$^\Delta$ \\ 
\hline
\multirow{6}{*}{$32$} & $0.12$ & \textbf{79 (0.7)} & 78.6 (0.9) & 77.5 (1.) & 71.8 (1.6) & 78.5 (0.7) \\
                        & $0.25$ & 70.6 (1.) & \textbf{73.9 (0.8)} & 69.8 (1.4) & 68.6 (2.8) & 72.6 (0.9) \\
                        & $0.5$ & 63.6 (2.) & \textbf{64.8 (0.9)} & 62.8 (0.8) & 59.1 (1.6) & 64 (1.4) \\
                        & $0.75$ & 55.5 (0.8) & \textbf{57.8 (0.8)} & 53.1 (0.8) & 49.3 (0.2) & 57.3 (0.6) \\
                        & $1.0$ & 48 (0.7) & 47.3 (1.3) & 46.1 (0.9) & 44.6 (1.0) & \textbf{49.3 (0.6)} \\
                        & $1.5$ & 38.6 (0.2) & \textbf{39.1 (0.8)} & 36.8 (0.8) & 36.9 (0.1) & 38.3 (1.0) \\
\hline

\hline
\multirow{6}{*}{$48$} & $0.12$ & 80.1 (0.8) & 80.9 (0.7) & 77.0 (2.9) & 73 (0.1) & \textbf{83.6 (0.4)} \\
                        & $0.25$ & 71.6 (1.0) & 72 (2.0)  & 69.7 (0.8) & 65 (0.07) & \textbf{75.5 (1.0)} \\
                        & $0.5$ & 64.3 (0.2) & 64.1 (1.6) & 60.3 (0.6) & 53.5 (1.8) & \textbf{66.0 (0.8)} \\
                        & $0.75$ & 52.5 (1.2) & 56 (0.8) & 52.7 (0.6) & 41.8 (2.4) &  \textbf{57.6 (1.5)} \\
                        & $1.0$ & 42.5 (2.1) & 45.1 (1.2) & 44.3 (0.2) & 34.1 (1.0) & \textbf{47.6 (2.1)} \\\
                        & $1.5$ & 30.8 (0.2) & \textbf{34.9 (2.4)} & 32.3 (0.2) & 25.5 (1.0) & 34.1 (2.9) \\
\hline

\hline
\multirow{6}{*}{$64$} & $0.12$ & 80.9 (1.0) & 81.3 (1.0) & 76.7 (0.8) & 73.8 (0.2) & \textbf{82.3 (1.0)}\\
                        & $0.25$ & 71.67 (0.9) & 73.1 (3.2) & 67.8 (0.5) & 64.1 (0.8) & \textbf{77 (1.8)}\\
                        & $0.5$  & 61.6 (2.7) & 64 (1.4) & 58.7 (0.2) & 45.1 (1.1) & \textbf{65.4 (1.8)} \\
                        & $0.75$ & 49.4 (2.) & 51.6 (1.0) & 50.7 (0.2) & 31.3 (2.0) & \textbf{56.5 (2.6)} \\
                        & $1.0$ & 41.3 (1.8) & 40 (0.5) & 42.2 (0.6) & 26.5 (0.7) & \textbf{50.4 (2.5)} \\
                        & $1.5$  &  28.3 (0.2) & 25.9 (3.2) & 30.0 (1.8) & 15.8 (2.7) & \textbf{30.6 (1.0)} \\  
\hline

\hline
\multirow{6}{*}{$96$} & $0.12$ & 79.1 (0.8) & 79.6 (0.4) & 77.5 (1.1) & 66 (0.7) & \textbf{80.3 (3.7)} \\
                        & $0.25$ & 65.3 (1.6) & 71.8 (1.3) & 68.8 (1.8) &  
                        45.5 (0.9) & \textbf{78.3 (2.2)} \\
                        & $0.5$ & 56.6 (2.4) & 59.5 (1.4) & 59.7 (1.3) & 10.8 (2.3) & \textbf{69.8 (1.2)} \\
                        & $0.75$ & 45.3 (3.4) & 48 (4.3) & 50.2 (0.5) & 10.2 (0.4) & \textbf{63.6 (3.5)} \\
                        & $1.0$ & 33.8 (3.8)  & 36.9 (0.5) & 41.3 (2.4) & 10.4 (0.3) & \textbf{56.3 (2.3)} \\
                        & $1.5$  & 26 (1.4) & 25.1 (0.6) & 31.0 (1.4) & 10.8 (1.4) & \textbf{39.1 (2.5)} \\
\hline
    \end{tabular}
    }\vspace{2mm}
\caption{\footnotesize%
\textbf{Standard Accuracy ($r = 0$) on CIFAR-10 (20kBG).}
Our 20kBG benchmark places CIFAR-10 images on larger background images.
We report the mean accuracy and standard deviation over three seeds.
\ars achieves higher accuracy across noise $\sigma$ and input dimension $k$.
$^\Delta$ indicates adaptivity.
}
\label{tab:results-cifar10-20kbg-appendix}
\end{table}

\begin{figure*}[t]
  \vspace{-2em}
  \centering
    \subfigure[$\sigma = 0.12$]{\includegraphics[width=0.3\textwidth]{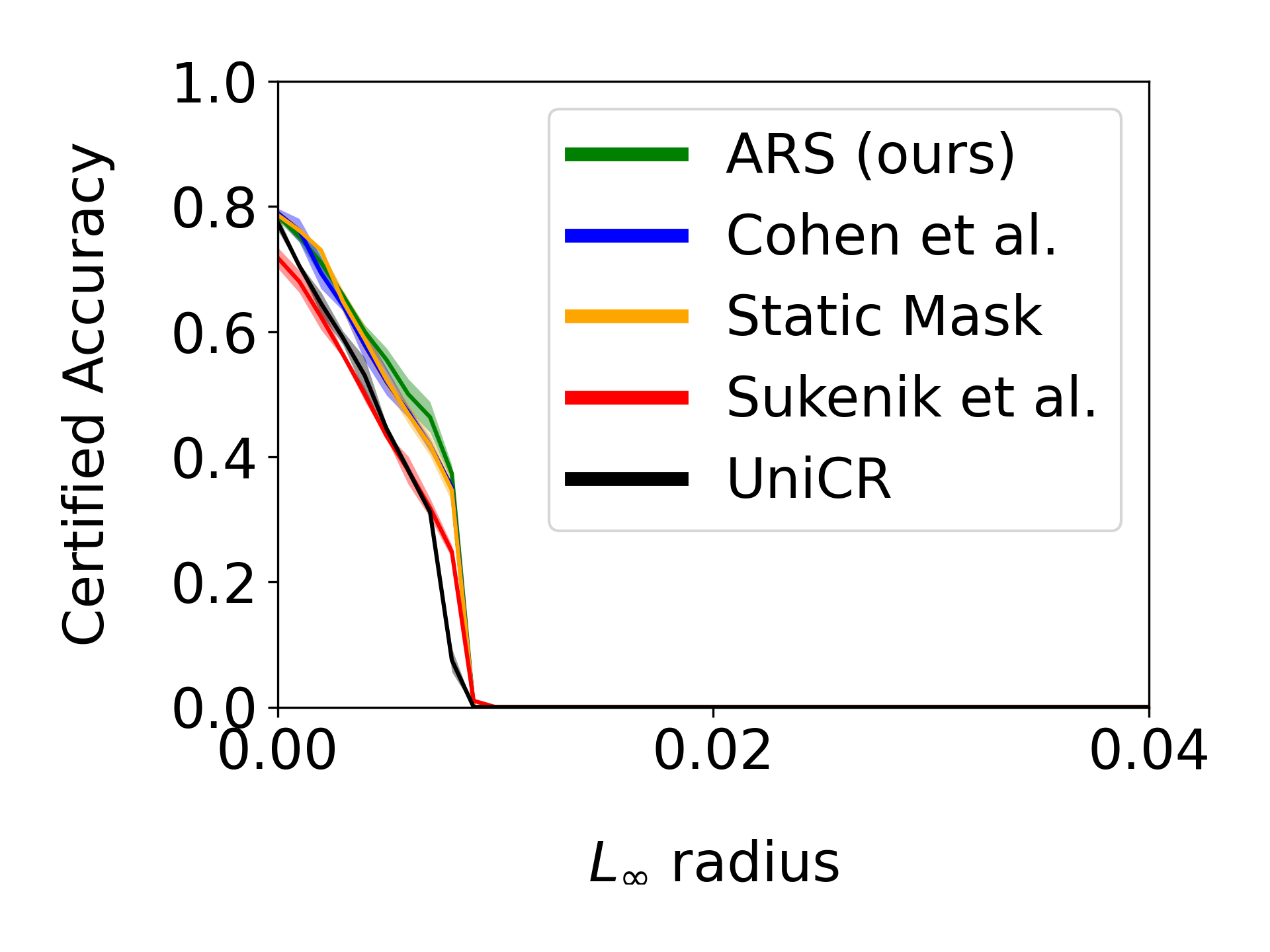}
    \label{fig:20kBG-k_32-edges-sigma0.12}
    }
    \subfigure[$\sigma = 0.25$]{\includegraphics[width=0.3\textwidth]{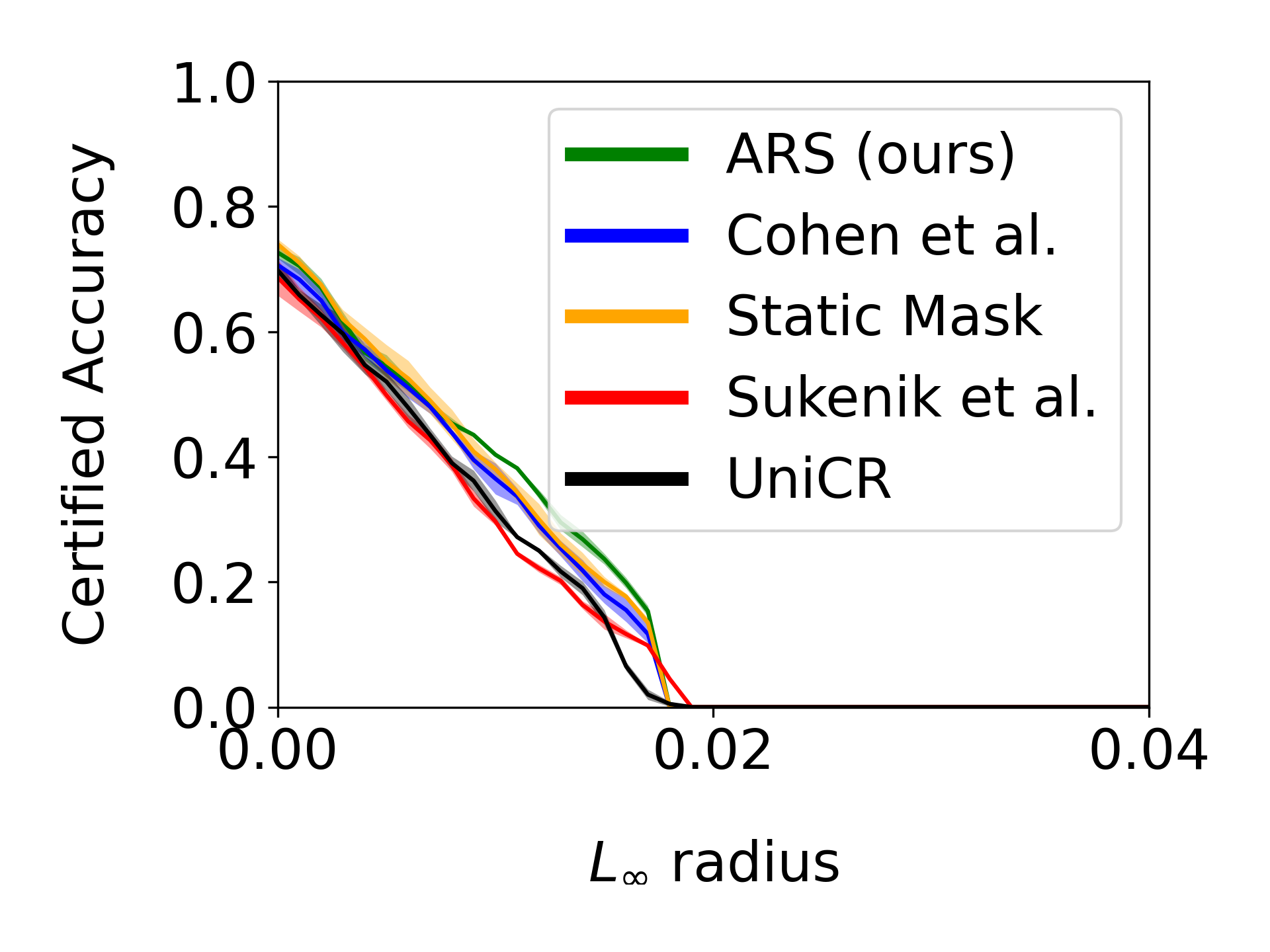}
    \label{fig:20kBG-k_32-edges-sigma0.25}
    }
    \subfigure[$\sigma = 0.5$]{\includegraphics[width=0.3\textwidth]{images/plots/cifar-10-benchmark/20kbg/all_cifar10_bg_20k_results/k_32/total_sigma_0.5.png}
    \label{fig:20kBG-k_32-edges-sigma0.5}
    }
    \subfigure[$\sigma = 0.75$]{\includegraphics[width=0.3\textwidth]{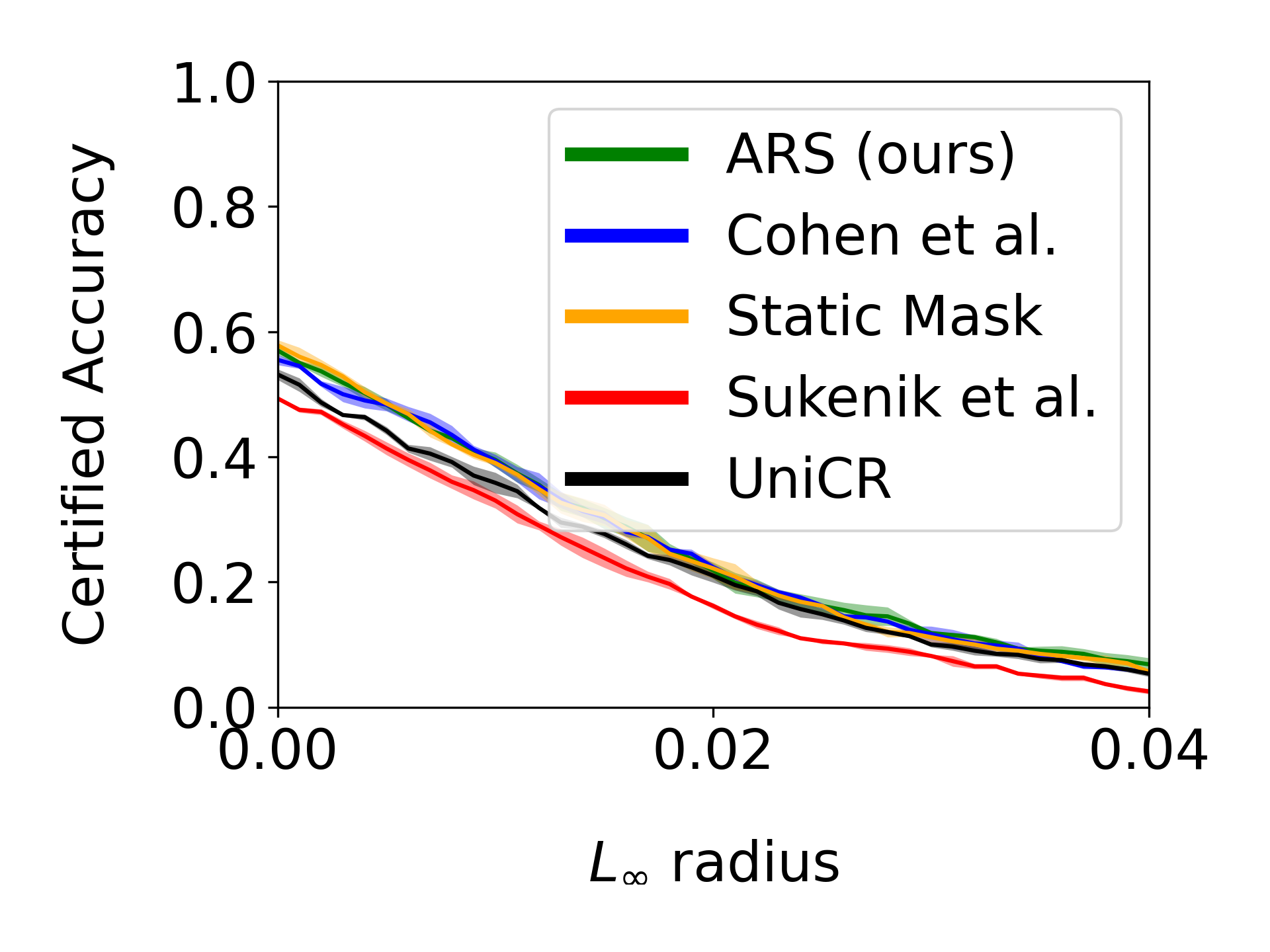}
    \label{fig:20kBG-k_32-edges-sigma0.75}
    }
    \subfigure[$\sigma = 1.0$]{\includegraphics[width=0.3\textwidth]{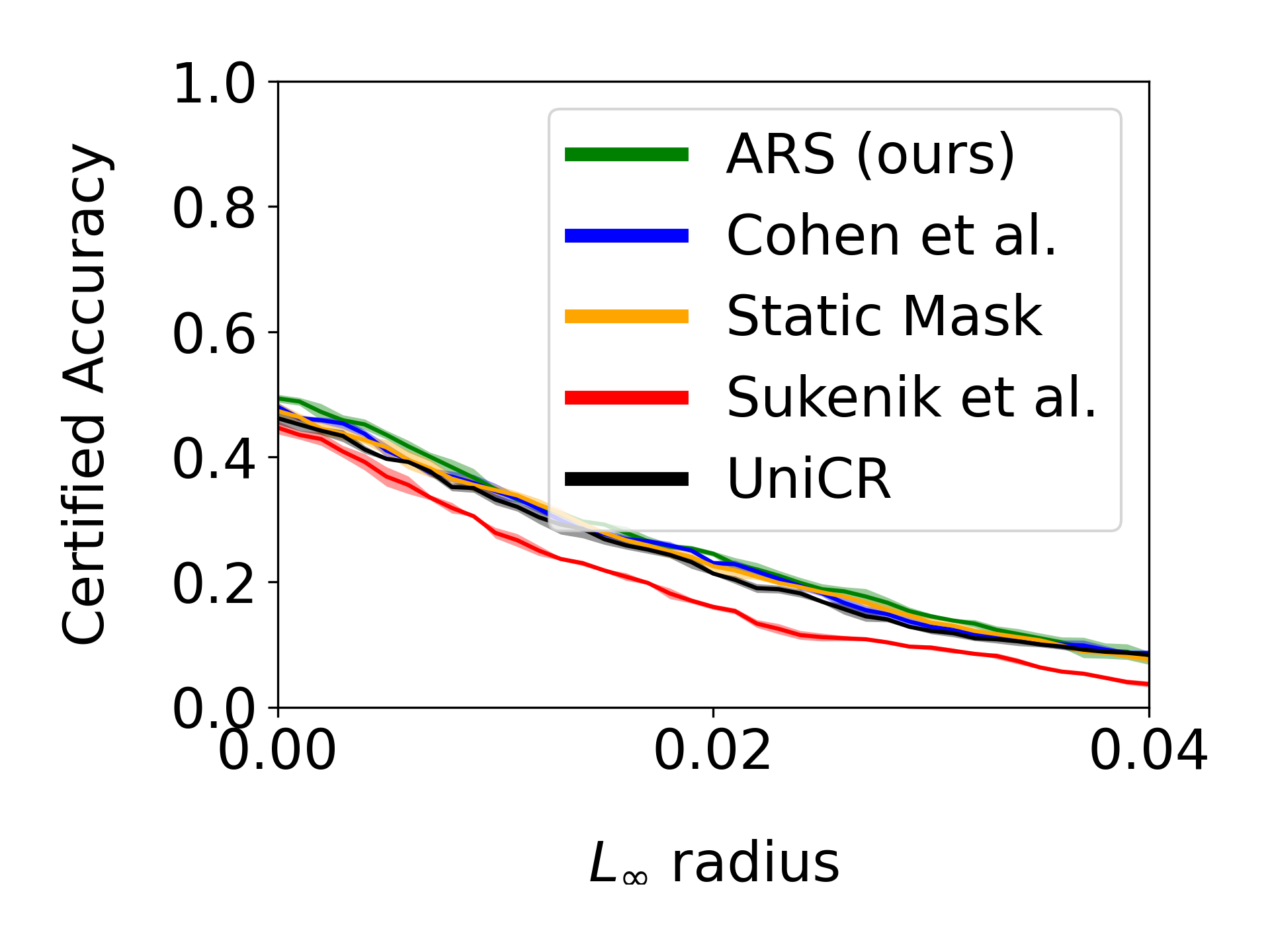}
    \label{fig:20kBG-k_32-edges-sigma1.0}
    }
    \subfigure[$\sigma = 1.5$]{\includegraphics[width=0.3\textwidth]{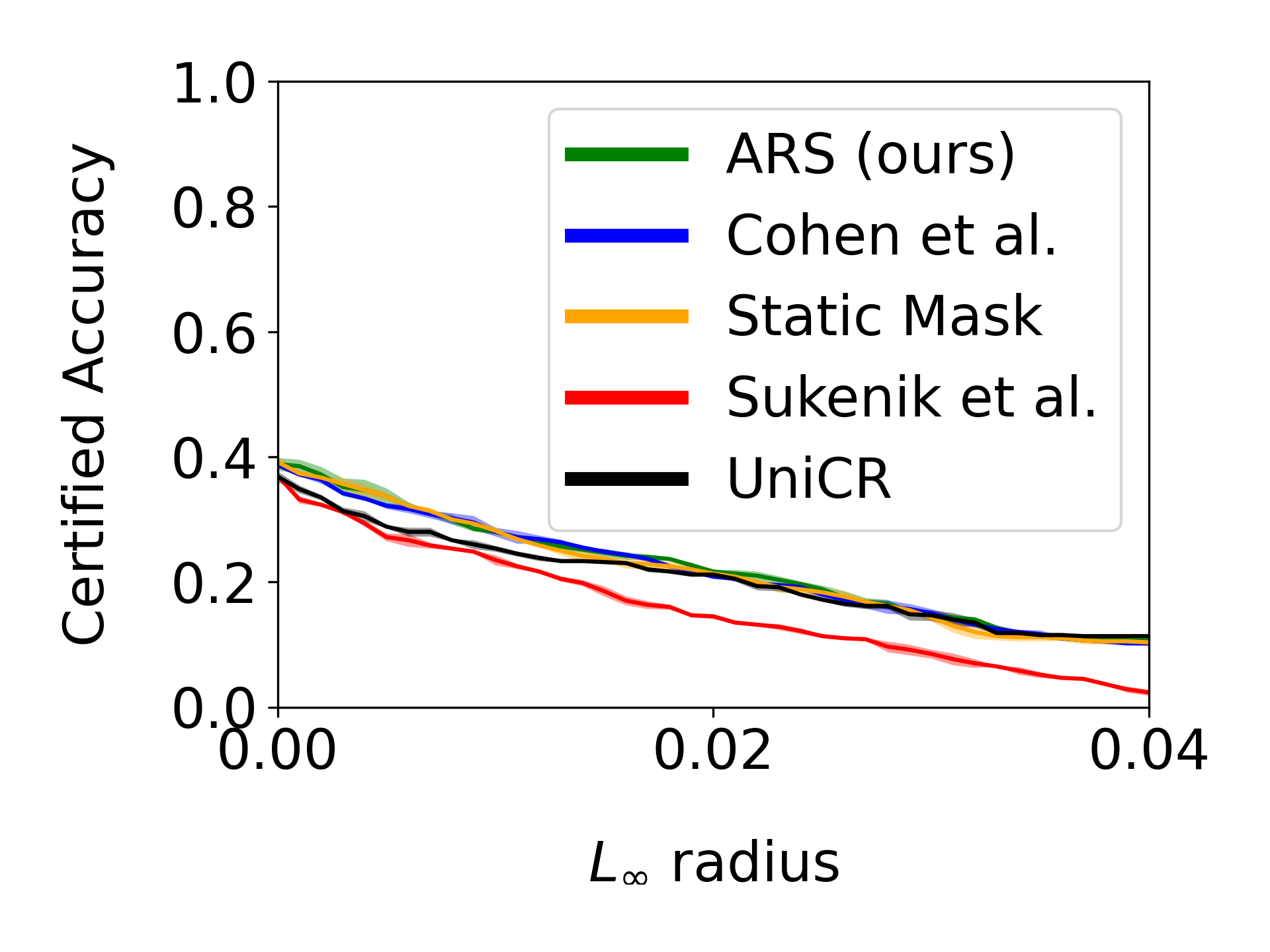}
    \label{fig:20kBG-k_32-edges-sigma1.5}
    }
\vspace{-0.1cm}
\caption{\footnotesize%
\textbf{$k = 32$ certified test accuracy results for CIFAR-10 (20kBG)}
(a)-(f) show the effect of increasing $\sigma$. These results are in our 20kBG setting where= a CIFAR-10 image is placed randomly along the edges of a background image. Each line is the mean and the shaded interval covers +/- one standard deviation across seeds.
}
\label{fig:cifar-10-benchmark-k-32-plots-appendix}
\end{figure*}

\begin{figure*}[t]
  \vspace{-2em}
  \centering
\subfigure[$\sigma = 0.12$]{\includegraphics[width=0.3\textwidth]{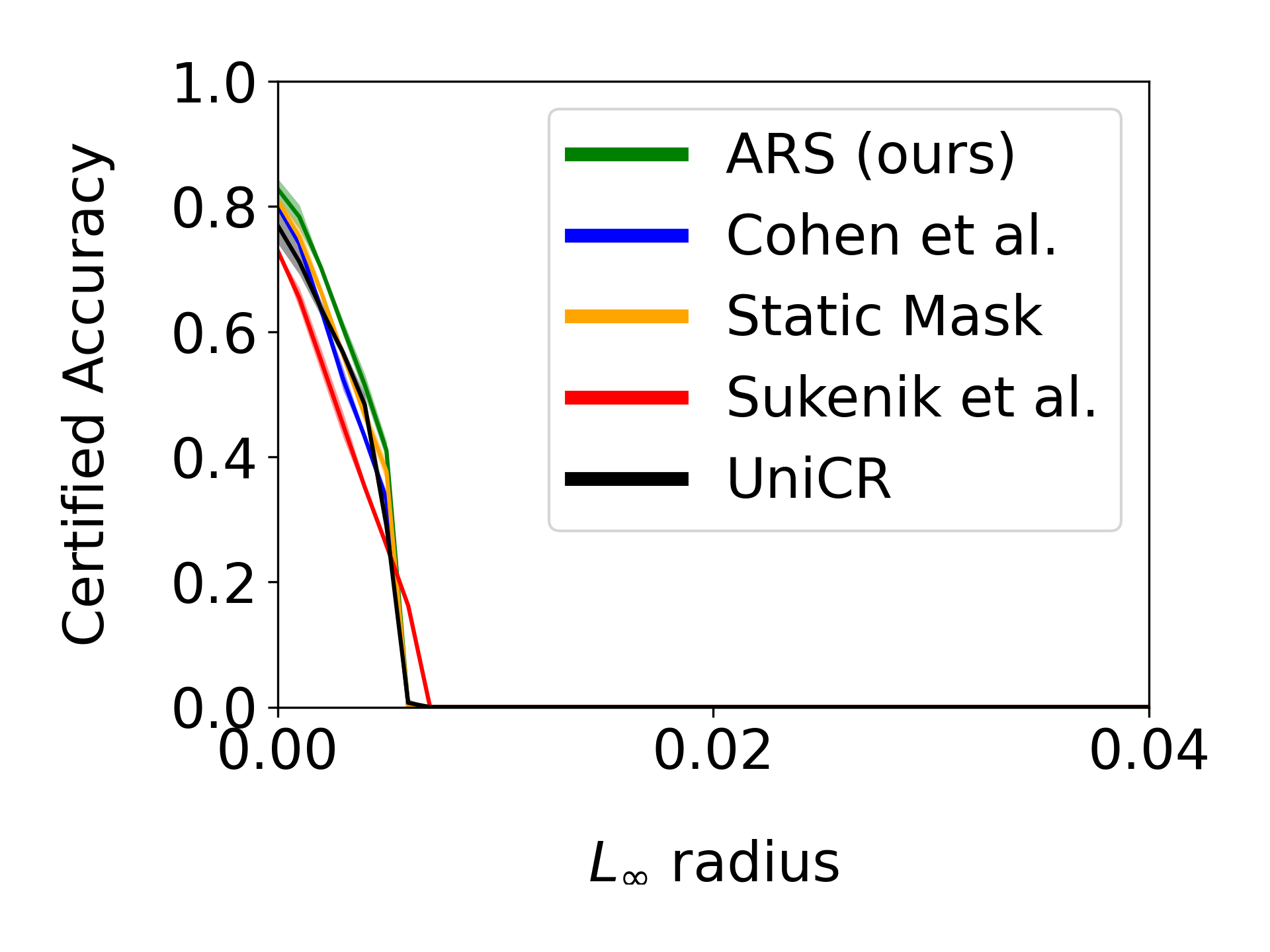}
\label{fig:20kBG-k_48-edges-sigma0.12}
}
\subfigure[$\sigma = 0.25$]{\includegraphics[width=0.3\textwidth]{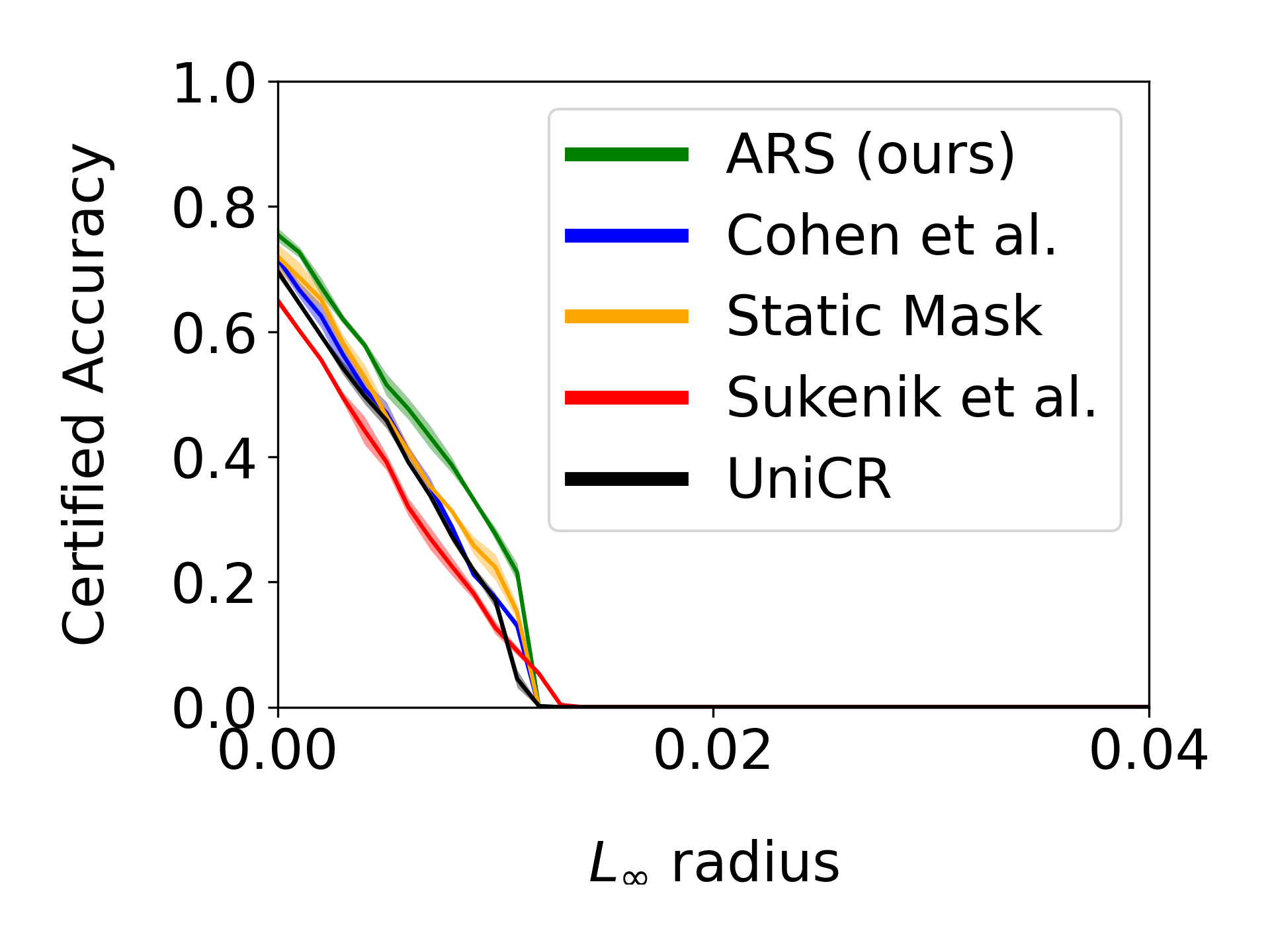}
\label{fig:20kBG-k_48-edges-sigma0.25}
}
\subfigure[$\sigma = 0.5$]{\includegraphics[width=0.3\textwidth]{images/plots/cifar-10-benchmark/20kbg/all_cifar10_bg_20k_results/k_48/total_sigma_0.5.png}
\label{fig:20kBG-k_48-edges-sigma0.5}
}
\subfigure[$\sigma = 0.75$]{\includegraphics[width=0.3\textwidth]{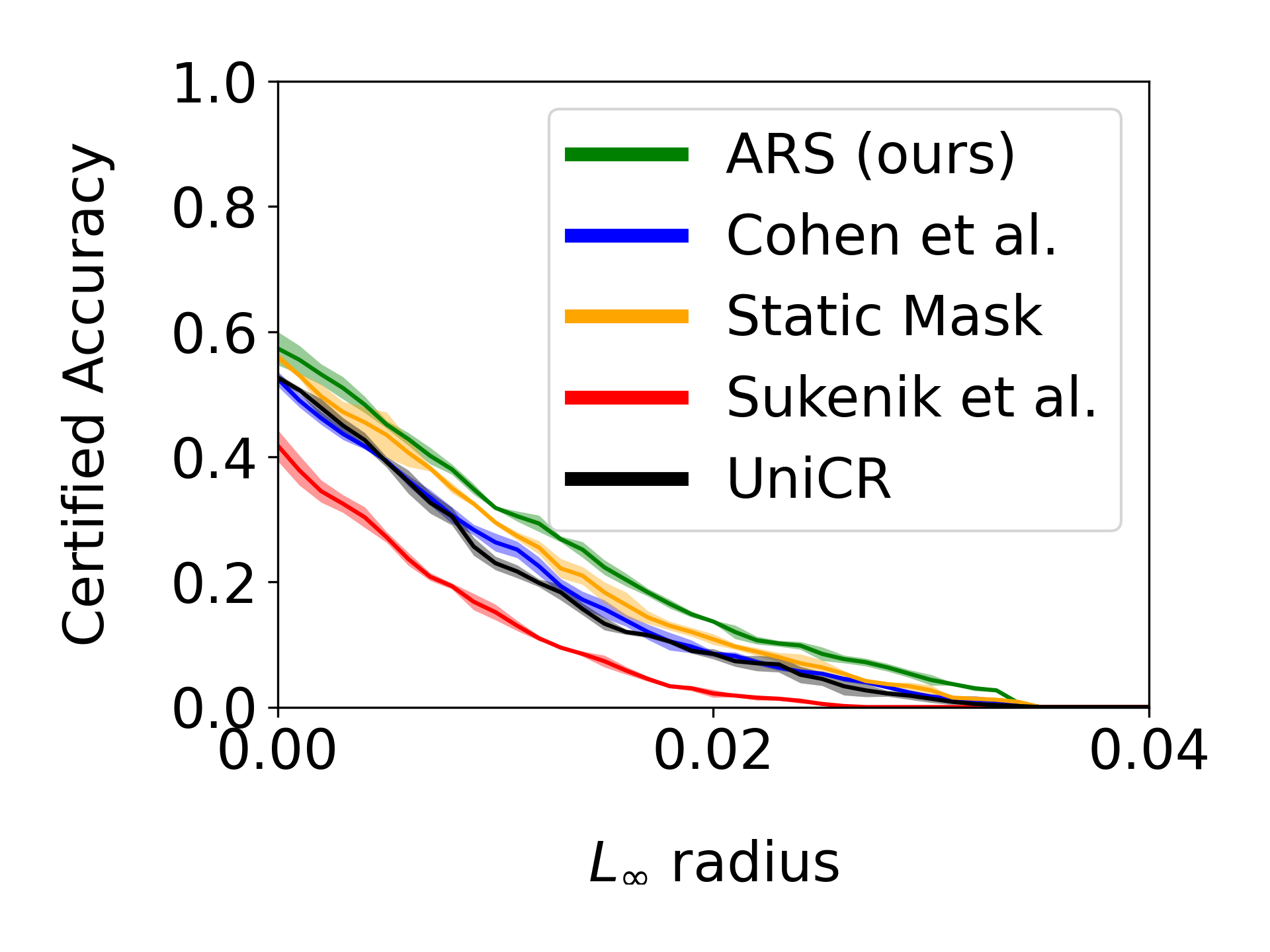}
\label{fig:20kBG-k_48-edges-sigma0.75}
}
\subfigure[$\sigma = 1.0$]{\includegraphics[width=0.3\textwidth]{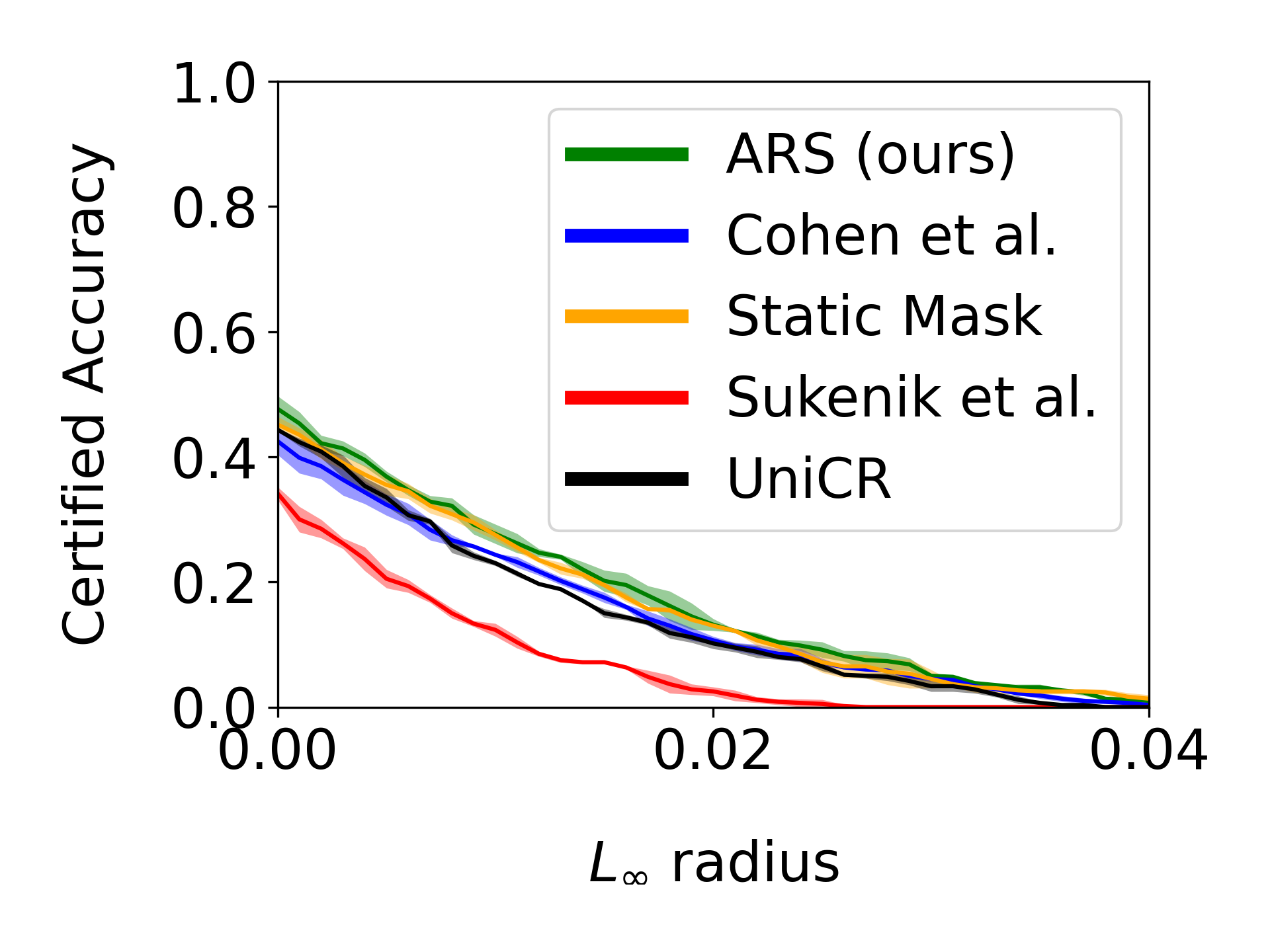}
\label{fig:20kBG-k_48-edges-sigma1.0}
}
\subfigure[$\sigma = 1.5$]{\includegraphics[width=0.3\textwidth]{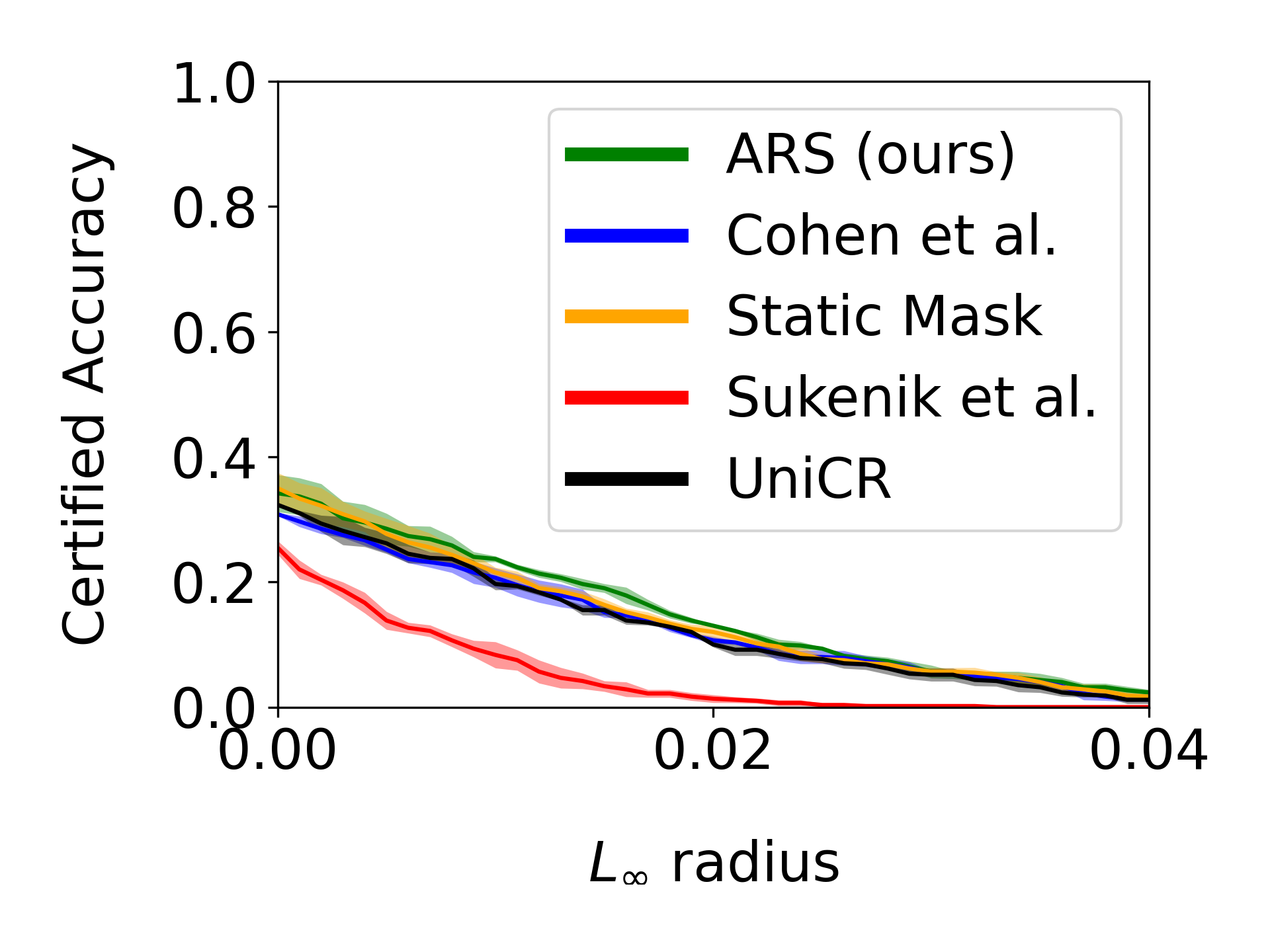}
\label{fig:20kBG-k_48-edges-sigma1.5}
}
\vspace{-0.1cm}
\caption{\footnotesize%
\textbf{$k = 48$ certified test accuracy results for CIFAR-10 (20kBG)}
(a)-(f) show the effect of increasing $\sigma$. These results are in our 20kBG setting where a CIFAR-10 image is placed randomly along the edges of a background image. Each line is the mean and the shaded interval covers +/- one standard deviation across seeds.
}
\label{fig:cifar-10-benchmark-k-48-plots-appendix}
\end{figure*}

\begin{figure*}[t!]
  \vspace{-2em}
  \centering
\subfigure[$\sigma = 0.12$]{\includegraphics[width=0.3\textwidth]{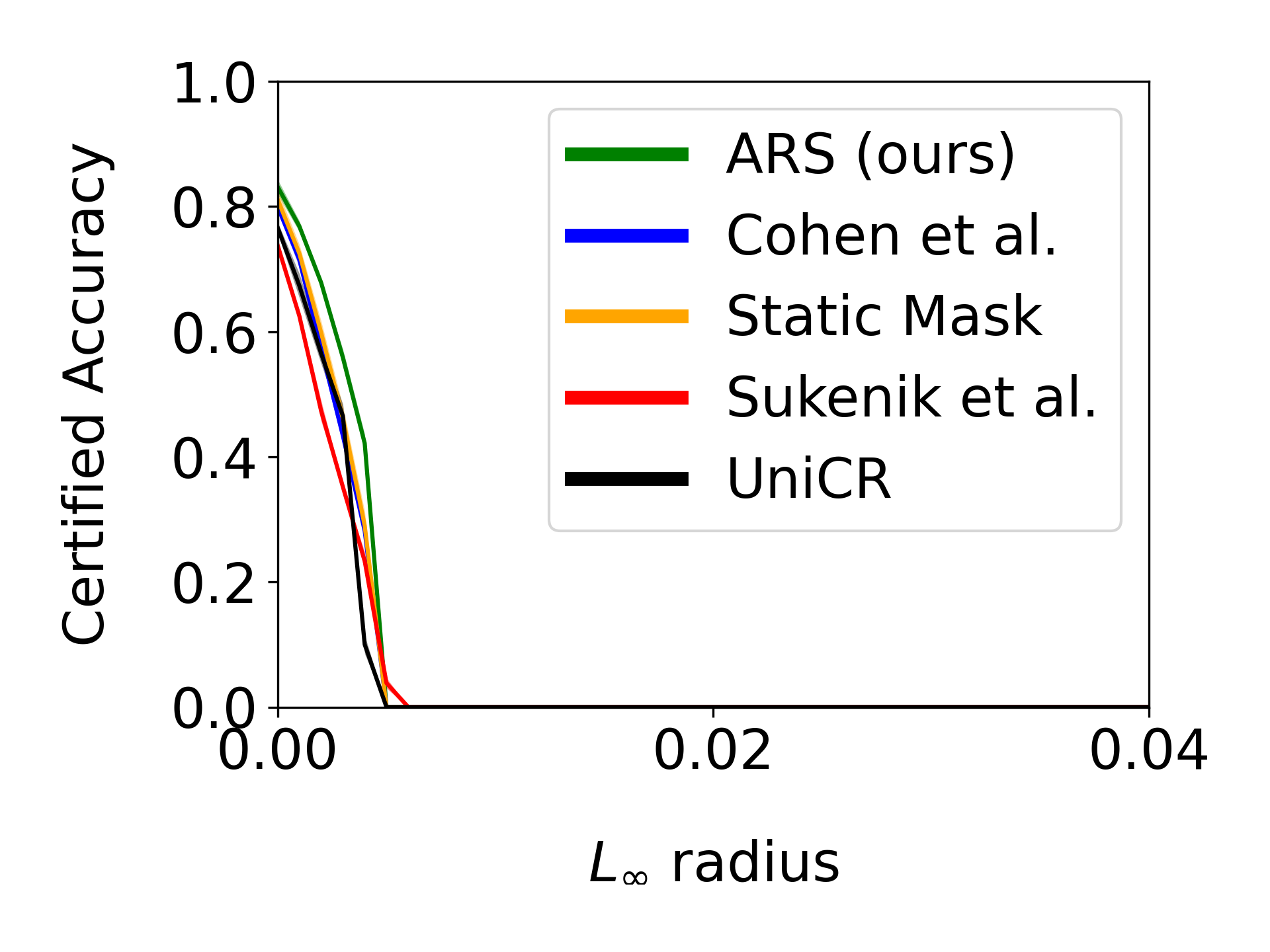}
\label{fig:20kBG-k_64-edges-sigma0.12}
}
\subfigure[$\sigma = 0.25$]{\includegraphics[width=0.3\textwidth]{images/plots/cifar-10-benchmark/20kbg/all_cifar10_bg_20k_results/k_64/total_sigma_0.25.png}
\label{fig:20kBG-k_64-edges-sigma0.25}
}
\subfigure[$\sigma = 0.5$]{\includegraphics[width=0.3\textwidth]{images/plots/cifar-10-benchmark/20kbg/all_cifar10_bg_20k_results/k_64/total_sigma_0.5.png}
\label{fig:20kBG-k_64-edges-sigma0.5}
}
\subfigure[$\sigma = 0.75$]{\includegraphics[width=0.3\textwidth]{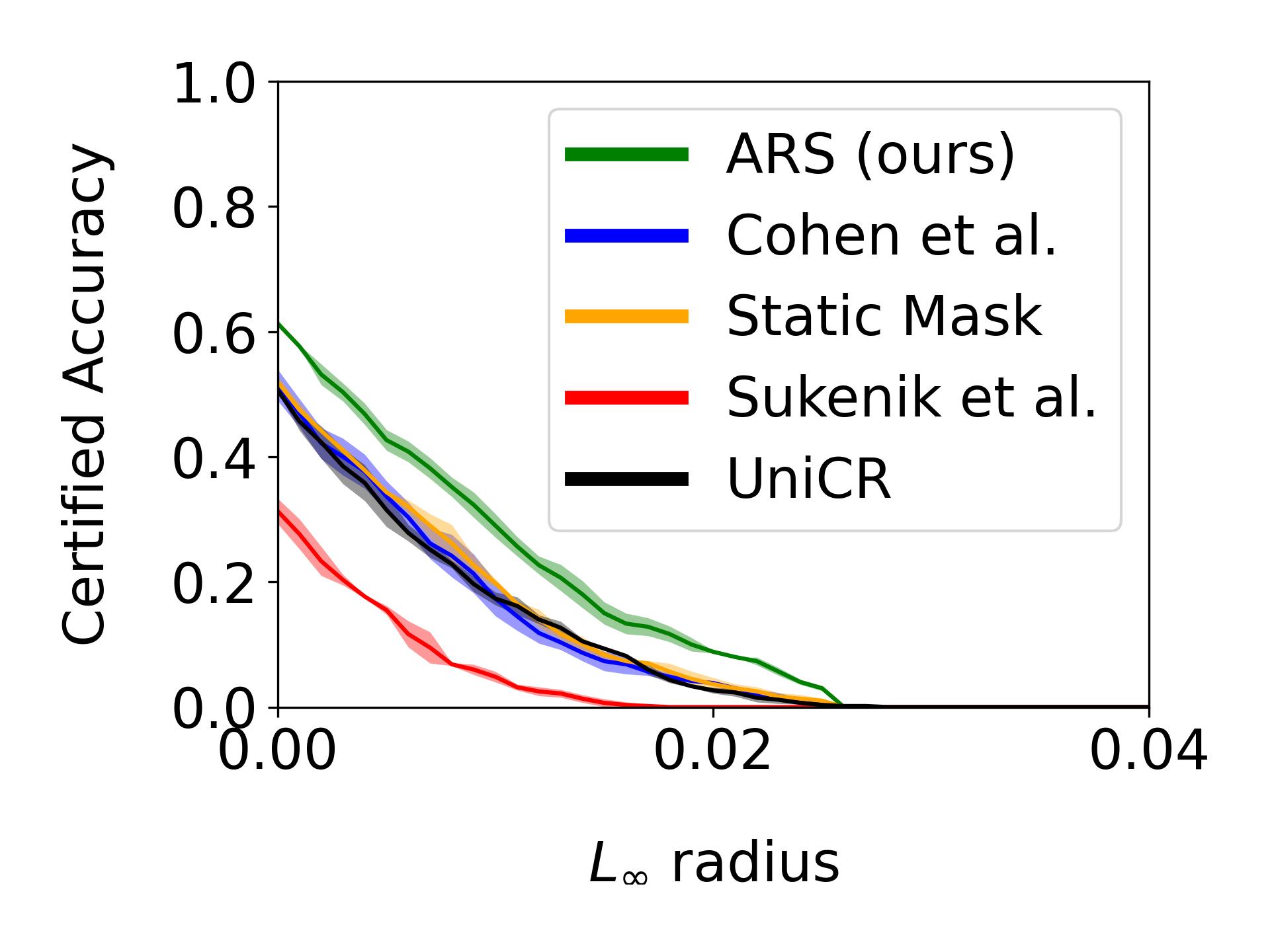}
\label{fig:20kBG-k_64-edges-sigma0.75}
}
\subfigure[$\sigma = 1.0$]{\includegraphics[width=0.3\textwidth]{images/plots/cifar-10-benchmark/20kbg/all_cifar10_bg_20k_results/k_64/total_sigma_1.0.png}
\label{fig:20kBG-k_64-edges-sigma1.0}
}
\subfigure[$\sigma = 1.5$]{\includegraphics[width=0.3\textwidth]{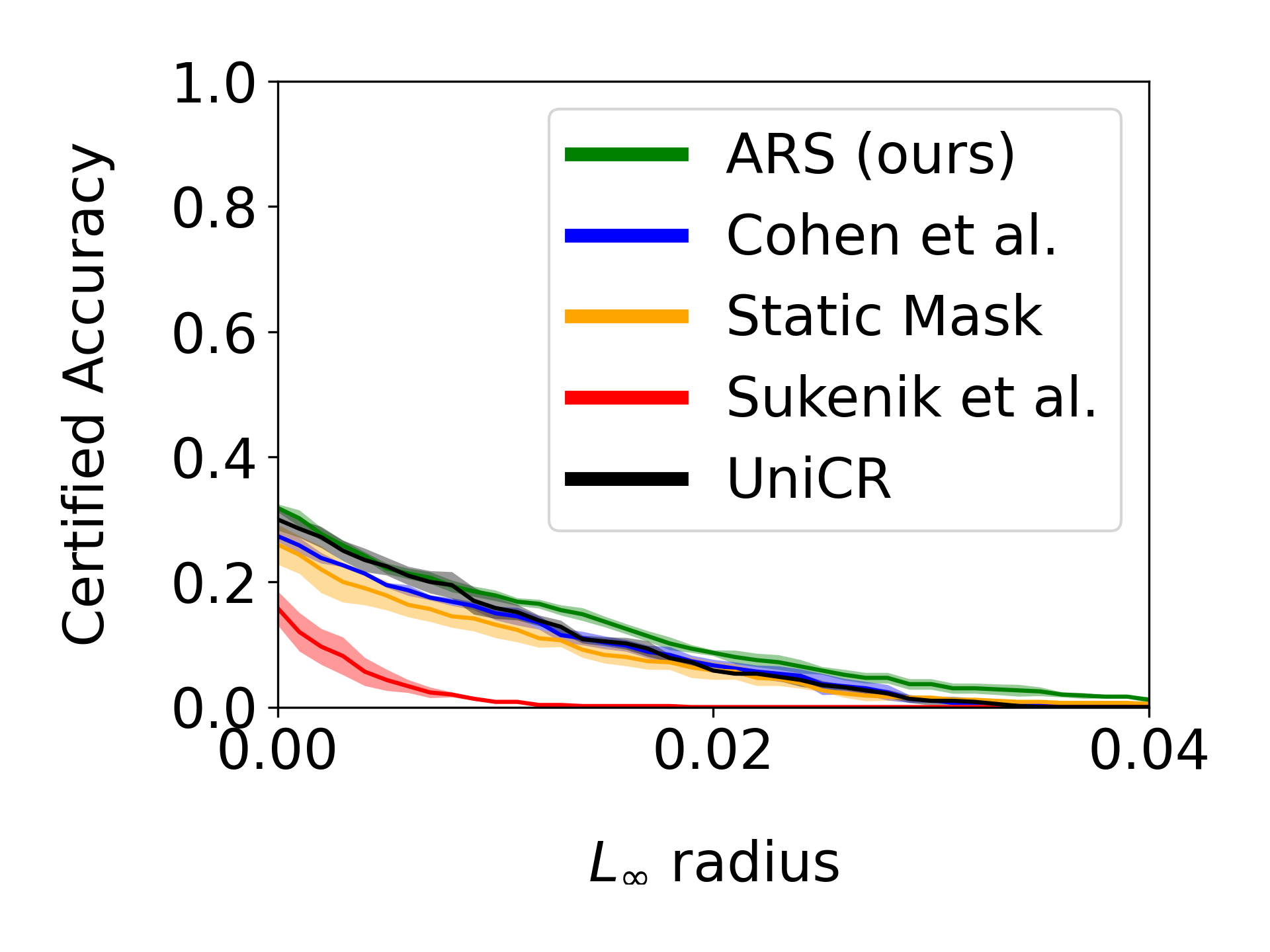}
\label{fig:20kBG-k_64-edges-sigma1.5}
}
\vspace{-0.1cm}
\caption{\footnotesize%
\textbf{$k = 64$ certified test accuracy results for CIFAR-10 (20kBG)}
(a)-(f) show the effect of increasing $\sigma$. These results are in our 20kBG setting where a CIFAR-10 image is placed randomly along the edges of a background image. Each line is the mean and the shaded interval covers +/- one standard deviation across seeds.
}
\label{fig:cifar-10-benchmark-k-64-plots-appendix}
\end{figure*}

\begin{figure*}[t]
  \vspace{-2em}
  \centering
\subfigure[$\sigma = 0.12$]{\includegraphics[width=0.3\textwidth]{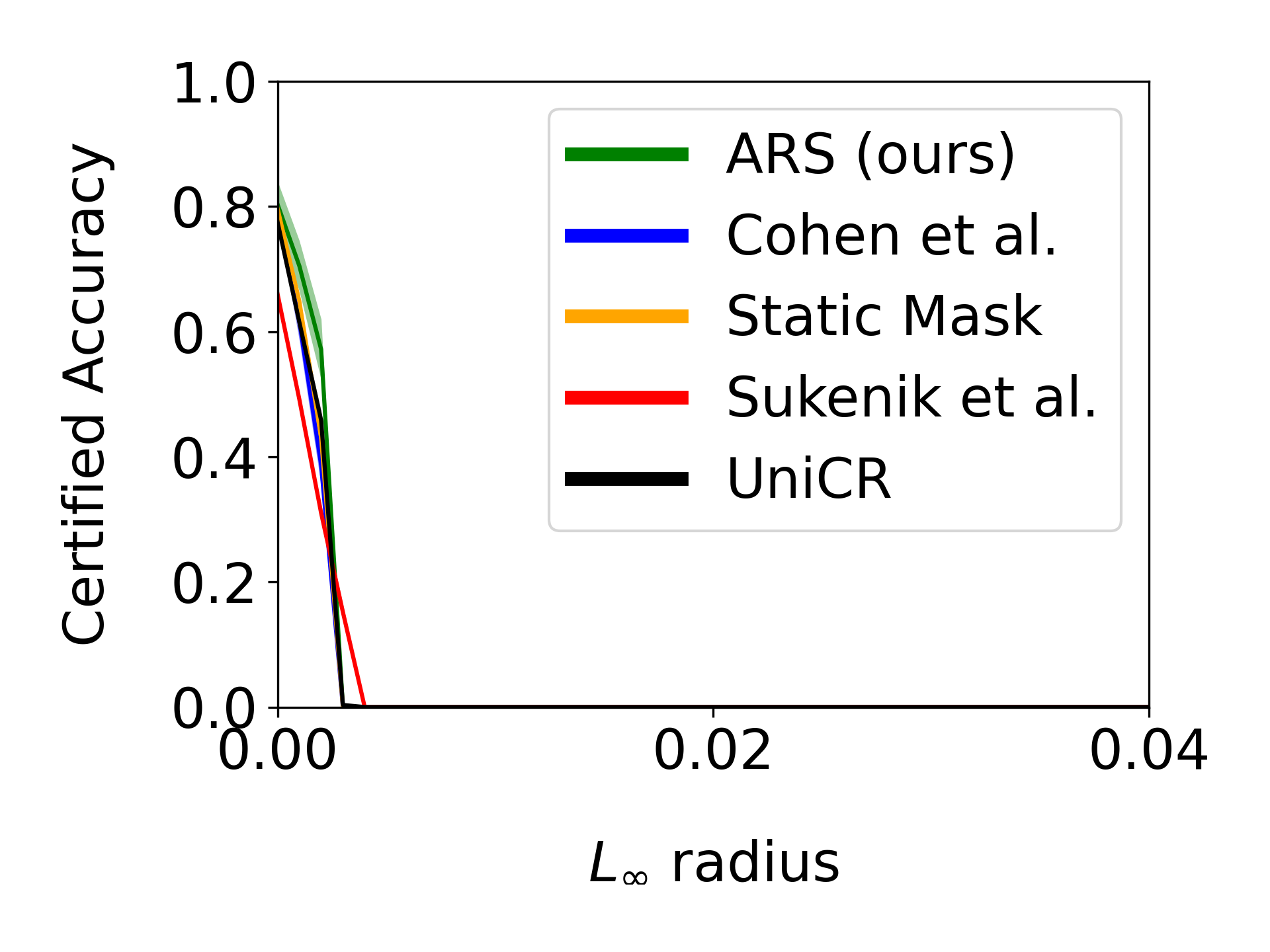}
\label{fig:20kBG-k_96-edges-sigma0.12}
}
\subfigure[$\sigma = 0.25$]{\includegraphics[width=0.3\textwidth]{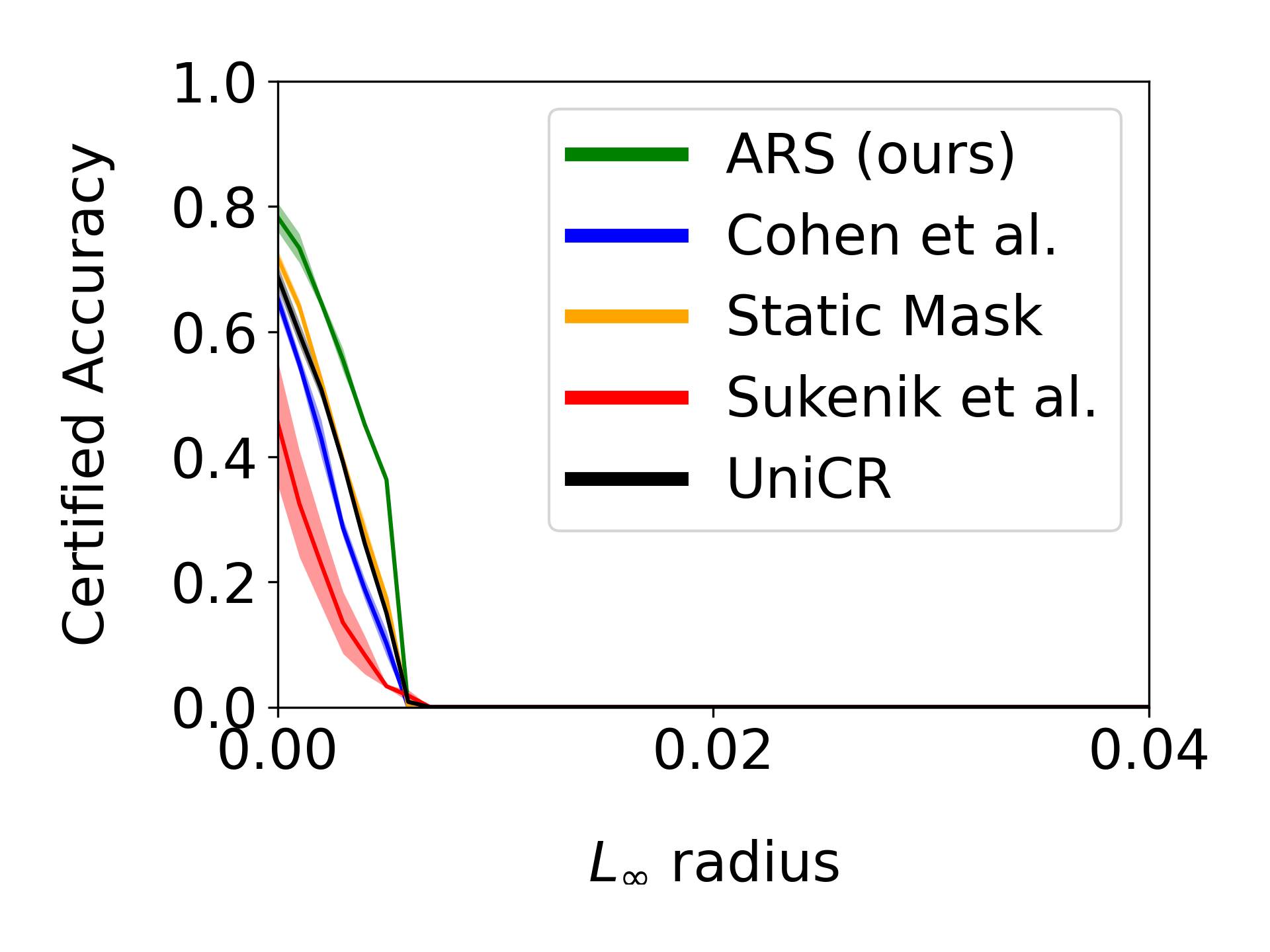}
\label{fig:20kBG-k_96-edges-sigma0.25}
}
\subfigure[$\sigma = 0.5$]{\includegraphics[width=0.3\textwidth]{images/plots/cifar-10-benchmark/20kbg/all_cifar10_bg_20k_results/k_96/total_sigma_0.5.png}
\label{fig:20kBG-k_96-edges-sigma0.5}
}
\subfigure[$\sigma = 0.75$]{\includegraphics[width=0.3\textwidth]{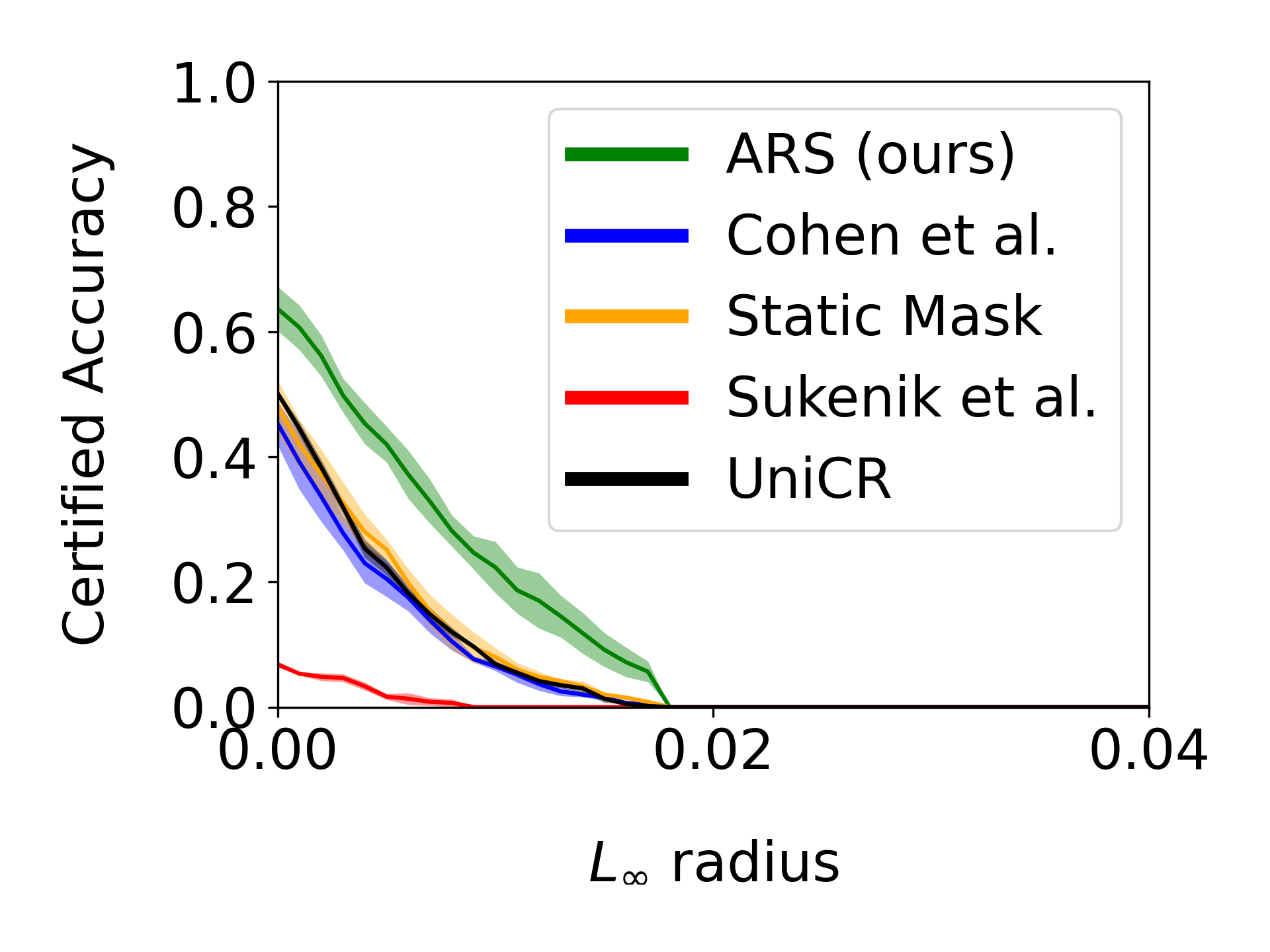}
\label{fig:20kBG-k_96-edges-sigma0.75}
}
\subfigure[$\sigma = 1.0$]{\includegraphics[width=0.3\textwidth]{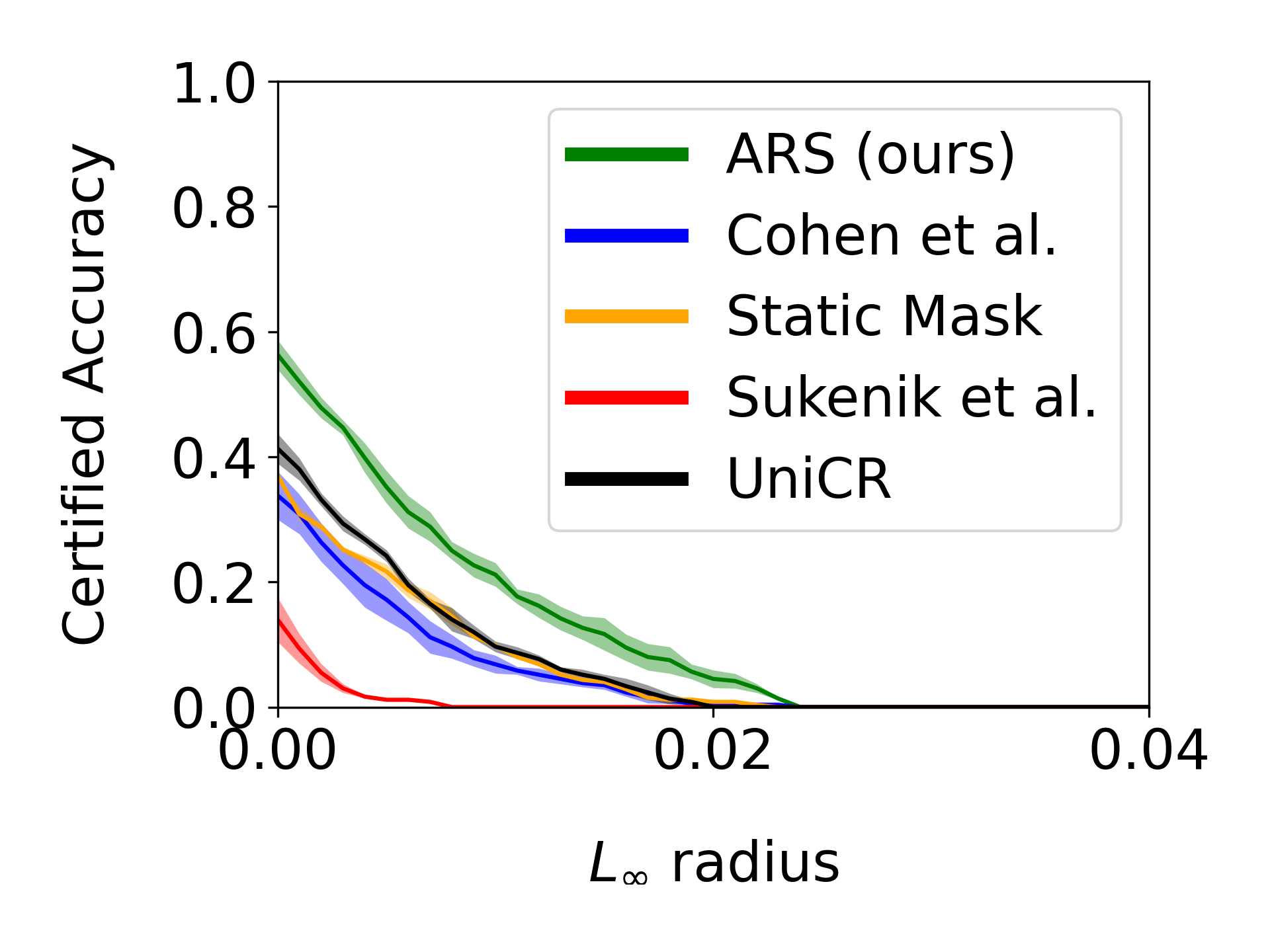}
\label{fig:20kBG-k_96-edges-sigma1.0}
}
\subfigure[$\sigma = 1.5$]{\includegraphics[width=0.3\textwidth]{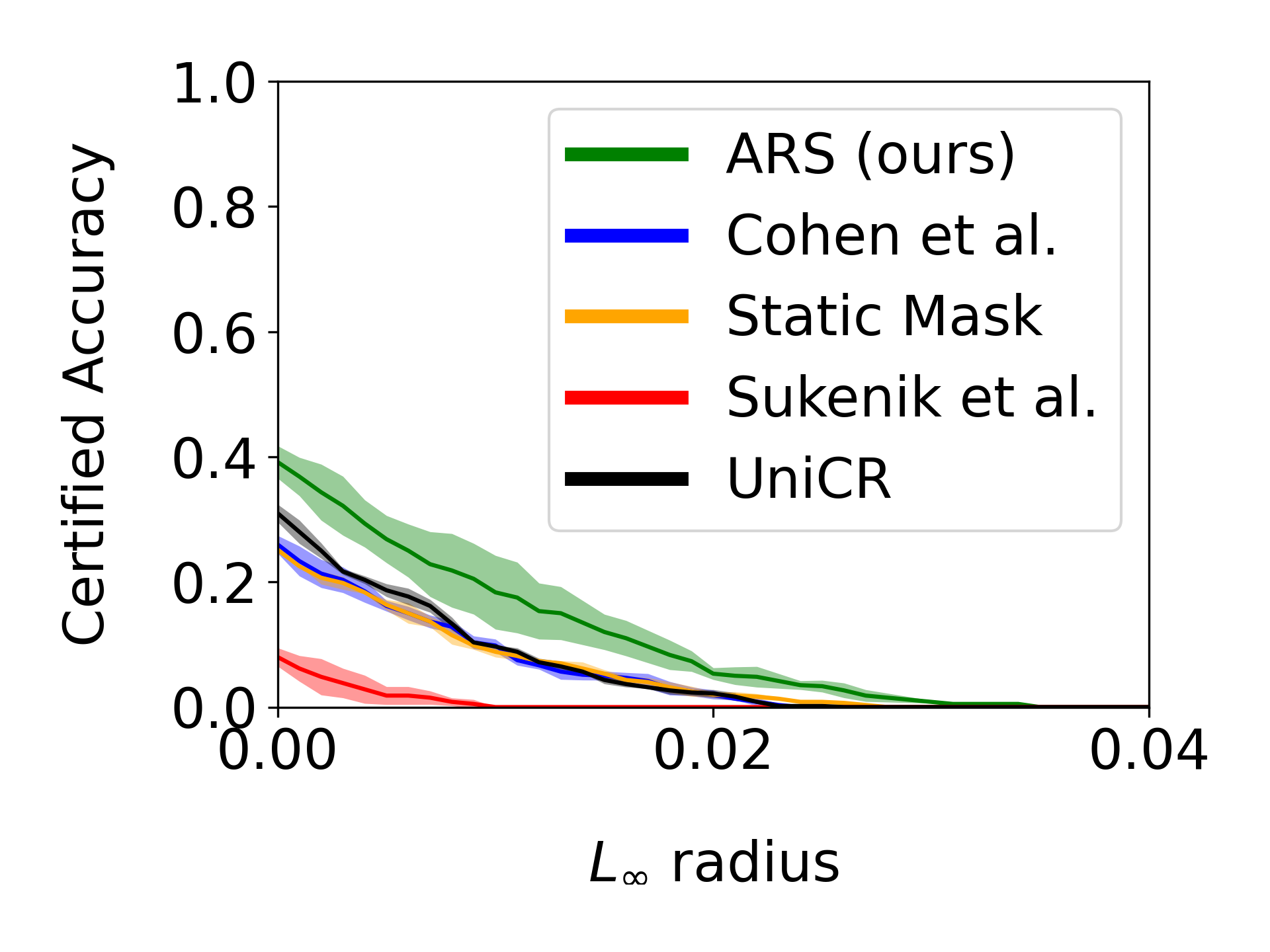}
\label{fig:20kBG-k_96-edges-sigma1.5}
}
\vspace{-0.1cm}
\caption{\footnotesize%
\textbf{$k = 96$ certified test accuracy results for CIFAR-10 (20kBG)}
(a)-(f) show the effect of increasing $\sigma$. These results are in our 20kBG setting where a CIFAR-10 image is placed randomly along the edges of a background image. Each line is the mean and the shaded interval covers +/- one standard deviation across seeds.
}
\label{fig:cifar-10-benchmark-k-96-plots-appendix}
\end{figure*}

\subsection{Impact of Certification Test-set Size}
\label{appendix:cifar10-test-set-size}

The baseline results in \cref{tab:results-cifar10-20kbg-appendix} are lower than those reported by \citet{cohen2019certified} (Figure 6). This is a result of our certifying over a smaller subset of the test set (the released code from \citet{cohen2019certified}\footnote{\url{https://github.com/locuslab/smoothing/blob/78a4d949e4f627d000a78908e001f8ca66c92943/experiments.MD}} uses a subset of size 500, while the paper says the certification was on the full test set).
In \cref{tab:certification-size-study} we report the standard accuracy ($r=0$) on the same 500 samples subset of the CIFAR-10 test set as used in the code released by \citet{cohen2019certified}. We show results for $k=32$, which is the plain CIFAR-10 task, using both the hyper-parameters from \citet{cohen2019certified}, our own optimized hyper-parameters (the most notable change is that we use AdamW), and ARS.
We make three observations. First, as it turns out, our 200 samples subset used for results in \cref{tab:results-cifar10-20kbg-appendix} is more challenging, and accuracy values are systematically lower (by about 2$\%$ points) than over 500 samples and the full test set (10k samples). Using the same 500 samples subset yields accuracy values very close to those reported by \citet{cohen2019certified} on their hyper-parameters.
Second, over all samples sizes 200, 500, and 10k, our hyper-parameters significantly improve the accuracy of RS (by about $3\%$ points consistently), confirming that we are making a fair comparison between the best RS models we could find and ARS.
Third, at $k=32$ ARS provides modest improvements over tuned RS, as CIFAR-10 images are well cropped and low-dimensional, providing less opportunity for dimension reduction through masking, and hence lower ARS improvements.



\addtolength{\tabcolsep}{0.0em}
\renewcommand{\arraystretch}{1.1}
\begin{table}[t]
\centering
\scalebox{0.9}{
\begin{tabular}{|c|c|ccC{0.7in}|}
\hline
\multirow{2}{*}{$\sigma$} & \multirow{2}{*}{Samples} & \multicolumn{3}{c|}{Approach} \\\cline{3-5}
&& Cohen et al. & Cohen et al. w/ AdamW & \ars \\
\hline
\multirow{3}{*}{0.12} & 200 & 77.2 (2.7) & 79.0 (0.7) & 78.5 (0.7) \\
& 500 & 79.6 (2.1) & 82.1 (0.4) & 82.3 (1.1) \\
& 10k & 80.3 (0.6) & 82.7 (0.6) & 83.4 (0.5) \\
\hline
\multirow{3}{*}{0.25} & 200 & 70.7 (3.7) & 70.6 (1.0) & 72.6 (0.9) \\
& 500 & 72.4 (2.6) & 75.6 (0.9) & 75.9 (0.9) \\
& 10k & 73.7 (2.6) & 77.2 (0.2) & 77.7 (0.1) \\
\hline
\multirow{3}{*}{0.5} & 200 & 62.0 (0.4) & 63.6 (2.0) & 64.0 (1.4) \\
& 500 & 63.1 (0.7) & 65.2 (0.9) & 65.9 (0.7) \\
& 10k & 64.3 (0.5) & 66.1 (0.3) & 67.9 (0.3) \\
\hline
\multirow{3}{*}{1.0} & 200 & 45.0 (1.5) & 48.0 (0.7) & 49.3 (0.6) \\
& 500 & 45.9 (1.6) & 49.1 (0.4) & 50.5 (0.8) \\
& 10k & 46.8 (1.7) & 50.0 (0.5) & 51.5 (0.4) \\
\hline
\end{tabular}
}\vspace{2mm}
\caption{\footnotesize%
\textbf{$k=32$ standard accuracy ($r = 0$) on CIFAR-10 (20kBG).}
We report the mean accuracy and standard deviation over three seeds. We compare three approaches: \cite{cohen2019certified} trained with the hyper-parameters they report in \href{https://github.com/locuslab/smoothing/blob/78a4d949e4f627d000a78908e001f8ca66c92943/experiments.MD}{their GitHub repository}, \cite{cohen2019certified} trained with the hyper-parameters we report in \ref{tab:hyper}, and \ars. We train models with each approach at $\sigma=0.12, 0.25, 0.5, 1.0$. The corresponding models are certified on 200, 500, and 10,000 samples.}
\label{tab:certification-size-study}
\end{table}

\subsection{CIFAR-10 BG20k figures}
\label{appendix:cifar10-masks}

\cref{fig:2q-masks-k_32}, \cref{fig:2q-masks-k_48}, \cref{fig:2q-masks-k_64} and \cref{fig:2q-masks-k_96} shows figures of different stages in our \ars architecture (\cref{fig:main_architecture}). In all of these figures, $1^{st}$ row corresponds to input images $X$, $2^{nd}$ row corresponds to images right after $\mathcal{M}_1$, $3^{rd}$ row corresponds to \ars masks, $4^{th}$ row corresponds to element-wise product of input $X$ and \ars masks, $5^{th}$ row corresponds to images right after $\mathcal{M}_2$ and $6^{th}$ row corresponds to images right after averaging $\mathcal{M}_1, \mathcal{M}_2$.

\begin{figure*}[t!]
\vspace{-1em}
\centering
\scalebox{1.1}{
\subfigure[$\sigma = 0.25$]{\includegraphics[width=.45\linewidth]{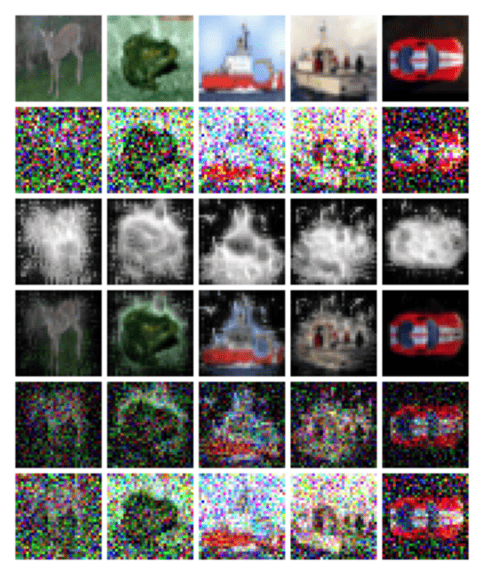}
\label{Fig:k_32_total_sigma_0.25_seed_3_masks}
}
\hfill
\subfigure[$\sigma = 1.0$]{\includegraphics[width=.45\linewidth]{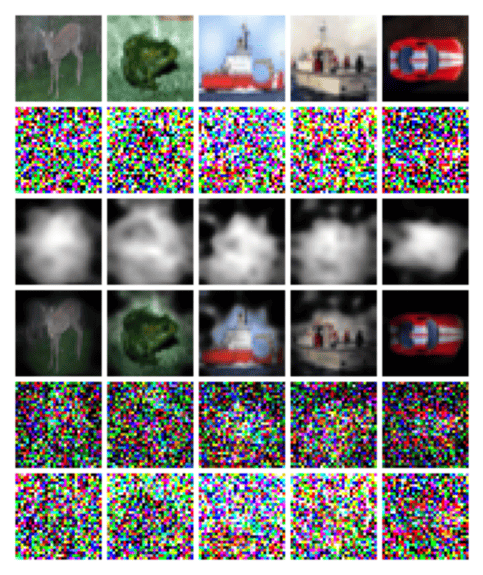}
\label{Fig:k_32_total_sigma_1.0_seed_3_masks}
}
}
\vspace{-.3cm}
\caption{\footnotesize
\textbf{Figures at different stages in our \ars architecture for CIFAR-10 $k=32$ input images. Check \cref{appendix:cifar10-masks} for detailed information about each row}}
\label{fig:2q-masks-k_32}
\end{figure*}

\begin{figure*}[t!]
\vspace{-1em}
\centering
\scalebox{1.1}{
\subfigure[$\sigma = 0.25$]{\includegraphics[width=.45\linewidth]{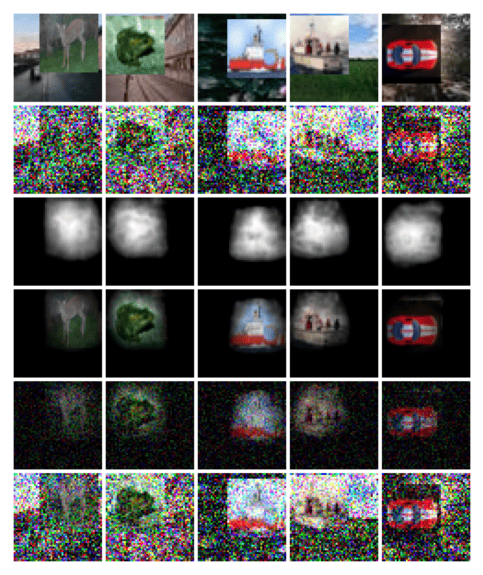}
\label{Fig:k_48_total_sigma_0.25_seed_3_masks}
}
\hfill
\subfigure[$\sigma = 1.0$]{\includegraphics[width=.45\linewidth]{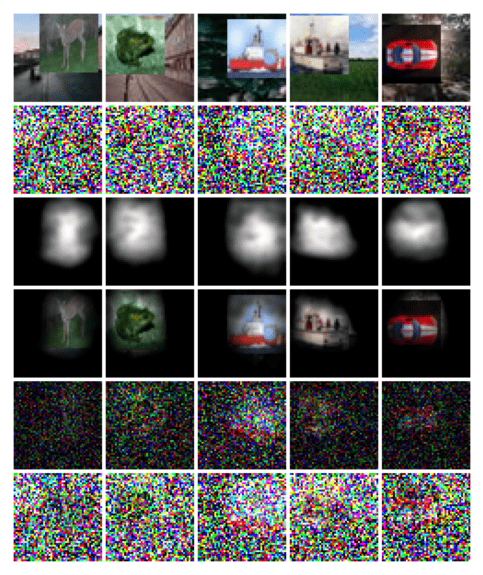}
\label{Fig:k_48_total_sigma_1.0_seed_3_masks}
}
}
\vspace{-.3cm}
\caption{\footnotesize
\textbf{Figures at different stages in our \ars architecture for CIFAR-10 BG20k $k=48$ input images. Check \cref{appendix:cifar10-masks} for detailed information about each row}
}
\label{fig:2q-masks-k_48}
\end{figure*}

\begin{figure*}[t!]
\vspace{-1em}
\centering
\scalebox{1.1}{
\subfigure[$\sigma = 0.25$]{\includegraphics[width=.45\linewidth]{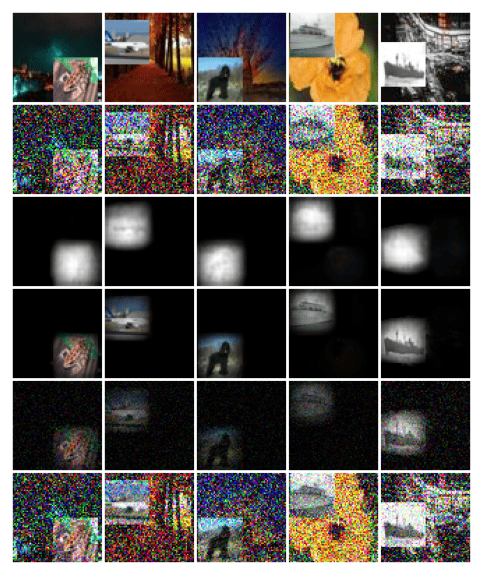}
\label{Fig:k_64_total_sigma_0.25_seed_3_masks}
}
\hfill
\subfigure[$\sigma = 1.0$]{\includegraphics[width=.45\linewidth]{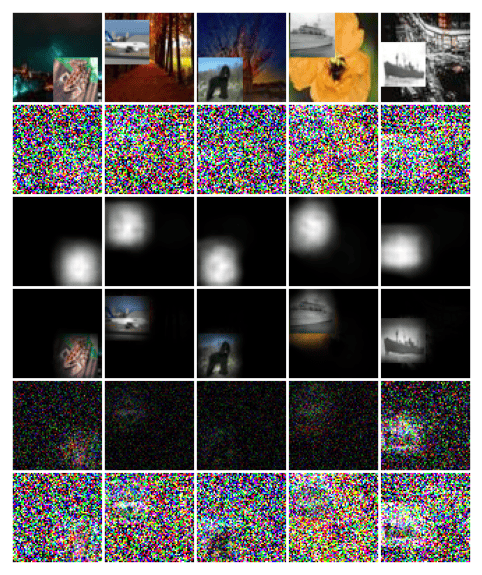}
\label{Fig:k_64_total_sigma_1.0_seed_3_masks}
}
}
\vspace{-.3cm}
\caption{\footnotesize
\textbf{Figures at different stages in our \ars architecture for CIFAR-10 BG20k $k=64$ input images. Check \cref{appendix:cifar10-masks} for detailed information about each row}
}
\label{fig:2q-masks-k_64}
\end{figure*}

\begin{figure*}[t!]
\vspace{-1em}
\centering
\scalebox{1.1}{
\subfigure[$\sigma = 0.25$]{\includegraphics[width=.45\linewidth]{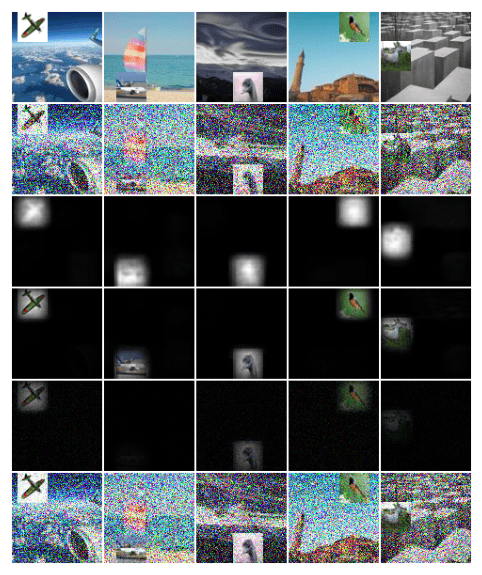}
\label{Fig:k_96_total_sigma_0.25_seed_3_masks}
}
\hfill
\subfigure[$\sigma = 1.0$]{\includegraphics[width=.45\linewidth]{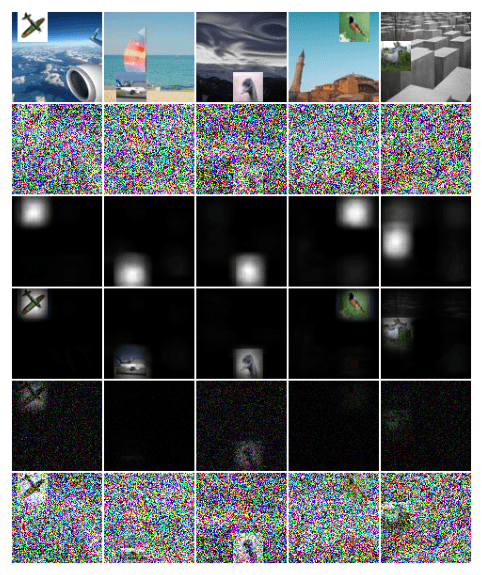}
\label{Fig:k_96_total_sigma_1.0_seed_3_masks}
}
}
\vspace{-.3cm}
\caption{\footnotesize
\textbf{Figures at different stages in our \ars architecture for CIFAR-10 BG20k $k=96$ input images. Check \cref{appendix:cifar10-masks} for detailed information about each row}
}
\label{fig:2q-masks-k_96}
\end{figure*}

\section{Additional Results on CelebA}
\label{appendix:celeba}

\begin{figure*}[ht]
    \centering
    \includegraphics[width=\textwidth]{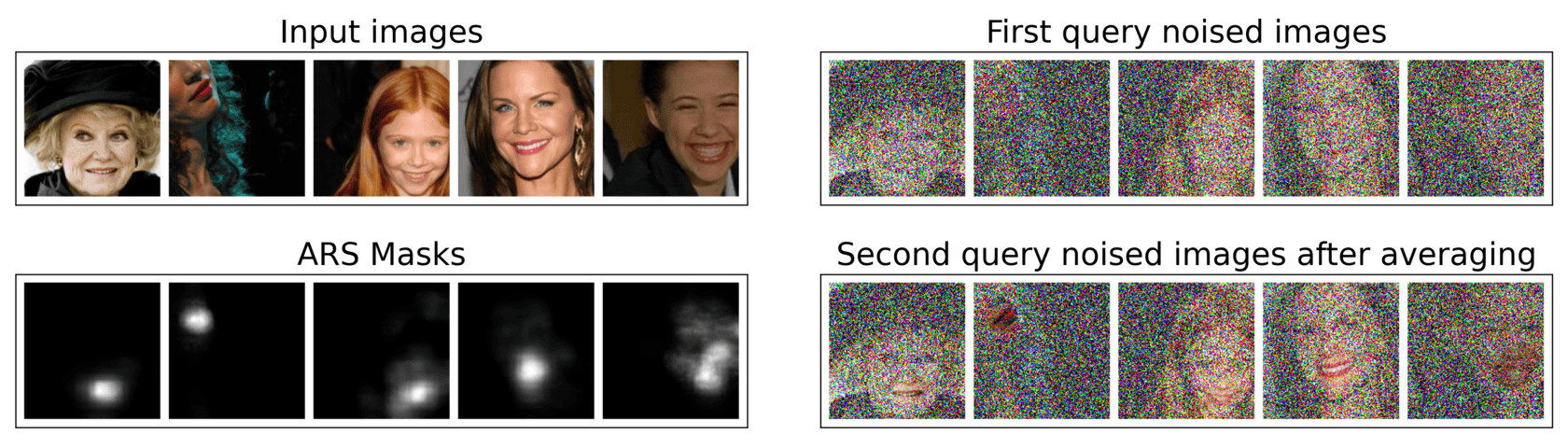}
    \caption{{\footnotesize The localized ARS masks produce un-noised mouth regions after averaging.}}
    \label{fig:celeba-query-collage-appen}
\end{figure*}

\cref{fig:celeba-query-collage-appen} shows how adaptive masking reduces the noise around areas that are important to classification. The images follow our architecture visualized \cref{fig:main_architecture}. The mask model is provided the first query noised images as input. The learned masks, presented in the bottom left, are sparse and highly concentrated around the area of interest---the mouth area. The second query noised images (after weighted average) use the mask to clearly reduce the noise around the mouth. This large noise reduction enables ARS to outperform static masking and \cite{cohen2019certified}, as shown on \cref{fig:celeba-results}.

\begin{wraptable}{r}{0.6\textwidth}
\vspace{-.45cm}
\centering
\begin{tabular}{|c|ccc|}
\hline
$\sigma$ & Cohen et al. & Static Mask & \ars \\ 
\hline
$0.12$ & 94.7 (0.5) & 95.7 (1.2) & \textbf{96.3 (0.9)} \\
$0.25$ & 94.3 (0.5) & 93.0 (0.8) & \textbf{97.0 (0.8)} \\
$0.5$ & 91.0 (0.8) & 91.7 (0.9) & \textbf{94.3 (0.9)} \\
$0.75$ & 88.7 (1.2) & 89.0 (1.4) & \textbf{92.3 (0.9)} \\
$1.0$ & 83.3 (0.9) & 84.7 (2.6) & \textbf{91.0 (1.6)} \\
$1.5$ & 77.7 (2.5) & 74.3 (1.7) & \textbf{81.0 (2.2)} \\
\hline
\end{tabular}
\vspace{2mm}
\caption{\footnotesize%
\textbf{Standard Accuracy ($r = 0$) on CelebA (unaligned and cropped).}
We report the mean accuracy and standard deviation over three seeds.
}
\label{tab:results-celeba-appen}
\end{wraptable}

\vspace{.5cm}

We show here a full set of results for the CelebA benchmark over all $\sigma = {0.12, 0.25, 0.5, 0.75, 1.0, 1.5}$ for \cite{cohen2019certified}, static masking, and \ars. Similarly to the results in \cref{tab:results-celeba}, we report the mean standard accuracy and standard deviation over three seeds in \cref{tab:results-celeba-appen}. The certified accuracies are plotted in \cref{fig:celeba-full-appen}. We see that at all $\sigma$, \ars has the highest standard accuracy.

\begin{figure}[ht]
\vspace{-.3cm}
\centering
\subfigure[$\sigma = 0.12$ ]
{\includegraphics[width=0.325\textwidth]{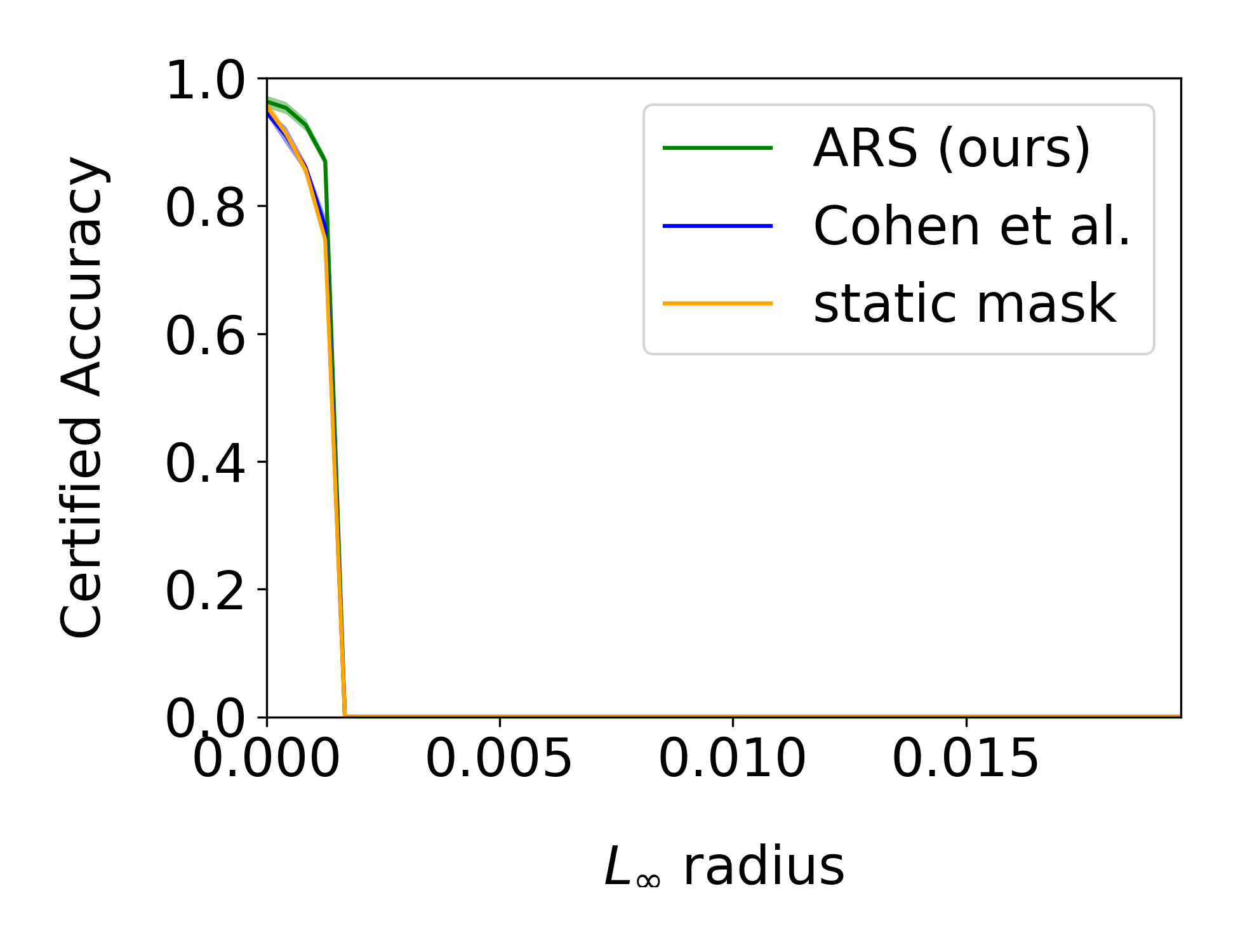}
}
\subfigure[$\sigma = 0.25$]
{\includegraphics[width=0.3\textwidth]{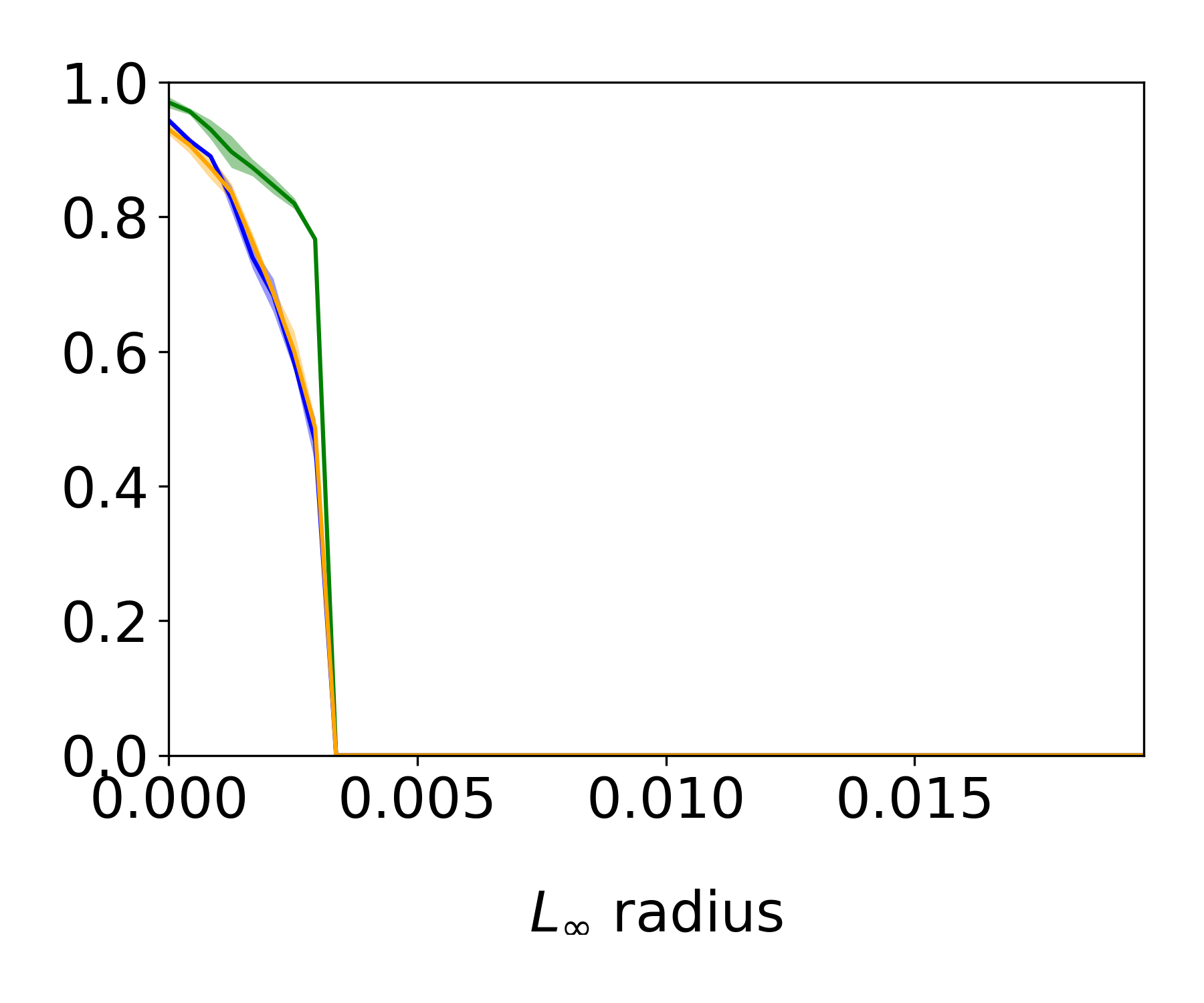}
}
\subfigure[$\sigma = 0.5$]
{\includegraphics[width=0.3\textwidth]{images/plots/celeba/sigma0.5.png}
}

\vspace{-.2cm}

\subfigure[$\sigma = 0.75$]
{\includegraphics[width=0.325\textwidth]{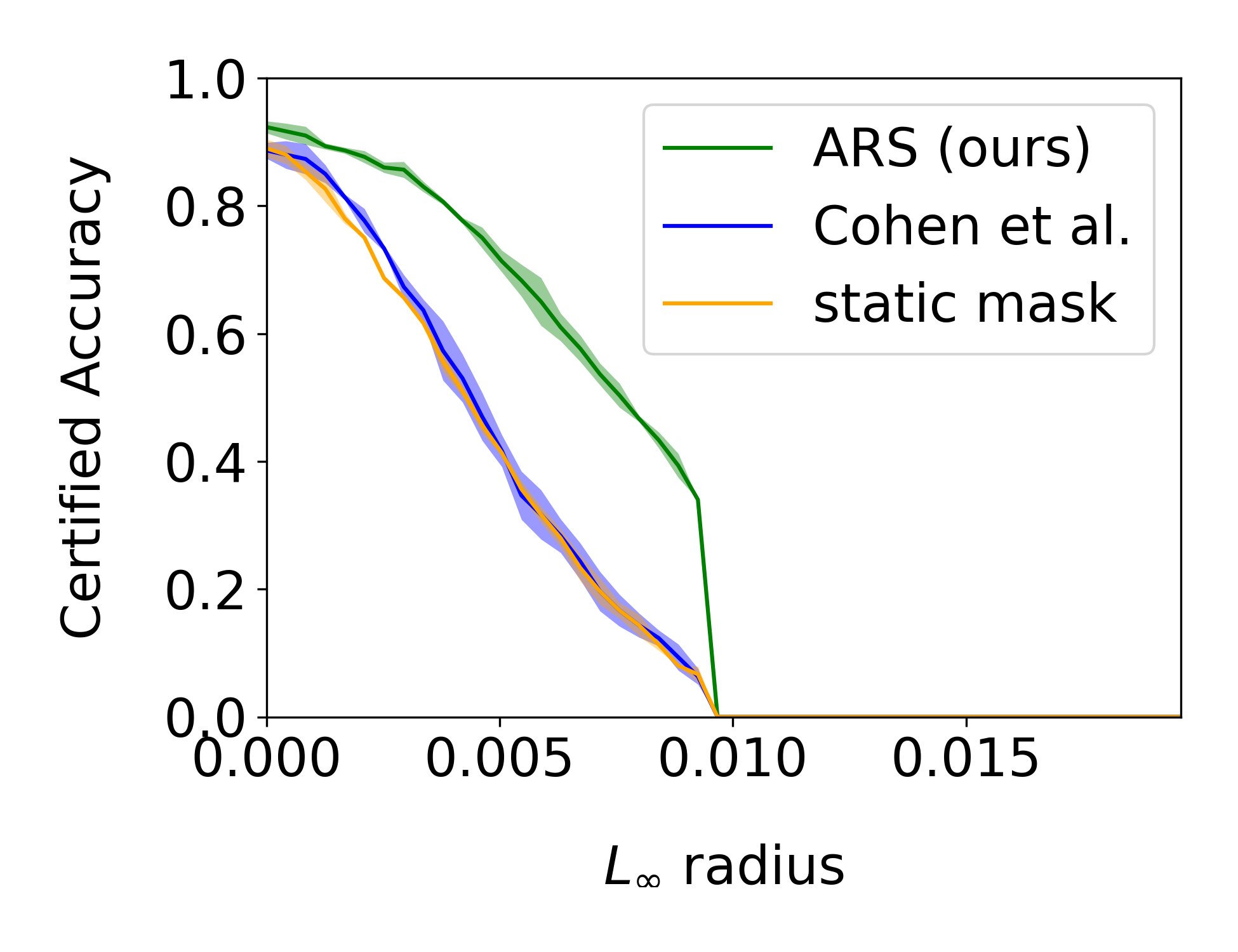}
}
\subfigure[$\sigma = 1.0$]
{\includegraphics[width=0.3\textwidth]{images/plots/celeba/sigma1.0.png}
}
\subfigure[$\sigma = 1.5$]
{\includegraphics[width=0.3\textwidth]{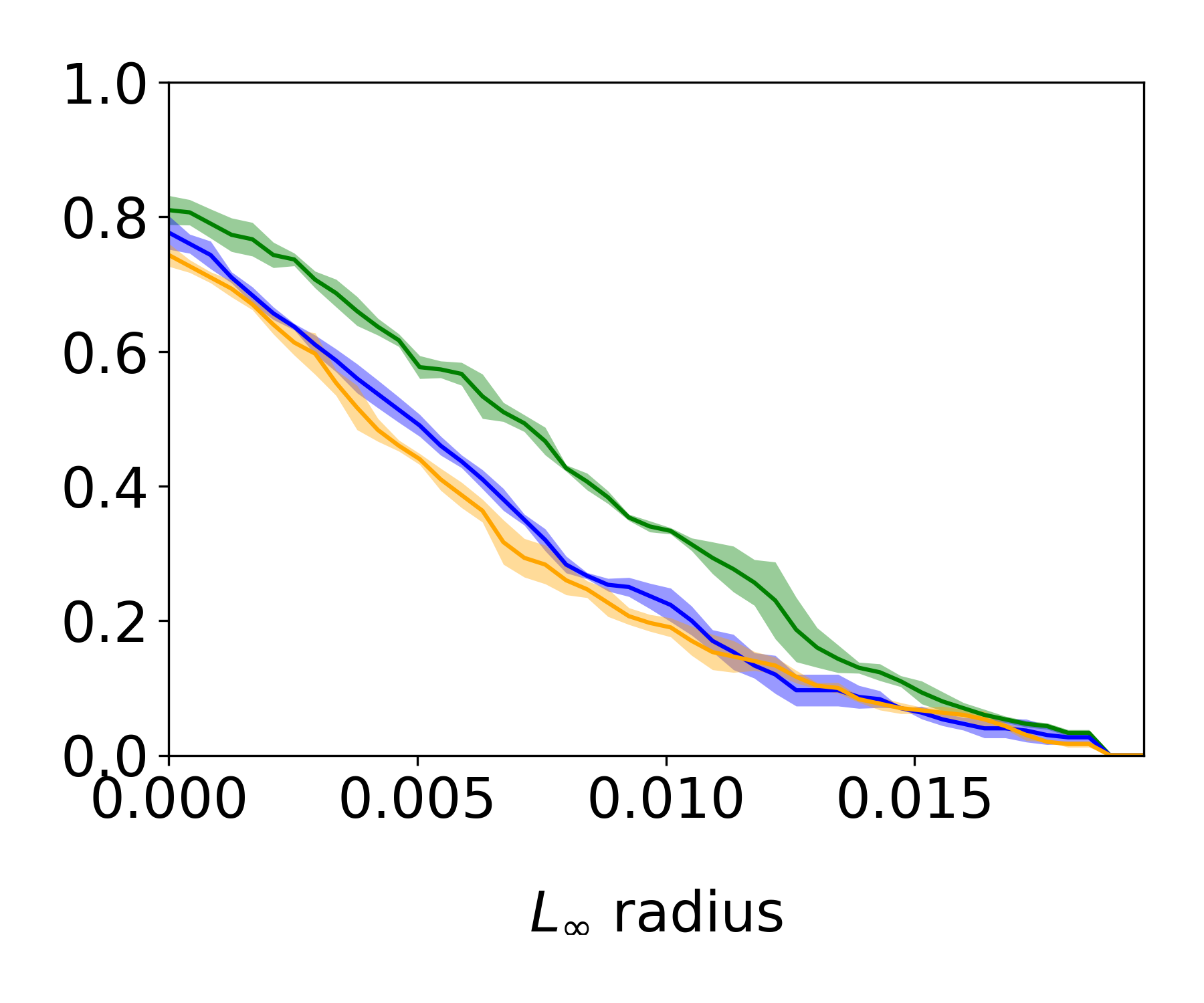}
}
\vspace{-.1cm}
\caption{\footnotesize%
\textbf{Certified test accuracy on CelebA (unaligned and cropped).}
Each line is the mean and the shading covers $\pm1$ standard deviation across three seeds.
Adaptivity helps at all noise levels.}
\label{fig:celeba-full-appen}
\end{figure}

\section{Additional Results on ImageNet}
\label{appendix:imagenet}

\begin{figure*}[ht]
\vspace{-.5cm}
    \centering
    \includegraphics[width=\textwidth]{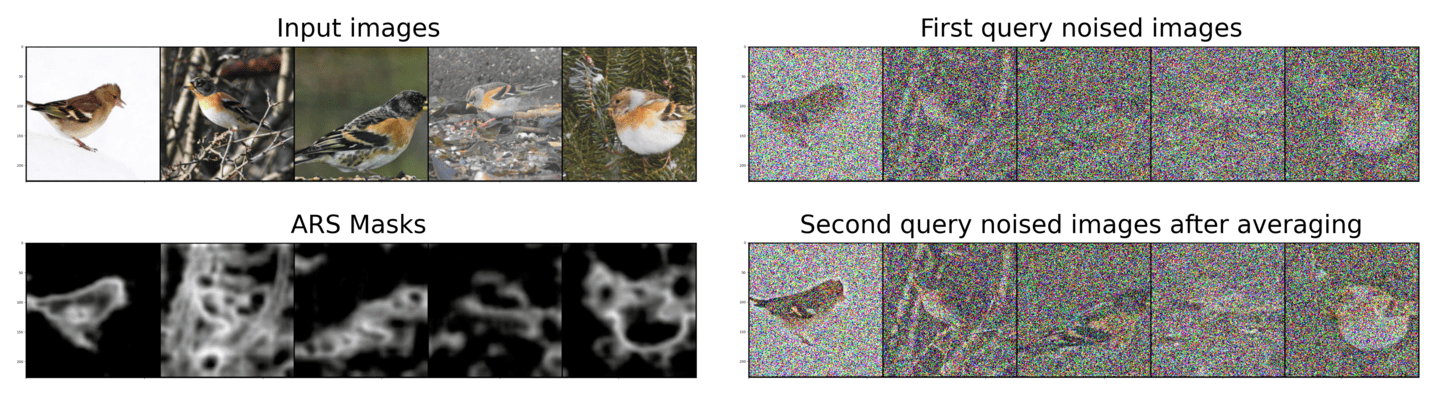}
    \vspace{-.3cm}
    \caption{{\footnotesize The localized ARS masks produce un-noised object regions after averaging. For $\sigma=1$.}}
    \label{fig:imagenet-sig1-new}
\end{figure*}

\begin{figure*}[ht]
\vspace{-.5cm}
    \centering
    \includegraphics[width=\textwidth]{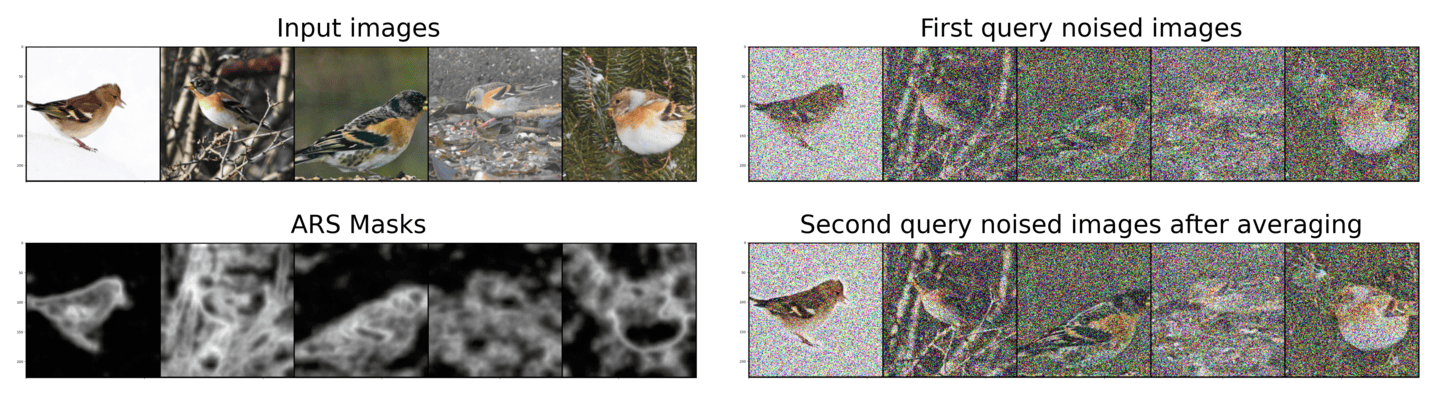}
    \vspace{-.3cm}
    \caption{{\footnotesize For $\sigma=0.5$.}}
    \label{fig:imagenet-sig05-new}
\end{figure*}

\begin{figure*}[ht]
\vspace{-.5cm}
    \centering
    \includegraphics[width=\textwidth]{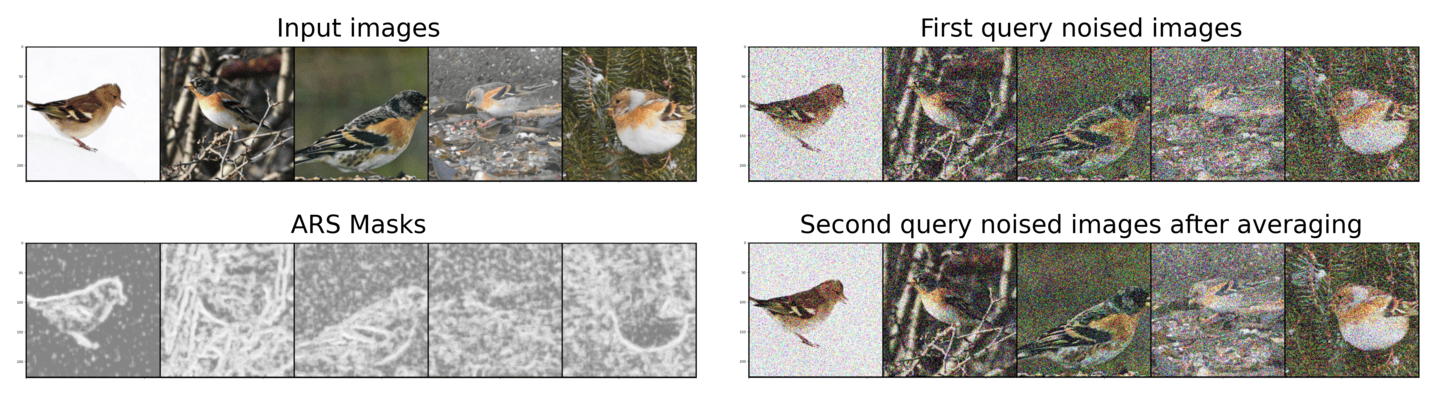}
    \vspace{-.3cm}
    \caption{{\footnotesize For $\sigma=0.25$.}}
    \label{fig:imagenet-sig025-new}
\end{figure*}

\vspace{-.3cm}

\cref{fig:imagenet-sig1-new,fig:imagenet-sig05-new,fig:imagenet-sig025-new} show how adaptive masking reduces the noise around areas that are important to classification for ImageNet. The images follow our architecture visualized \cref{fig:main_architecture}. The mask model is provided the first query noised images as input. The learned masks, presented in the bottom left, are sparse and concentrated around the area of interest (the bird, or any labelled object). The second query noised images after weighted average use the mask to clearly reduce the noise around the bird. 

\begin{table}[ht!]
\centering
\scalebox{0.9}{
\begin{tabular}{l*{6}{c}r}
\hline
Setting/Approach  & Cohen et al. & ARS (Pretrain) & \ars (End-To-End) \\
\hline
ImageNet, $\sigma=0.25$ & 68.5 (0.1) & 69.5 (0.0) & \textbf{70.1 (0.2)} \\
ImageNet, $\sigma=0.5$  & 60.7 (0.5) & 63.2 (0.1)  & \textbf{64.2 (0.2)} \\
ImageNet, $\sigma=1.0$  & 47.9 (0.1) & 52.1 (0.0)  & \textbf{53.8 (0.3)} \\
\hline
\end{tabular}}
\caption{\footnotesize%
\textbf{Test accuracy without certification on ImageNet}.
}
\label{tab:results-imagenet-test-accuracy}
\end{table}

An interesting observation is that on ImageNet, \ars yields larger improvements to the test accuracy without certification.
We follow the certification procedure of \citet{cohen2019certified}, which first determines the predicted class, and then certifies it if and only if the probability of this predicted class is such that $\underline{p_+} \geq 0.5$ (that is, it groups all other classes into one non-predicted class). If this is not the case, the prediction will count as not certified at $r=0$, even if the predicted class is still correct. We keep this procedure for consistency with prior work, but on a task with a large number of classes like in ImageNet, the accuracy at $r=0$ can be much lower as the non-certified accuracy.

We noticed that \ars significantly improves this non certified accuracy, while not improving the accuracy at $r=0$ as much (see \cref{subsec:imagenet}). In effect \ars leads to more correct predictions, but this (correct) predicted class has $\underline{p_+} < 0.5$, so the accuracy at $r=0$ does not increase.
\cref{tab:results-imagenet-test-accuracy} shows this effect by comparing the test accuracy (no certification at all) across all methods.
Under \ars, the test accuracy increases from $68.5\%$ to $70.1\%$ when $\sigma=0.25$, from $60.7\%$ to $64.2\%$ when $\sigma=0.5$, and from $47.9\%$to $53.8\%$ for $\sigma=1.0$. In summary, \ars achieves best test accuracy (without certification) for all the noise levels.
This suggests that \ars helps more than shown by the default certification approach, and that a finer analysis that accounts for the probability of all classes could yield further improvements in certified accuracy.

\end{document}